\newtheorem{proposition}[theorem]{Proposition}
\newtheorem{lemma}[theorem]{Lemma}
\newtheorem{notation}{Notation}
\newcommand{\dotprod}[2]{\left\langle #1,#2 \right\rangle}
\definecolor{PineGreen}{HTML}{008B72}
  \providecommand{\R}{\mathbb{R}} %
  \DeclareMathOperator{\E}{{\mathbb E}}
  \DeclareMathOperator*{\argmin}{arg\,min}
  \DeclareMathOperator*{\argmax}{arg\,max}
  \providecommand{\cZ}{\mathcal{Z}}
\newcommand{\circledOne}{\text{\ding{172}}}
\newcommand{\circledTwo}{\text{\ding{173}}}
\newcommand{\circledThree}{\text{\ding{174}}}
\newcommand{\circledFour}{\text{\ding{175}}}
\def\<#1,#2>{\langle #1,#2\rangle}
\providecommand{\mycomment}[3]{\todo[inline, caption={},size=footnotesize,color=#1!20]{\textbf{#2: }#3}}%
\newcommand\commenter[2]%
\newcommand\csname #1\endcsname[1]{\mycomment{#2}{#1}{##1}}
\begin{document}
\begin{mainpart}

\section{Introduction}
\label{sec:intro}
    Deep Learning (DL) has long been centered around the  empirical risk minimization (ERM) problem:
    \begin{equation}
    \label{eq:emp_risk}
        \min_{\theta \in \mathbb{R}^d} \left\{\frac{1}{n} \sum_{i=1}^n f_i(\theta)  + \frac{\tau}{2}\|\theta\|_2^2 \right\},
    \end{equation}
    where $\theta$ are the parameters of the DL model, $f_i(\theta)$ is the loss function on the $i$-th element $(\mathbf{x}_i, \mathbf{y}_i) \in \mathbf{X} \times \mathbf{Y}$ of the training data, $n$ is the number of training samples and $\frac{\tau}{2} \|\theta\|^2_2$ is a regularization term used to avoid overfitting \citep{ying2019overview}. The standard ERM framework, and the optimizers designed for it like SGD and Adam \citep{kingma2014adam}, implicitly assume that all training samples are of equal importance. However, this assumption rarely holds in real-world applications, which are often characterized by significant data heterogeneity. Datasets may suffer from severe class imbalance or be composed of data from different sources with varying distributions. In these common scenarios, treating all samples equally can lead to models with suboptimal performance and poor generalization.

    A principled approach to address this challenge is Distributionally Robust Optimization (DRO) \citep{delage2010distributionally, lin2022distributionally, wiesemann2024distributionally}. Instead of minimizing loss over a fixed, uniform data distribution, DRO seeks to optimize the model performance against a ''worst-case`` distribution. While DRO is a broad area \citep{wiesemann2024distributionally}, one of the common formulations of this idea leads to the following minimax problem:
    \begin{equation}
    \label{eq:dro_objective}
        \min_{\theta \in \mathbb{R}^d}  \left\{L_{DRO}(\theta) := \max_{\pi \in \Delta_{n-1} \cap U } \sum_{i=1}^n \pi_i f_i(\theta)  + \frac{\tau}{2}\|\theta\|_2^2 \right\},
    \end{equation}
    where $U$ is an uncertainty set, i.e. constraint on $\pi$. For example, one can use KL-divergence ball to prevent significant deviations from some prior distribution $\hat{\pi}$: $U = \left\{\pi \in \Delta_{n-1}: \text{KL} \left[\mathbf{\pi} \| \hat{\mathbf{\pi}} \right] \leq r\right\} $. Although DRO has successful applications in specific DL fields \citep{lotidis2023wasserstein, kallus2022doubly, liu2022distributionally, blanchet2020semi}, we identify several challenges in applying existing methods for general DL:

    \begin{itemize}
        \item \textbf{Lack of adaptive $\theta$ update}. Most general DRO methods either use simple SGD updates \citep{carmon2022distributionally} or apply Variance Reduction (VR) techniques \citep{mehta2024drago, mehta2023stochastic, levy2020large, qi2021online}, while the most successful DL optimizers are adaptive \citep{kingma2014adam, choi2019empirical}. 
        \item \textbf{Gap Between Theory and Practice}. Despite the success of the existing DRO methods in the convex domain (e.g. logistic regression)  \citep{mehta2024drago, mehta2023stochastic, levy2020large}, neural networks are inherently non-convex, presenting additional challenges. Several attempts have been made to develop DRO methods specifically for Deep Learning, but they are either heuristic \citep{liu2021just, sagawa2019distributionally}, or pose instability and overfitting risks \citep{qi2021online}.
        \item \textbf{Challenges with Batching and Grouping.} For practical needs one often wants to assign weights to samples based on their specific properties such as class \citep{he2009learning, lin2017deep} or worker identification \citep{mohri2019agnostic}, rather than assigning unique weights to individual objects. The problem \eqref{eq:dro_objective} can deal with this requirement if one uses $i$ as group id, not object, i.e. $f_i$ is the total loss of the group (and $n$ is number of groups). However, most DRO methods assume that $f_i$ is deterministic, which is impractical for group-based weighting in cases the presence of large groups, since calculating the full $f_i$ requires a pass over the entire group. Additionally, the requirement of full $f_i$ computation complicates algorithm integration into the standard DL training pipelines with batching.
    \end{itemize}

    This paper aims to bridge this critical gap by introducing \texttt{ALSO} -- \texttt{A}daptive \texttt{L}oss \texttt{S}caling \texttt{O}ptimizer -- an optimizer designed to align DRO with the needs of practical Deep Learning. \texttt{ALSO} is designed from the ground up to be practical: it employs an adaptive, Adam-like step for the model parameters, deals with stochastic updates for both $\theta$ and $\pi$ (allowing standard batching during training with group-based $\pi$), and effectively solves the distribution-finding subproblem for the group weights. We provide a rigorous theoretical analysis, proving \texttt{ALSO}'s convergence for non-convex objectives typical in Deep Learning.

    The key contributions of this work are:
    \begin{itemize}
        \item \textbf{Deep Learning DRO optimizer.} We present \texttt{ALSO} -- a novel algorithm designed to solve the problem \eqref{eq:adv_pi_problem} in Deep Learning contexts (see Algorithm \ref{algorithm:also_optimistic}).
        \item  \textbf{Theory.} We establish a convergence of \texttt{ALSO} in the stochastic, non-convex, $L$-smooth case.
        \item \textbf{Experiments}. We experimentally demonstrate that \texttt{ALSO} outperforms classical DL approaches and DRO algorithms in a diverse set of DL tasks characterized by data heterogeneity: learning from unbalanced data, tabular DL, robust training against adversarial attacks, distributed training with gradient compression, and split learning. Our code is available at \url{https://github.com/brain-lab-research/ALSO}.
    \end{itemize}
    
\section{Background}
    
    Distributionally Robust Optimization has emerged as a powerful framework for decision-making under uncertainty \citep{delage2010distributionally, lin2022distributionally, wiesemann2024distributionally}. DRO has successful applications in separate DL fields such as Reinforcement Learning \citep{lotidis2023wasserstein, kallus2022doubly, liu2022distributionally}, Semi-Supervised Learning \citep{blanchet2020semi}, Sparse Neural Network training \citep{sapkota2023distributionally}. However, none of these methods are for general-purpose use. 
    
    Most general DRO methods use simple SGD updates \citep{carmon2022distributionally} or apply
    Variance Reduction (VR) techniques \citep{mehta2024drago, mehta2023stochastic, levy2020large}. The main goal of such methods is to reduce the complexity of the optimization process for convex functions and to have a step cost independent of data size. Despite their success in the convex domain, neural networks are inherently non-convex, presenting additional challenges. VR methods are usually ineffective in DL \citep{defazio2019ineffectiveness}, because of data augmentation and batch normalization, which disrupt finite-sum structure. However, recently proposed \texttt{MARS} optimizer \citep{yuan2024mars} achieves good performance in language modeling tasks -- the field, in which none of the techniques mentioned above are used. Additionally, \texttt{MARS} utilizes STORM \citep{cutkosky2019momentum} which is closer to \texttt{ALSO} negative momentum (see Algorithm \ref{algorithm:also_optimistic}), rather than to classical VR methods like SAGA \citep{defazio2014saga} or SVRG \citep{johnson2013accelerating} used in most DRO methods. Furthermore, SAGA based methods like state-of-the-art DRO methods \citep{mehta2024drago} require storing a table of size $n \times d$ -- a significant limitation for large Deep Learning models with millions of parameters. It is also important to note that these methods heavily depend on deterministic $f_i$, i.e. require either assigning unique weights to individual objects or the loss computation for the whole group, and usually have limited experimental validation in neural network training scenarios.
    
    From another perspective, several attempts have been made to develop DRO methods specifically for Deep Learning. For instance, in \citep{liu2021just} the authors propose a heuristic algorithm without theoretical guarantees that requires two separate training phases to produce the final model. An alternative approach is presented in \citep{sagawa2019distributionally}, where the authors propose an algorithm with convergence guarantees for the convex case and apply it to neural network training. However, this work implements a simple gradient step for $\theta$ update, while the most successful DL optimizers are adaptive \citep{kingma2014adam, choi2019empirical}. Another approach is proposed by \citep{qi2021online}, where authors address the non-convex scenario. They solve the inner maximization problem exactly, resulting in the following formulation:
    \begin{equation}
    \label{eq:exp_problem}
        \min_{\theta \in \mathbb{R}^d} \left\{ \frac{1}{n} \sum_{i=1}^n \exp\left[ \tau^{-1} f_i (\theta) \right] + \frac{\tau}{2} \| \theta \|^2_2 \right\}.
    \end{equation}
    While reformulation \eqref{eq:exp_problem} eliminates the need to store and update $\pi$, it has several important limitations. First, since the problem \eqref{eq:exp_problem} involves expressions of the form $\exp[\tau^{-1} \cdot]$, small values of $\tau$ can lead to extremely large values that may be computationally intractable. Second, modern deep neural network training methods are iterative, with initial weight vectors $\theta^{k}$ typically far from optimal. However, in \eqref{eq:exp_problem}, we immediately compute the optimal vector $\pi^*$, which can be problematic in early training stages when higher errors on some samples may simply reflect undertraining rather than inherent difficulty. Furthermore, using the exact value of $\pi^*$ may lead to overfitting to outliers in the training set, despite the regularization term in the problem \eqref{eq:adv_pi_problem}. Finally, the approach in \eqref{eq:exp_problem} fundamentally assumes that each data point has its own weight. When multiple objects share a weight, $f_i$ becomes the sum of losses for these objects. This constraint limits batching strategies (if we use a subset of objects with the same weight to compute the stochastic gradient of \eqref{eq:exp_problem}, we obtain a biased estimation of gradient), making the proposed approach less practical for Deep Learning applications.
\section{Problem Statement}
    As discussed in the introduction, it is a common requirement to assign the same weight to several samples based on their properties such as class \citep{he2009learning, lin2017deep} or worker identification \citep{mohri2019agnostic}. The straightforward idea is to retain the problem \eqref{eq:dro_objective}, but use $f_i$ as the mean loss on the objects of the group $i$. However, this objective hides the structure of the problem (i.e. $f_i$ is the sum), resulting in methods that require deterministic $f_i$ and implies that we need to compute the whole $f_i$ to make step, which aligns poorly with model training pipeline, where we use mini-batches to make a step (i.e. the whole $f_i$ is unavailable). To make this structure more precise, we use the following modified objective: 
    \begin{equation}
        \label{eq:adv_pi_problem}
        \begin{split}
            \min_{\theta \in \mathbb{R}^d} \max_{\mathbf{\pi} \in \Delta_{c-1} \cap U} \left\{ h(\theta, \pi) := \sum_{i=1}^c \pi_i \left(\frac{c}{n}\sum_{j=1}^{n_i}f_{i, j}(\theta)\right) 
            + \frac{\tau}{2}\|\theta\|^2_2
            - \lambda \text{KL} \left[\mathbf{\pi} \| \hat{\mathbf{\pi}} \right] \right\} .
        \end{split}
    \end{equation}
    In the problem \eqref{eq:adv_pi_problem} weights $\pi_i$ are assigned to each of $c$ object groups. The group $i$ contains $n_i$ samples with loss functions $f_{i,j}$, $j \in \overline{1, n_i}$. These groups can be built based on sample class, worker ID, or each sample can compose its own group ($c = n$, $n_i=1$ $\forall i$ in such scenario), making problems \eqref{eq:adv_pi_problem} and \eqref{eq:dro_objective} almost equal. To additionally restrict deviation from the starting distribution, we use a regularization technique, where $\lambda > 0$ is the regularization parameter. The KL-divergence term $\text{KL}\left[\pi \| \hat\pi \right]$ is introduced specifically to avoid degenerate solutions in which $\pi$ collapses onto a single group. Compared to the Euclidean norm, KL-divergence is a more natural choice for probability distributions on the simplex: it better respects the underlying geometry and penalizes shifts in high-probability components more strongly, thereby stabilizing the updates. As a baseline, we can set $\hat{\mathbf{\pi}} = \mathcal{U}\left(\overline{1, c}\right) \in \Delta_{c-1}$ as the uniform discrete distribution; however, sometimes it is better to define it in a different way (for such example see Subsection \ref{sec:unbalanced_data}).  It is worth highlighting that we choose constant multiplier $\frac{c}{n}$ so that if  we substitute $\pi = \hat{\mathbf{\pi}} = \mathcal{U}\left(\overline{1, c}\right)$ into \eqref{eq:adv_pi_problem}, the resulting equation is exactly the same as ERM.

\section{Optimistic Mirror-Prox}
\label{sec:optimistic_mirror_prox}

To begin with, we propose a non-adaptive algorithm for the problem \eqref{eq:adv_pi_problem} (with $c=n$ and $U=\Delta_{n-1}$, then $j=1$ for all $f_{ij}$). Although this case is not the main focus of this paper, it is an important intermediate result, as it provides the foundation for developing our main algorithm by transferring this approach to the non-convex case with added adaptivity. The development of our algorithm is motivated by the evolution of optimization methods for saddle point problems \eqref{eq:min_max}. 
\begin{equation*}
    \label{eq:min_max}
    \min_{x\in \mathcal{X}} \max_{y \in \mathcal{Y}} g(x, y)
\end{equation*}
The easiest option to obtain methods for saddle point problems is to adapt gradient schemes from minimization tasks.  In this way, it is possible to obtain, for example, the Stochastic Gradient Descent Ascent (SGDA) method. However, this scheme is inadequate both from the theoretical perspective and in terms of its application to the simplest problems of the form \eqref{eq:min_max} \cite{goodfellow2016nips, beznosikov2023smooth}. Therefore, it is suggested to use more advanced algorithms such as Extragradient \cite{korpelevich1976extragradient}. For our non-Euclidean geometry, it makes sense to consider an appropriate modification of Extragradient -- Mirror-Prox \cite{juditsky2011solving}. However, both Extragradient and Mirror-Prox require two oracle calls per iteration. To address this, so-called Optimistic version of these algorithms can be applied \cite{popov1980modification}. It requires only one oracle call per iteration. 
It turns out that the Extragradient and Optimistic updates outperform SGDA not only in the theory, but also in deep learning, particularly in training GANs \cite{daskalakis2017training, gidel2018variational, mertikopoulos2018optimistic, chavdarova2019reducing, liang2019interaction, peng2020training}. 

We note that this algorithm is mostly auxiliary, even though it represents particular interest (see Discussion after Theorem \ref{theorem:optimistic_mirror_prox}).
\begin{algorithm}{Optimistic Mirror-Prox for \eqref{eq:adv_pi_problem} with $c=n$, $U=\Delta_{n-1}$}
   \label{algorithm:optimistic_mirror_prox}
\begin{algorithmic}[1]
    \State {\bf Parameters:} stepsize $\gamma$, momentum $\alpha$, number of iterations $N$.
    \State {\bf Initialization:} choose  $x^0 \in \mathbb{R}^d, \pi^0 \in \Delta_{n}$.
    \For{$k = 0, 1, 2, \dots, N$}
        \State $g^{k+1} = \sum_{i=1}^{n}\pi_{i}\nabla_\theta f_{i,1}(\theta^{k})$
        \State
        \text{$p^{k+1} = \sum_{i=1}^n-e_i\cdot\textstyle{f_{i,1}}(\theta^{k})$, where $e_i$ is vector with $1$ in $i$-th position and zeros in others
        }
        \State $\theta^{k+1} = \theta^{k} - \gamma_\theta \cdot \left((1 + \alpha)g^{k+1} - \alpha g^{k} + \tau \theta^{k}\right)$
        \State $\pi^{k+1} = \text{SM} \left[\log \pi^{k} -\gamma\left((1 + \alpha)p^{k+1} - \alpha p^{k} + \tau \log (\pi^{k} / \hat{\pi}))\right) \right]$
    \EndFor
\end{algorithmic}
\end{algorithm}

\textbf{Discussion.} Algorithm \ref{algorithm:optimistic_mirror_prox} follows a structured pipeline. It employs steps that closely resemble gradient descent for updating the parameter $\theta$, while utilizing Softmax (denoted as $\text{SM}$ in Algorithm~\ref{algorithm:optimistic_mirror_prox}) to update $\pi$, which corresponds to a Mirror Ascent step. To achieve theoretical guarantees, Optimistic Mirror-Prox introduces an additional hyperparameter $\alpha$ and implements a negative momentum technique, preventing steps from becoming too sharp.

Now, let us outline key Assumptions \eqref{as:lipgrad} and \eqref{as:convex} for the functions $f_i(\cdot)$ in the problem \eqref{eq:adv_pi_problem} required for convergence analysis.
\begin{assumption}
    \label{as:lipgrad}
    For all $(i,j)$ the functions $f_{i,j}$ from \eqref{eq:adv_pi_problem} are $K_{i,j}$-Lipschitz continuous and $L_{i,j}$-smooth on $\Theta$ with respect to the Euclidean norm $\| \cdot \|_2$ , i.e., for any $\theta^1, \theta^2 \in \Theta$ the following inequality holds:
    \begin{equation*}
    \begin{split}
        &\|\nabla f_{i,j}(\theta^1) - \nabla f_{i,j}(\theta^2)\| \leq L_{i,j} \|\theta^1 - \theta^2\|_2 ~\text{ and }~
        |f_{i,j}(\theta^1) - f_{i,j}(\theta^2)|_2 \leq K_{i,j} \|\theta^1 - \theta^2\|_2.
    \end{split}
    \end{equation*}
\end{assumption}
\begin{assumption}
    \label{as:convex}
    For all $(i,j)$ functions $f_{i,j}$ from \eqref{eq:adv_pi_problem} are convex on $\Theta$, i.e., for any $\theta^1, \theta^2 \in \Theta$ the following inequality holds 
    \begin{equation*}
        \langle \nabla f_{i,j}(\theta^1) - \nabla f_{i,j}(\theta^2), \theta^1 - \theta^2 \rangle \geq 0.
    \end{equation*}
\end{assumption}
Assumptions \ref{as:lipgrad} and \ref{as:convex} are classical in the analysis of the problem \eqref{eq:emp_risk} \citep{polak1971computational, amari1993backpropagation, nesterov2013introductory, veprikov2024new, solodkin2024methods}. Note that Assumption \ref{as:convex} is not required for our main algorithm.

\begin{theorem}
\label{theorem:optimistic_mirror_prox}
    Let Assumptions \ref{as:lipgrad} and \ref{as:convex} be satisfied and let
    $
        \Phi_k := \| \theta^k - \theta^* \|^2_2 + \text{KL} \left[\mathbf{\pi}^* \| \mathbf{\pi}^k\right].
    $
    Let the problem \eqref{eq:adv_pi_problem} be solved by Algorithm \ref{algorithm:optimistic_mirror_prox}. Assume that the stepsize $\gamma$ is chosen such that $0 <\gamma \leq 1/(2L_F)$, where 
    \begin{equation*}
        L_F^2 = \mathcal{O} \left[ \max_{(i,j)}\{L_{i,j}^2\} + \max_{(i,j)}\left\{ K_{i,j}^2 \right\}  \right],
    \end{equation*}
    and momentum $\alpha$ is chosen such that $\alpha = (1 + \gamma \tau)^{-1}$. Then, for all $k \geq 1$ it holds that 
    \begin{equation*}
        \Phi_k
        =
        \mathcal{O} \left[ \left(1 - \frac{\gamma \tau}{2}\right)^{k} \Phi_0  \right],
    \end{equation*}
    where $(\theta^*, \pi^*)$ is the solution of the problem \eqref{eq:vi_problem}. In other words, if one takes $\gamma = 1 / (2 L_F)$, then to achieve $\varepsilon$-accuracy (in terms of $\Phi_N \leq \varepsilon$) one would need at most 
    \begin{equation*}
        \mathcal{O} \left[ \frac{L_F}{\tau} \cdot \log\left( \frac{\Phi_0}{\varepsilon} \right) \right] ~\text{ iterations of Algorithm \ref{algorithm:optimistic_mirror_prox}.}
    \end{equation*}
\end{theorem}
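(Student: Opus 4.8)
The plan is to recast the saddle‑point problem \eqref{eq:adv_pi_problem} (with $c=n$, $U=\Delta_{n-1}$) as a variational inequality and run the standard Optimistic‑Mirror‑Prox (Popov) analysis in the \emph{mixed} geometry in which $\theta$ carries the Euclidean distance‑generating function $\tfrac12\|\cdot\|_2^2$ and $\pi$ carries the negative entropy $\omega_\pi(\pi)=\sum_i\pi_i\log\pi_i$, so that the induced Bregman divergence is $V_{z'}(z)=\tfrac12\|\theta-\theta'\|_2^2+\mathrm{KL}[\pi\|\pi']$ and hence $\tfrac12\Phi_k\le V_{z^k}(z^*)\le\Phi_k$. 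The structural fact I would exploit is that the regularizer $\tfrac{\tau}{2}\|\theta\|_2^2-\tau\,\mathrm{KL}[\pi\|\hat\pi]$ is \emph{exactly} $\tau$ times this distance‑generating function (up to affine terms that vanish on the simplex), so the VI operator $F$ of \eqref{eq:adv_pi_problem} splits as $F(z)=F_0(z)+\tau\big(\nabla\omega(z)-\nabla\omega(\hat z)\big)$ with $\hat z=(\mathbf 0,\hat\pi)$, where $F_0$ is the operator of the unregularized problem (a monotone‑diagonal‑plus‑skew‑symmetric operator, hence monotone, thanks to convexity of the $f_{i,1}$). Lines 6--7 of Algorithm \ref{algorithm:optimistic_mirror_prox} are then precisely the Popov step for $F$ in this geometry, with the $\tau(\nabla\omega(z^k)-\nabla\omega(\hat z))$ part inserted without the negative‑momentum extrapolation.

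First I would record the two properties that drive the rate. (i) $F$ is $\tau$‑strongly monotone relative to $\omega$: by monotonicity of $F_0$ and the three‑point identity $\langle\nabla\omega(z)-\nabla\omega(z'),z-z'\rangle=V_{z'}(z)+V_z(z')$ one gets $\langle F(z)-F(z'),z-z'\rangle\ge\tau\big(V_{z'}(z)+V_z(z')\big)$. (ii) $F_0$ is $L_F$‑Lipschitz from the $\|\cdot\|_2/\|\cdot\|_1$ primal norm on $\mathbb R^d\times\Delta_{n-1}$ to its dual: the $\theta$‑block difference is controlled by $\max_i L_{i,1}$ via Assumption \ref{as:lipgrad}; the two coupling blocks ($\nabla_\theta h$ vs.\ $\pi$, and $-\nabla_\pi h$ vs.\ $\theta$) are controlled by $\max_i K_{i,1}$, since $\|\nabla f_{i,1}\|_2\le K_{i,1}$ and $|f_{i,1}(\theta^1)-f_{i,1}(\theta^2)|\le K_{i,1}\|\theta^1-\theta^2\|_2$; this yields $L_F^2=\mathcal O\big(\max_{i,j}L_{i,j}^2+\max_{i,j}K_{i,j}^2\big)$. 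I would also use that $\omega$ is $1$‑strongly convex w.r.t.\ the mixed norm (Pinsker's inequality for the entropy block), so $V_{z'}(z)\ge\tfrac12\|z-z'\|^2$ globally — this keeps the entropic geometry well behaved even near the boundary of the simplex.

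Next I would write the first‑order optimality of the mirror‑prox update: for every $u$,
\begin{equation*}
\gamma\big\langle(1+\alpha)F_0(z^k)-\alpha F_0(z^{k-1})+\tau(\nabla\omega(z^k)-\nabla\omega(\hat z)),\ z^{k+1}-u\big\rangle\le V_{z^k}(u)-V_{z^{k+1}}(u)-V_{z^k}(z^{k+1}),
\end{equation*}
set $u=z^*$, invoke the VI inequality at $z^*$ together with property (i) to extract a gain $\gamma\tau\big(V_{z^k}(z^*)+V_{z^*}(z^{k+1})\big)$, and then control the two discrepancies between $(1+\alpha)F_0(z^k)-\alpha F_0(z^{k-1})$ and $F_0(z^{k+1})$: the ``one‑step'' part $F_0(z^{k+1})-F_0(z^k)$ is absorbed into $-V_{z^k}(z^{k+1})$ and the ``optimistic'' part $\alpha\big(F_0(z^k)-F_0(z^{k-1})\big)$ into a term proportional to $\|z^k-z^{k-1}\|^2$, both via Cauchy--Schwarz, $L_F$‑Lipschitzness of $F_0$ and Young's inequality (this is where $\gamma\le 1/(2L_F)$ enters). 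The outcome is a one‑step contraction for the augmented Lyapunov function $\widehat\Psi_k:=V_{z^k}(z^*)+\eta\|z^k-z^{k-1}\|^2$ of the form $\widehat\Psi_{k+1}\le\big(1-\tfrac{\gamma\tau}{2}\big)\widehat\Psi_k$, with $\alpha=(1+\gamma\tau)^{-1}$ being exactly the choice that makes the negative momentum cancel the cross terms between consecutive iterates at the strong‑monotonicity‑adjusted rate; iterating gives $\Phi_k=\mathcal O\big((1-\tfrac{\gamma\tau}{2})^k\Phi_0\big)$, and taking $\gamma=1/(2L_F)$ and solving $(1-\tfrac{\tau}{4L_F})^N\le\varepsilon/\Phi_0$ yields $N=\mathcal O\big(\tfrac{L_F}{\tau}\log\tfrac{\Phi_0}{\varepsilon}\big)$. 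The main obstacle I expect is the bookkeeping that couples the optimistic extrapolation with the non‑Euclidean block: the Young weights and the coefficient $\eta$ must be tuned so that the $\|z^{k+1}-z^k\|^2$ and $\|z^k-z^{k-1}\|^2$ residues are dominated simultaneously while the contraction factor stays at $1-\gamma\tau/2$ and does not degrade, and one must ensure every use of Lipschitzness of $F_0$ only involves $\nabla f_{i,1}$ and $f_{i,1}$ on $\Theta$, so that $L_{i,1},K_{i,1}$ from Assumption \ref{as:lipgrad} legitimately apply along the trajectory.
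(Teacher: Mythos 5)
Your overall plan matches the paper's: recast \eqref{eq:adv_pi_problem} as a variational inequality in the mixed Euclidean/entropy geometry (Proposition \ref{proposition:pi_to_vi}), bound $L_F^2$ by $\mathcal{O}[\max_i L_i^2+\max_i K_i^2]$ through the diagonal and coupling blocks exactly as you describe (Proposition \ref{proposition:pi_to_vi_ass}), and run a Popov-type recursion with $\alpha=(1+\gamma\tau)^{-1}$. Your one structural deviation --- folding the regularizer into the operator and invoking $\tau$-strong monotonicity relative to $\omega$ --- is a legitimate and essentially equivalent alternative to the paper's device, which instead keeps $\gamma\tau V(z,\hat z)$ inside the prox step so that Lemma \ref{lemma:bregman_step} produces the factor $(1+\gamma\tau)$ on $V(z,z^\dag)$ while $F_0$ is only required to be monotone (the term $\circledFour$ is simply dropped using the definition of $z^*$). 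Proposition \ref{theorem:also_step} shows the two parameterizations generate the same iterates up to the rescaling $\gamma\mapsto\gamma/(1+\gamma\tau)$, so nothing is lost either way.

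The genuine gap is in how you propose to dispose of the one-step discrepancy $\gamma\langle F_0(z^{k+1})-F_0(z^k),\,z^{k+1}-z^*\rangle$. You say it is ``absorbed into $-V_{z^k}(z^{k+1})$'' via Cauchy--Schwarz and Young, but this term carries a factor $\|z^{k+1}-z^*\|$: any per-iteration Young split yields $\tfrac{\gamma L_F^2}{2\epsilon}\|z^{k+1}-z^k\|^2+\tfrac{\gamma\epsilon}{2}\|z^{k+1}-z^*\|^2$, and the second piece can only be paid for by the strong-monotonicity gain $\gamma\tau V_{z^{k+1}}(z^*)$, forcing $\epsilon\lesssim\tau$; the first piece is then dominated by $-V(z^{k+1},z^k)$ only if $\gamma\lesssim\tau/L_F^2$. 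That gives a contraction factor $1-\mathcal{O}(\tau^2/L_F^2)$ and an $\mathcal{O}(L_F^2/\tau^2)$ iteration count, strictly worse than the claimed $\mathcal{O}((L_F/\tau)\log(\Phi_0/\varepsilon))$. To keep $\gamma=1/(2L_F)$ this term must be telescoped rather than bounded pointwise: the paper carries it inside the potential as the signed cross term $a_k=-\gamma\alpha\langle F(z^k)-F(z^{k-1}),z^k-z^*\rangle$ (so $\Phi_k=V(z^*,z^k)+a_k$), cancels it against the matching piece of the optimistic extrapolation at the next step, applies Young only to the residual $\gamma\alpha\langle F(z^k)-F(z^{k-1}),z^{k+1}-z^k\rangle$ (which involves iterate differences only, and is where $\gamma\le 1/(2L_F)$ enters), and recovers nonnegativity at the very end via a single Fenchel--Young step showing $\Phi_{k+1}\ge\tfrac12 V(z^*,z^{k+1})$. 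Your augmented Lyapunov $V_{z^k}(z^*)+\eta\|z^k-z^{k-1}\|^2$ contains no such signed cross term, so as written it cannot realize this telescoping; replacing it with the $a_k$-augmented potential (or an equivalent) closes the gap, and the rest of your argument then goes through.
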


Full proof of Theorem \ref{theorem:optimistic_mirror_prox} is provided in Appendix \ref{appendix:optimistic_mirror_prox}.

\textbf{Discussion.} Since Algorithm \ref{algorithm:optimistic_mirror_prox} is not stochastic, its resulting oracle complexity is $\mathcal{O}[n \cdot (L_F / \tau) \cdot \log(\Phi_0 / \varepsilon)]$. We compare our findings with the recently proposed variance reduced algorithm \texttt{DRAGO} \citep{mehta2024drago}. According to Table 1 in \citep{mehta2024drago}, their method achieves state-of-the-art complexity proportional to $\mathcal{O}[n\sqrt{n} \cdot \log(1 / \varepsilon)]$. However, this complexity is worse than our result. While \texttt{DRAGO} can handle gradients of several summands of the problem \eqref{eq:adv_pi_problem}, our Optimistic Mirror-Prox achieves lower computational cost through its reduced oracle complexity in terms of $n$ and techniques such as gradient accumulation. Another algorithm, \texttt{LSVRG} \citep{mehta2023stochastic}, achieves complexity similar to ours but imposes restrictive assumptions on the regularization multiplier of the $KL[\cdot|\cdot]$ divergence. Furthermore, from a memory perspective, both \texttt{DRAGO} and \texttt{LSVRG} require additional space to store parameters for Variance Reduction, whereas non-VR methods like ours only need space for gradients.

\section{ALSO -- Adaptive Loss Scaling Optimizer}
\label{sec:also}
Despite the linear convergence of Optimistic Mirror-Prox it does not satisfy our requirements: an adaptive step for $\theta$, the ability to handle a stochastic oracle, and convergence in the non-convex case. 
Building upon the important foundation of Optimistic Mirror-Prox, we now introduce \texttt{ALSO} (Algorithm \ref{algorithm:also_optimistic}) -- the \texttt{A}daptive \texttt{L}oss \texttt{S}caling \texttt{O}ptimizer -- which effectively addresses our requirements.
As we mentioned above, Optimistic Mirror-Prox utilizes GD-like step over $\theta$, to enhance adaptivity, we replace this GD step with Adam \citep{kingma2014adam}, resulting in our proposed \texttt{ALSO} algorithm (Algorithm \ref{algorithm:also_optimistic}) for solving general form of the problem \eqref{eq:adv_pi_problem}. Note that we leave the same step over $\pi$ as before.
In practice, nearly all works that employ Euclidean Optimistic method for deep learning tasks do not use its theoretical version, but rather an adaptive variant (typically with Adam-style stepsizes) \cite{daskalakis2017training, gidel2018variational, mertikopoulos2018optimistic, chavdarova2019reducing, liang2019interaction, peng2020training}. This substitution is often justified as a standard procedure in DL. However, we question this approach, as establishing theoretical guarantees for adaptive methods is a nontrivial and technically demanding task (see Appendix \ref{appendix:also-theory}). In this work, we do not follow this simplified route; instead, we provide a rigorous analysis of the adaptive method (see Theorem \ref{also:convergence-new}).

\setlength{\textfloatsep}{5pt}
\begin{algorithm}{\texttt{ALSO}}
       \label{algorithm:also_optimistic}
    \begin{algorithmic}[1]
        \State {\bf Parameters:} $\gamma_\theta, \gamma_{\pi}$ -- stepsize for $\theta$ and $\pi$; $\beta_1, \beta_2$, $\varepsilon$ for Adam; momentum $\alpha$; $\lambda, \tau$ -- regularization parameters for $\pi$ and $\theta$; number of iterations $T$; $\hat{\pi}$ -- regularization distribution.
        
        \State {\bf Initialization:} $m^0 = g^{0} = p^{0} = \mathbf{0}$, $v_0 = 0$, $\pi^0 = \hat{\pi}$, $\hat{\gamma}_\pi = \gamma_\pi / (1+\gamma_\pi\lambda)$
        
        \For{$k = 0, 1, 2, \dots, T$}
            \State \label{alg:also_sampling_line} Sample $B$ objects: $\{ (c^k_1, i_1^k), ..., (c^k_B, i_B^k) \}$ -- pairs (group, index)
            \State \label{alg:also_theta_grad} $g^{k+1} = \frac{c}{B}\sum_{j=1}^{B}\pi_{c^k_j}\nabla_\theta f_{c^k_j,i_j^k}(\theta^{k})$
            \State \label{alg:also_neg_momentum_theta}$\hat{g}^{k+1} = (1 + \alpha)g^{k+1} - \alpha g^{k} + \tau \theta^{k}$
            \State  \label{alg:also_pi_grad} $p^{k+1} = \frac{c}{B}\sum_{j=1}^{B} e_{c_j^k}\cdot f_{c^k_j,i_j^k}(\theta^{k})$, where $e_i$ is vector with $1$ in $i$-th position and zeros in others
            \State \label{alg:also_neg_momentum_pi} $\hat{p}^{k+1} = (1 + \alpha)p^{k+1} - \alpha p^{k}$
            \State \label{alg:also_theta_update} $\theta^{k+1} = \theta^{k} - \gamma_\theta \cdot \text{Adam}(\hat{g}^{k+1}, \beta_1, \beta_2, \varepsilon)$
            \State \label{alg:also_pi_update} Option I: $\pi^{k+1} = \text{SM} \left[\log \pi^{k} - \hat{\gamma}_\pi(\hat{p}^{k+1} + \lambda \log (\pi^{k} / \hat{\pi})) \right]$
            \State \label{alg:also_pi_update_2} Option II: $\pi^{k+1} = \argmin_{\pi \in U \cap \Delta_{c-1}} \left\{\hat{\gamma_\pi}\langle\hat{p}^{k+1} + \lambda\log \frac{\pi}{\hat{\pi}},\pi\rangle+\text{KL}[\pi \|\pi^k]\right\}$
        \EndFor
    \end{algorithmic}
    \end{algorithm}

    \textbf{Discussion.} In contrast to Algorithm \ref{algorithm:optimistic_mirror_prox}, where full gradients are used, in Lines \ref{alg:also_theta_grad}, \ref{alg:also_pi_grad} of Algorithm \ref{algorithm:also_optimistic}, we use gradients obtained by a straightforward sampling strategy: we unite all objects into a single group and then sample from it. This approach is mathematically equivalent to combining the two sums in the equation \eqref{eq:adv_pi_problem} and selecting $B$ terms from the unified sum. This method allows for seamless integration of \texttt{ALSO} into standard Deep Learning training pipelines. Nevertheless, alternative sampling strategies are viable. For instance, one might first sample groups and subsequently sample objects within each selected group, if it is suitable for specific applications (see Appendix \ref{appendix:sampling}). In Lines \ref{alg:also_neg_momentum_theta}, \ref{alg:also_neg_momentum_pi} we utilize negative momentum -- a common technique used to prevent too sharp steps and obtain convergence. While introduction of this term is inspired by Optimistic Mirror-Prox, the similar term is used in \texttt{MARS} \citep{yuan2024mars} to reduce the variance. This observation further confirms this design choice. We additionally ablate it in Appendix \ref{appendix:ablation}. In Line \ref{alg:also_theta_update} we utilize Adam step to update $\theta$. It is worth noting that Adam itself can be seen as a particular case of \texttt{ALSO}: if we set the hyperparameters so that $\pi$ remains constant and equal to $1/c$, the procedure reduces to Adam. In Lines \ref{alg:also_pi_update}, \ref{alg:also_pi_update_2} we present two options for the $\pi$ update. Option~I employs $U = \Delta_{c-1}$ for simplicity and is used in practical implementation. Option~II provides a more general formulation and is particularly valuable for theoretical analysis. The step over $\pi$ has time complexity $O(c)$, which in theory can be costly. However, in many applications $c$ is relatively small (see Section \ref{sec:experiments}). Furthermore, for most DL models, gradient computations consume the majority of training time \citep{jiang2021optimizer}. Based on these observations, we determined that updating $\pi$ using simple arithmetic operations with $O(c)$ complexity satisfies practical requirements. We validate this assessment experimentally in Appendix~\ref{appendix:time_analysis}.

\subsection{Convergence of ALSO}
    We now present assumptions for the convergence analysis.    
    
    \begin{assumption}
        \label{as:uncertainty_set}
        The admissible domain 
        \(\mathcal{D}_\pi := \Delta_{c-1} \cap U\) 
        is nonempty, closed, and convex. Moreover, 
        regularizer \(\hat{\pi} \in \operatorname{Int}(\mathcal{D}_\pi) \)
    \end{assumption}

    \begin{assumption}
        \label{as:stoch_grad} 
        At each iteration of Algorithm~\ref{algorithm:also_optimistic} we have access to oracles $g = g(\theta,\pi)$ and $p = p(\theta,\pi)$, which provide unbiased estimates of the gradients for the problem \eqref{eq:adv_pi_problem} using batch size $B$. Moreover,
        \[
            \mathbb{E}\big\| g(\theta, \pi) - \nabla_\theta h(\theta , \pi)  \big\|^2_2 \leq \frac{\sigma^2}{B},
            \qquad 
            \mathbb{E}\big\| p(\theta, \pi) - \nabla_\pi h(\theta , \pi)  \big\|^2_2 \leq \frac{\sigma^2}{B}.
        \]
    \end{assumption}
    For example, if one uses straightforward sampling (Lines \eqref{alg:also_theta_grad}, \eqref{alg:also_pi_grad}) without any other source of stochasticity (e.g., no dropout, augmentations, etc.), then $\sigma^2 = \mathcal{O}\left(K^2\cdot\max\left\{c^2, \frac{c^3\sum_{i=1}^{c}n_i^2}{n^2}\right\}\right)$, where $K = \max_{(i,j)} K_{i,j}$. See Appendix \ref{appendix:sampling} for derivation.

    \begin{definition}[Stationary point, cf.~\citep{lin2019descentascent}]\label{def:stationary}
    A point $\theta$ is called an $\varepsilon$-stationary point ($\varepsilon \ge 0$) of a differentiable function $\Phi$ if $\|\nabla \Phi(\theta)\| \le \varepsilon$. If $\varepsilon = 0$, then $\theta$ is a stationary point.
    \end{definition}

    In our setting, the primal objective is $\Phi(\theta) := \max_{\pi \in \mathcal{D}_\pi} h(\theta,\pi), \label{eq:primal_obj}$
    which is differentiable since $h(\theta,\pi)$ is smooth with respect to $\theta$ and the maximization is over a compact convex set.  
    Therefore, following \citep{lin2019descentascent}, it is sufficient to measure convergence of 
    Algorithm~\ref{algorithm:also_optimistic} by the gradient norm $\|\nabla \Phi(\theta)\|$, as  
    small gradients certify approximate stationarity of the original min--max problem~\eqref{eq:adv_pi_problem}. Moreover, due to stochasticity in the updates, it is natural to adopt the criterion  $\mathbb{E}\|\nabla \Phi(\theta)\|^2 \le \varepsilon^2$.

    Now we are ready to present the following main theorem, which establishes the complexity bounds of Algorithm~\ref{algorithm:also_optimistic}.

     \begin{theorem} \label{also:convergence-new}
        Under Assumptions~\ref{as:uncertainty_set},~\ref{as:lipgrad},~\ref{as:stoch_grad}, the required number of iterations to achieve $\varepsilon$-stationarity~\ref{def:stationary} ($\mathbb{E}\|\nabla \Phi(\theta)\|^2 \le \varepsilon^2$) for the problem \eqref{eq:adv_pi_problem} by \texttt{ALSO} (Algorithm~\ref{algorithm:also_optimistic}) with $\gamma_\theta = \mathcal{O}(\frac{L^4}{\lambda^4}) $, $\gamma_\pi = \frac{\lambda}{8L^2}$, $\beta_1 = \mathcal{O}(\frac{\varepsilon\lambda^2}{L^2})$, $\beta_2 = 1 - \mathcal{O}(\varepsilon^2)$, $B = \mathcal{O}(\frac{\sigma^2}{\varepsilon^2})$ is 
        \[
            T = \mathcal{O}\left(\frac{L^2\cdot(K+\sigma)}{\lambda^2\varepsilon^2} \cdot \max\!\left\{ 
                \Delta_\Phi \frac{L^2}{\lambda^2},\;
                D_0
            \right\}\right),
        \]
        where $\Delta_\Phi = \Phi(\theta^0) - \min_{\theta \in \mathbb{R}^d}\Phi(\theta)$, $D_0 = KL(\pi^*(\theta^0) \| \pi^0), \pi^*(\theta)=\argmax_{\pi\in\mathcal{D}_\pi} h(\theta, \pi)$ and $L^2=\mathcal{O}\left(\left(\frac{c}{n}\max_i \sum_{j=1}^{n_i}L_{i,j} + \tau + \frac{c}{n}\max_i \sum_{j=1}^{n_i}K_{i,j} \right)^2 +\lambda^2\right)$, $K=\frac{c}{n}\max_i \sum_{j=1}^{n_i}K_{i,j}$.
    \end{theorem}
        
    Appendix~\ref{appendix:also-theory} provides a detailed derivation and discusses parameter selection.

    \textbf{Discussion.} 
    This convergence result matches the guarantees of the standard SGDA method~\citep{lin2019descentascent} in terms of both iteration complexity $O(\frac{1}{\varepsilon^2})$ and batch size in the stochastic regime $O(\frac{1}{\varepsilon^2})$, resulting in total computational complexity $O(\frac{1}{\varepsilon^4})$. Furthermore, our rate matches lower-bound from \citep{li2021complexity}. In contrast to~\citep{lin2019descentascent}, we incorporate Adam-type updates on the $\theta$-side and provide a dedicated analysis of the Adam estimator to obtain such bounds. 
    Moreover, unlike SGDA, \texttt{ALSO} leverages a non-Euclidean geometry, instead of Euclidean projection used in \citep{lin2019descentascent}.

\section{Experiments}
\label{sec:experiments}
    We evaluate \texttt{ALSO} in several setups characterized by significant data heterogeneity. Specifically:
    \begin{itemize}
        \item  \textbf{Learning from Unbalanced Data} (Section \ref{sec:unbalanced_data}). We evaluate \texttt{ALSO} in an extremely class-imbalanced setup. Here, we assign weights to individual objects (i.e., no grouping is used).
        \item \textbf{Tabular DL} (Section \ref{sec:tabdl}). Tabular data is central to many real-world industrial problems and is often characterized by complex data heterogeneity, such as heavy-tailed and non-symmetric targets, extreme distributional shifts, and class imbalance (see Table \ref{table:tabdl-datasets-properties} for details). In this setup, we assign weights to individual objects (i.e., no grouping is used).
        \item \textbf{Robust Training to Adversarial Attacks} (Section \ref{sec:robust_adversarial}). The considered attacks vary in strength, which makes some easier to defend against than others. In this task, we assign weights to the attacks rather than to individual objects (i.e., grouping is used).
        \item \textbf{Distributed Training} (Section \ref{sec:distributed_learning}). Data heterogeneity is a well-known problem in distributed training, making it a natural setting to evaluate \texttt{ALSO}. Here, we assign weights to the workers instead of the individual objects (i.e., grouping is used).
        \item  \textbf{Split Learning} (Section \ref{sec:split_learning}). The heterogeneity arises from split learning formulation: model with shared encoder trains on different tasks. In this experiment, we assign weights to each class, not to individual objects (i.e., grouping is used).
    \end{itemize}

    We compare \texttt{ALSO} with standard DL baselines, including \texttt{AdamW} and \texttt{Static Weights} (see Appendix \ref{appendix:unbalanced_cifar}), and several DRO methods that tackle problems similar to ours. We use both classical DRO methods like \texttt{Spectral Risk}  \citep{mehta2023stochastic}, and state-of-the-art methods such as \texttt{DRAGO} \citep{mehta2024drago} (noted for fast convergence), \texttt{FastDRO} \citep{levy2020large} (a scalable method), \texttt{RECOVER} \citep{qi2021online} (a non-convex method). All the methods are discussed in Section \ref{sec:intro}. Baselines were implemented using official code when suitable, or based on the paper otherwise. Details on hyperparameter tuning can be found in Appendix \ref{appendix:experimental_summary}.  In short, all methods were tuned for the same number of iterations using either the Optuna package \citep{akiba2019optuna} or a grid search. To reduce the hyperparameter search space, we fix $\alpha=1$. This decision is supported by theory (see \citep{popov1980modification}) and prior empirical studies, which have shown that setting $\alpha$ near $1$ is an effective choice \citep{mertikopoulos2018optimistic}.

\subsection{Learning from Unbalanced Data}
\label{sec:unbalanced_data}

    The purpose of this experiment is to demonstrate that \texttt{ALSO} can perform effectively in scenarios where the training dataset suffers from class imbalance. We consider a classification task on the CIFAR-10 dataset \citep{krizhevsky2009learning} using the ResNet-18 model \citep{he2016deep}. To simulate class imbalance, the ten original classes in the dataset are grouped into two based on the parity of its class. Subsequently, a proportion of samples from the second class is removed from both the training and validation datasets. Importantly, the test dataset, used to compute performance metrics, remains balanced. To quantify the class imbalance, we introduce the unbalanced coefficient (uc), which specifies the ratio of samples between the first and second classes as: $\nicefrac{\# \text{ 1 class}}{\# \text{ 2 class}}$ = uc, where $\#$ is the number of samples in the corresponding classes. For this experiment, we consider the values uc $\in \{ 1, 2, 5, 10, 20, 30, 40, 50 \}$. The results of the experiment are presented in Figure \ref{fig:unbalanced_cifar_f1}. 
    Analyzing the metric values, we observe that the proposed method \texttt{ALSO} outperforms all the compared baselines. The performance difference is particularly noticeable for large values of the unbalanced coefficient ($\geq 30$), where one class significantly outweighs the other. 
    
\begin{figure}[h!]

    \centering
    \begin{minipage}{0.43\linewidth}
        \includegraphics[width=0.95\linewidth]{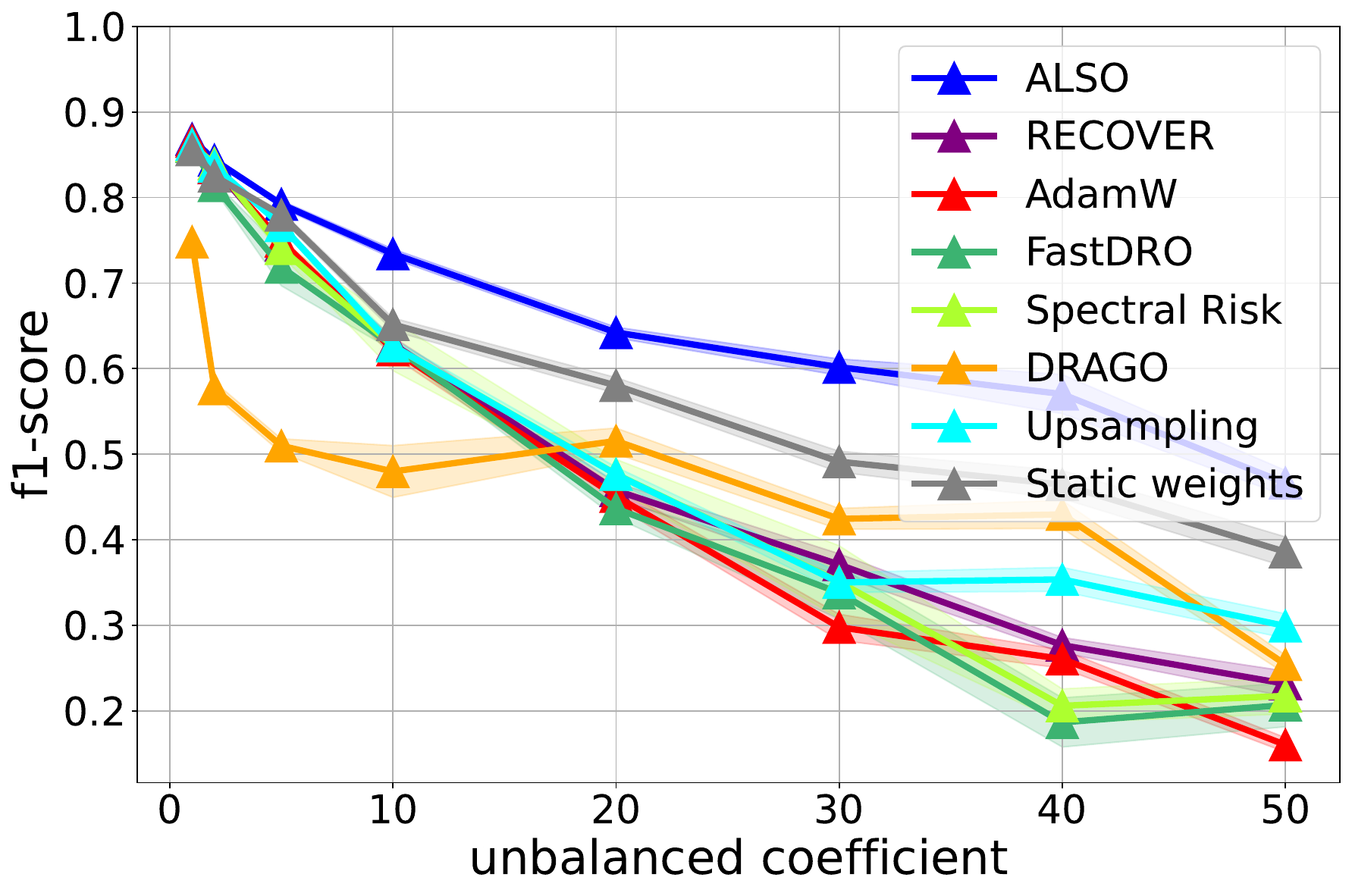}
    \end{minipage}
    \caption{Performance comparison of optimization techniques designed for training in the presence of class imbalance
    The final f1-score was averaged over 20 runs, see Appendix \ref{appendix:unbalanced_cifar} for details.}
    \label{fig:unbalanced_cifar_f1}
\end{figure}
\subsection{Tabular Deep Learning}
\label{sec:tabdl}
     We evaluate the training procedure over $14$ tabular datasets from \citep{gorishniy2024tabr, rubachev2024tabred} and $15$ runs on them. Notably, the selected datasets possess characteristics particularly relevant for DRO methods: significant distribution shift between train-test splits, class imbalance, or heavy-tailed target distributions in regression tasks. As a model, we choose MLP-PLR \citep{gorishniy2022embeddings} as it is a strong baseline in the tabular DL field. Detailed dataset characteristics, hyperparameter tuning procedures, and training specifications can be found in Appendix \ref{appendix:tabdl}. The results of the algorithms comparison are presented in Table \ref{table:tabdl-mlp-plr}. \texttt{ALSO} demonstrates the best performance on the most datasets and can be considered as an alternative to both conventional DL methods and specialized DRO methods.

    \begin{table*}[h!]
    \centering
    \caption{ Performance comparison of \texttt{ALSO}, \texttt{AdamW} with uniform weights and \textit{static weights} and Distributionally Robust Optimization methods -- \texttt{DRAGO}, \texttt{Spectral Risk}, \texttt{FastDRO}, \texttt{RECOVER} on tabular Deep Learning datasets. Bold entries represent the best method on each dataset according to mean, underlined entries represent methods, which performance is best with standard deviations over 15 runs. Metric is written near dataset name, $\uparrow$ means that higher values indicate better performance, $\downarrow$ means otherwise. }
    \label{table:tabdl-mlp-plr}
    \resizebox{\textwidth}{!}{
    \begin{tabular}{lcccccccc}
    \toprule
    \textbf{Dataset} & \texttt{ALSO} & \texttt{AdamW}  & \texttt{DRAGO} & \texttt{Spectral Risk} & \texttt{FastDRO} & 
    \texttt{RECOVER} & \texttt{Static Weights} & \\
    
     \midrule Weather (RMSE $\downarrow$) & $\mathbf{1.4928 \pm 0.0042}$ & ${1.5208} \pm {0.0037}$ & ${1.5803 \pm 0.0103}$ & ${1.5189 \pm 0.0047}$ & ${1.5184 \pm 0.0041}$ & ${1.5547} \pm {0.0034}$ & ${1.5161 \pm 0.0046}$ \\
    
     \midrule Ecom Offers (ROC-AUC $\uparrow$) & $\underline{0.5976 \pm 0.0020}$ & ${0.5810} \pm {0.0039}$ & $\mathbf{0.5983 \pm 0.0019}$ & ${0.5796 \pm 0.0034}$ & ${0.5900 \pm 0.0126}$ & ${0.5859} \pm {0.0031}$ & ${0.5803 \pm 0.0033}$ \\
    
     \midrule Cooking Time (RMSE $\downarrow$) & $\mathbf{0.4806 \pm 0.0003}$ & ${0.4813} \pm {0.0003}$ & ${0.4843 \pm 0.0008}$ & ${0.4810 \pm 0.0004}$ & $\underline{0.4809 \pm 0.0004}$ & ${0.4813} \pm {0.0006}$  & ${0.4818 \pm 0.0006}$ \\
    
     \midrule Maps Routing (RMSE $\downarrow$) & $\mathbf{0.1612 \pm 0.0001}$ & ${0.1618} \pm {0.0002}$  & ${0.1651 \pm 0.0005}$ & ${0.1619 \pm 0.0003}$ & ${0.1620 \pm 0.0003}$ & ${0.1621} \pm {0.0003}$ & ${0.1617 \pm 0.0002}$  \\
    
     \midrule Homesite Insurance (ROC-AUC $\uparrow$) & $\mathbf{0.9632 \pm 0.0003}$ & ${0.9621} \pm {0.0005}$ & ${0.9536 \pm 0.0018}$ & ${0.9609 \pm 0.0005}$ & ${0.9614 \pm 0.0008}$ & ${0.9612} \pm {0.0005}$ & ${0.9619 \pm 0.0003}$\\
    
     \midrule Delivery ETA (RMSE $\downarrow$) & $\mathbf{0.5513 \pm 0.0020}$ & $\underline{0.5519 \pm 0.0017}$ & ${0.5555 \pm 0.0016}$ & $\underline{0.5528 \pm 0.0013}$ & $\underline{0.5528 \pm 0.0017}$ & ${0.5551} \pm {0.0035}$ & ${0.5555 \pm 0.0031}$ \\
    
     \midrule Homecredit Default (ROC-AUC $\uparrow$) & $\mathbf{0.8585 \pm 0.0012}$ & $\underline{0.8579 \pm 0.0012}$ & ${0.8463 \pm 0.0013}$ & $\underline{0.8575 \pm 0.0012}$ & $\underline{0.8579 \pm 0.0014}$ & $\underline{0.8576 \pm 0.0011}$ & ${0.8557 \pm 0.0012}$ \\
    
     \midrule Sberbank Housing (RMSE $\downarrow$) & $\mathbf{0.2424 \pm 0.0024}$  & $\underline{0.2434 \pm 0.0027}$ & ${0.2694 \pm 0.0070}$ & ${0.2453 \pm 0.0036}$ & ${0.2458 \pm 0.0044}$ & ${0.2589} \pm {0.0093}$ & ${0.2465 \pm 0.0080}$ \\
    
     \midrule Black Friday (RMSE $\downarrow$) & $\mathbf{0.6842 \pm 0.0004}$ & ${0.6864} \pm {0.0005}$ & ${0.7011 \pm 0.0040}$ & ${0.6861 \pm 0.0004}$ & ${0.6861 \pm 0.0003}$ & ${0.6963} \pm {0.0012}$ & ${0.6870 \pm 0.0008}$ \\
    
     \midrule Microsoft (RMSE $\downarrow$) & $\mathbf{0.7437 \pm 0.0004}$ & ${0.7442} \pm {0.0003}$ & ${0.7496 \pm 0.0010}$ & $\underline{0.7441 \pm 0.0003}$ & ${0.7448 \pm 0.0004}$ & ${0.7486} \pm {0.0002}$ & ${0.7467 \pm 0.0004}$ \\
    
     \midrule California Housing (RMSE $\downarrow$) & $\mathbf{0.4495 \pm 0.0046}$ & ${0.4602} \pm {0.0042}$ & ${0.6326 \pm 0.2073}$ & ${0.4681 \pm 0.0050}$ & ${0.4639 \pm 0.0024}$ & ${0.4787} \pm {0.0042}$ & ${0.4651 \pm 0.0040}$  \\
     
     \midrule Churn Modeling (ROC-AUC $\uparrow$) & $\mathbf{0.8666 \pm 0.0027}$ & ${0.8616} \pm {0.0015}$ & ${0.7960 \pm 0.0010}$ & ${0.8626 \pm 0.0020}$ & ${0.8622 \pm 0.0020}$ & ${0.8604} \pm {0.0033}$ & ${0.8249 \pm 0.0073}$ \\
    
     \midrule Adult (ROC-AUC $\uparrow$) & $\underline{0.8699 \pm 0.0001}$ & ${0.8688 \pm 0.0012}$  & ${0.7640 \pm 0.0014}$ & ${0.8687 \pm 0.0009}$ & $\mathbf{0.8702 \pm 0.0009}$ & ${0.8683} \pm {0.0013}$ & ${0.8498 \pm 0.0051}$  \\
     
     \midrule Higgs Small (ROC-AUC $\uparrow$) & $\underline{0.7280 \pm 0.0009}$ & $\underline{0.7274 \pm 0.0017}$ & ${0.6263 \pm 0.0573}$ & $\mathbf{0.7282 \pm 0.0021}$ & $\mathbf{0.7282 \pm 0.0009}$ & ${0.7267 \pm 0.0013}$ & ${0.7222 \pm 0.0022}$ \\
    \bottomrule
    \end{tabular}
    }
    \end{table*}

\subsection{Robust Training to Adversarial Attacks} \label{sec:robust_adversarial}

    In this section, we compare \texttt{ALSO} with baselines on the task of robust training of DL model \citep{madry2017towards}. At the first stage, a small CNN \citep{lecun1998gradient} is trained with \texttt{AdamW} for 1 epoch on the MNIST dataset \citep{lecun2010mnist}. Then this pretrained model is trained with adversarial attacks (various transformations from torchvision \citep{marcel2010torchvision}, and the FGSM attack~\citep{Musa_2021}) to obtain a more robust model. As a criterion for the quality of the models we use:
    $
        \text{Accuracy} = \frac{1}{m} \sum_{i=1}^m \text{Accuracy}(\text{Attack}_i)
    $,
    where $\text{Attack}_i$ denotes the quality on the test dataset with the  $i\text{-th attack.}$
    In this section, we slightly change the pipeline of the DRO algorithms; namely, at each iteration $k$ we sample the index $i \sim \text{Cat}( \pi^k )$, that corresponds $i$-th attack. During \texttt{AdamW} training, we sample $i$ from a uniform distribution. 
    Experimental results (see Figure \ref{fig:adversarial_comparison_also_adamw}) demonstrate that \texttt{ALSO} outperforms both \texttt{AdamW} and DRO baselines, highlighting its effectiveness in addressing problems formulated as \eqref{eq:adv_pi_problem}.

    \begin{figure}[h!]
        \centering
        \begin{minipage}{0.43\linewidth}
        \includegraphics[width=1.\linewidth]{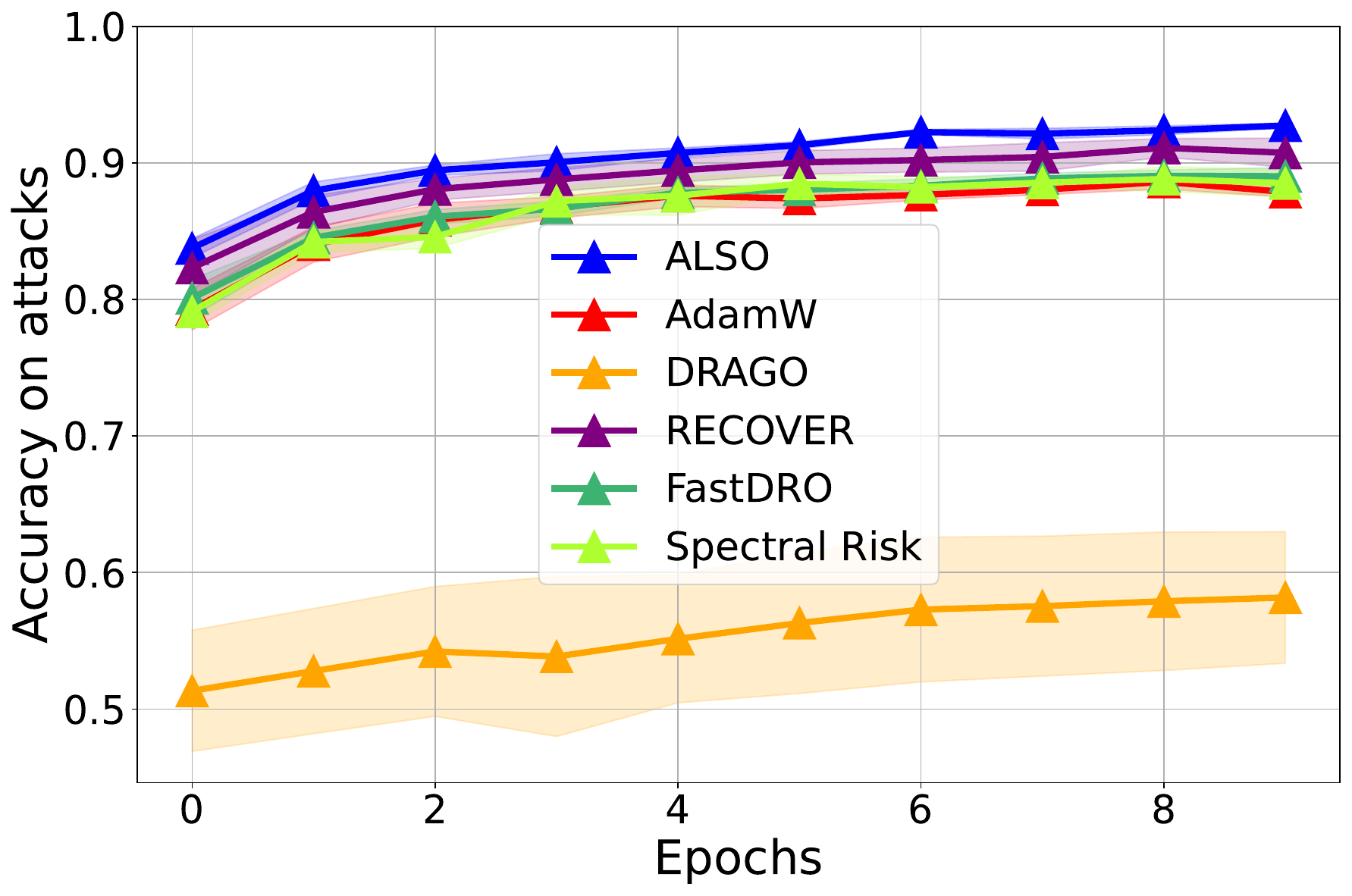}
        \end{minipage}
        \caption{Comparison of accuracy of \texttt{ALSO} with baselines on the attacked test dataset. See details in Appendix \ref{appendix:robust_adversarial}}
        \label{fig:adversarial_comparison_also_adamw}
    \end{figure}
    
\subsection{Distributed Training} \label{sec:distributed_learning}
    
    In this experiment, we consider the problem \eqref{eq:emp_risk} as a distributed optimization problem, where $n$ workers have their own local data on the device. We focus on the case where gradient updates are compressed before being sent to the server. We consider the formulation \eqref{eq:adv_pi_problem}, in which $\pi_i$ is no longer the weight of object $i$, but the weight of worker $i$, and accordingly, the larger $\pi_i$ is, the more worker $i$ will transmit information to the server. We return to ResNet-18 \citep{he2016deep} on the CIFAR-10 dataset \citep{krizhevsky2009learning}, where Perm-K \citep{szlendak2021permcompressors} is chosen as the compressor. In all DRO methods, each worker transmits a personalized fraction $\pi_i$ of gradient coordinates to the server, which generalizes the Perm-K approach. As shown in Figure~\ref{fig:distributed_cifar_f1}, applying the \texttt{ALSO} algorithm in the distributed setup demonstrates superiority over all baselines. 

    \begin{figure}[h!]
    \centering
    \begin{minipage}{0.43\linewidth}
    \includegraphics[width=1.\linewidth]{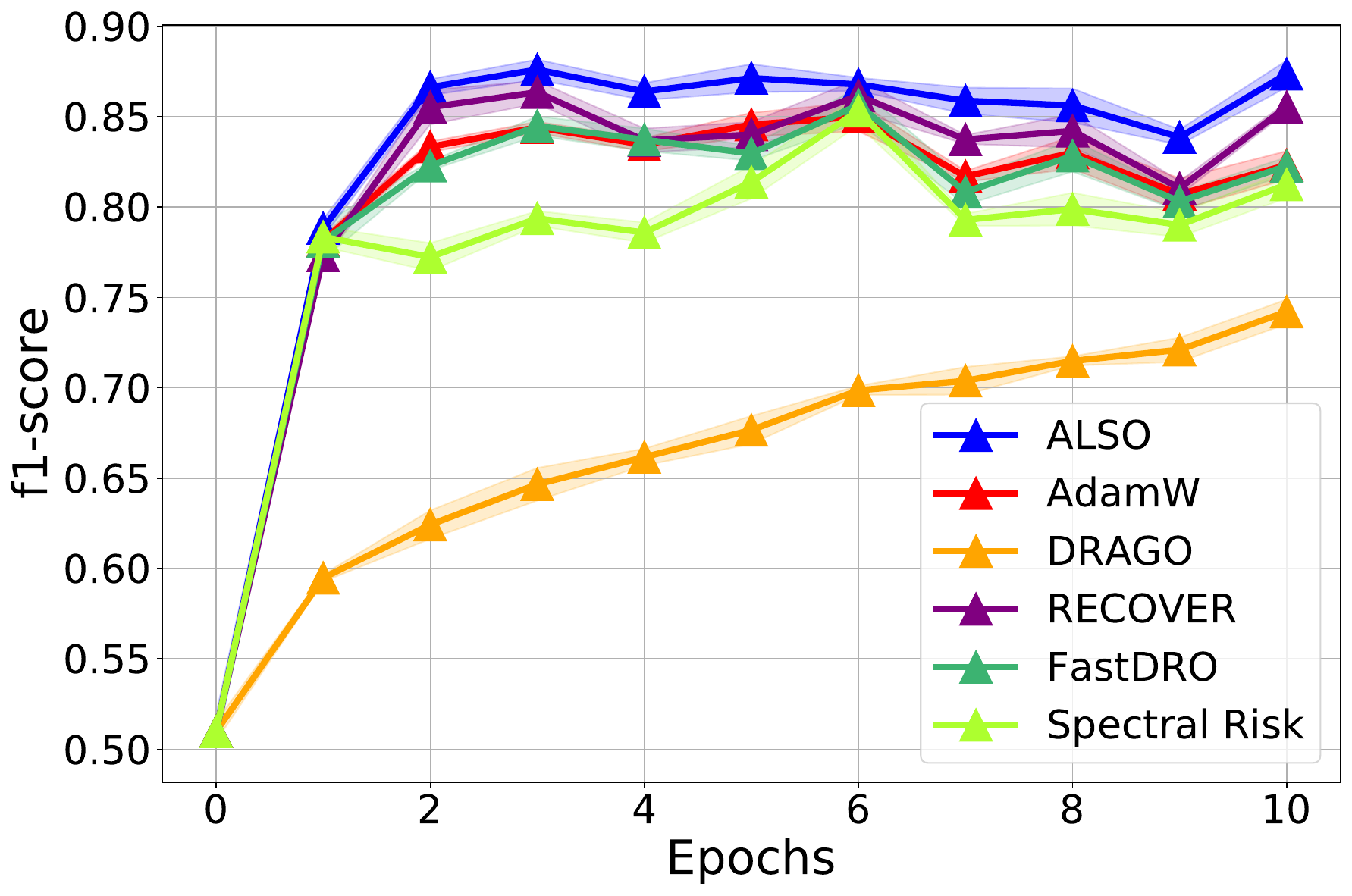}
    \end{minipage}
    \caption{Comparison of f1-score of \texttt{ALSO} with other baselines on the distributed problem. See details in Appendix \ref{appendix:distributed_cifar}}
    \label{fig:distributed_cifar_f1}
    \end{figure}

\subsection{Split Learning}
\label{sec:split_learning}

    In this section, we compare \texttt{ALSO} with baselines in the Split Learning task \citep{vepakomma2018split}. The idea of split learning is to train a shared encoder across multiple tasks distributed over different workers, while maintaining independent heads for each task’s predictions \citep{thapa2021advancements, kim2020multiple}. We use the ResNet-18 \citep{he2016deep} without pretrained weights and simulate a scenario where a new worker joins the training process with the Flowers102 dataset \citep{nilsback2008automated}, while training is already started on the Food101 dataset \citep{food101}. To enhance the performance of the worker that joins the training process at a later stage, we assign class-specific weights for both datasets.
    We compare \texttt{ALSO} optimizer and baselines by measuring Accuracy@5 on both datasets (see Figure \ref{fig:split_learning}). The results show that \texttt{ALSO} outperforms all other methods in terms of faster and more stable convergence, as well as better final metrics. Additional details are provided in Appendix \ref{appendix:split_learning}.

    \begin{figure}[h!]
        \centering
        \begin{minipage}{0.43\linewidth}
            \centering
            \includegraphics[width=\linewidth]{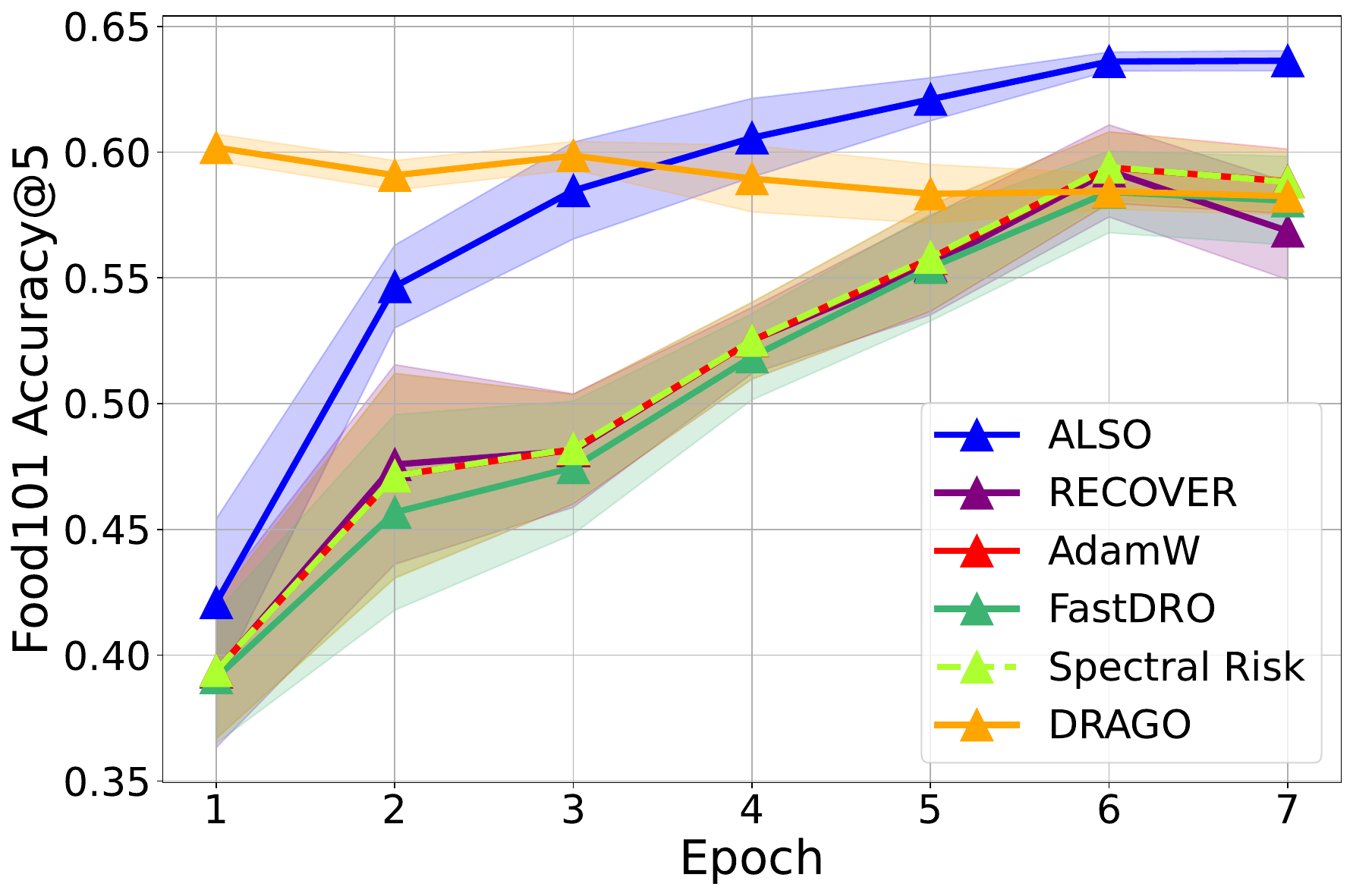}
        \end{minipage}
        \begin{minipage}{0.43\linewidth}
            \centering
            \includegraphics[width=\linewidth]{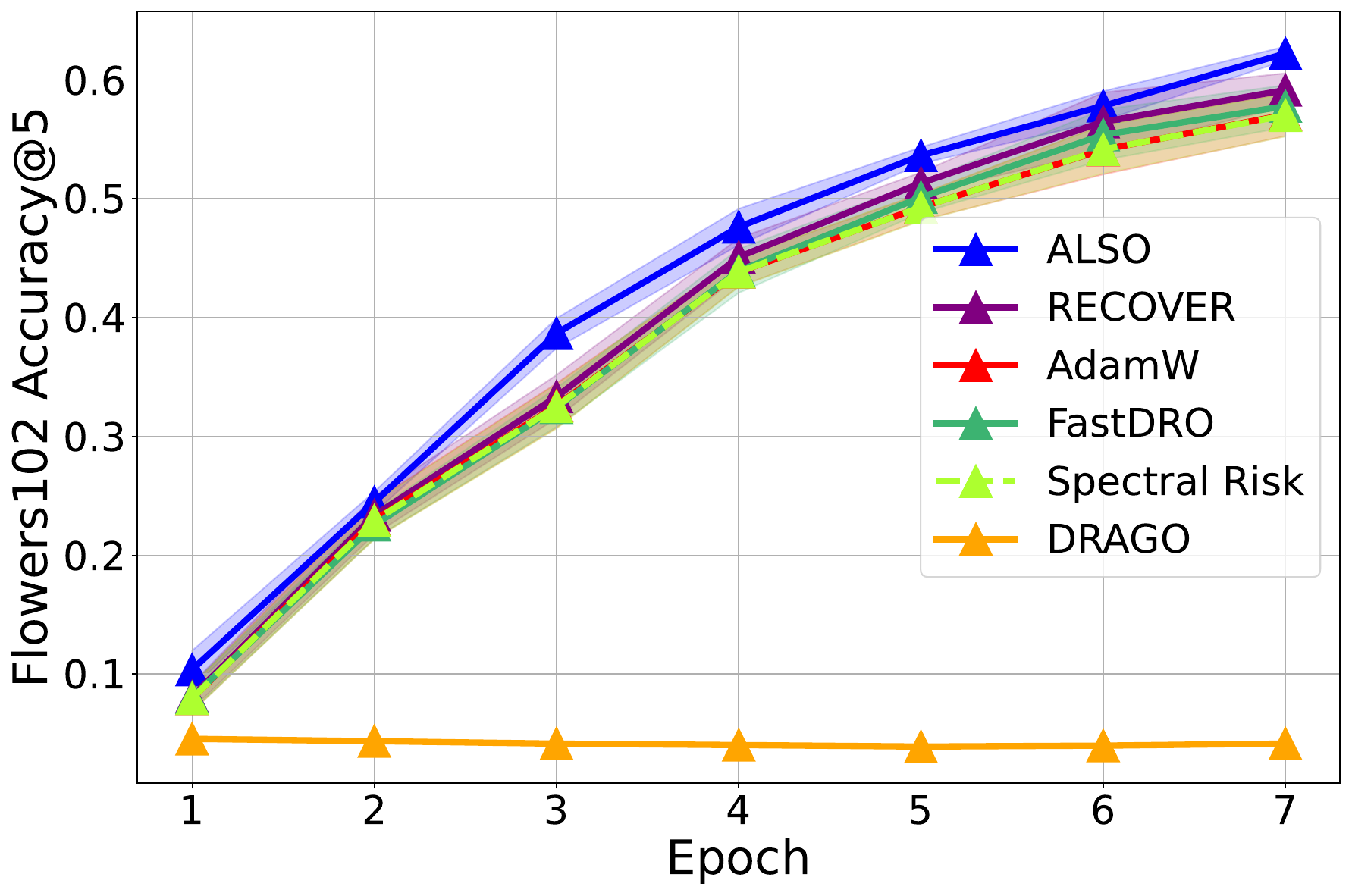}
        \end{minipage}
        \caption{ Metrics comparison for models trained with \texttt{ALSO}, \texttt{AdamW} and Distributionally Robust Optimization methods: \texttt{DRAGO}, \texttt{Spectral Risk}, \texttt{FastDRO}, \texttt{RECOVER} on Flowers102 and Food101 datasets.
        \ref{appendix:split_learning}.}
        \label{fig:split_learning}
    \end{figure}

\section{Ablation Study Summary}

    Due to space constraints, our full ablation studies are presented in Appendix \ref{appendix:ablation}.  Key findings:

    \begin{itemize}
        \item \textbf{Computational Overhead} (Section \ref{appendix:time_analysis}). \texttt{ALSO}'s overhead is insignificant compared to training with \texttt{AdamW}.
        \item \textbf{Hyperparameter Sensitivity} (Section \ref{appendix:hyperparams_sensitivity}). \texttt{ALSO} is stable across a wide range of hyperparameters ($\gamma_\pi, \lambda$), indicating it requires minimal tuning.
        \item \textbf{Design Choices} (Section \ref{appendix:design_choices}). We validate that our design, including momentum ($\alpha$) and a non-adaptive $\pi$ update (Alg. \ref{algorithm:also_optimistic}), is a robust and effective design choice.
        \item \textbf{Tuning Comparison} (Section \ref{appendix:ablation_hparams}). We show \texttt{ALSO}'s performance gain is not due to better hyperparameter tuning by running \texttt{AdamW} with its parameters.
    \end{itemize}

\section{Limitations}

    The main limitation of our approach is the problem we tackle. If one has data heterogeneity or needs distributional robustness, \texttt{ALSO} application is reasonable, otherwise it seems redundant to apply any DRO method. The method's effectiveness relies on meaningful data grouping; if groups are formed arbitrarily, \texttt{ALSO} is unlikely to provide gains. Additionally, usage of DRO methods raises such societal risks as bias and fairness (distribution robustness could inadvertently amplifying biases present in the data), ethical trade-offs (balancing groups' interests involves ethical judgments that must be made transparently). While DRO methods in general and \texttt{ALSO} in particular offer significant potential to the DL community, their integration requires careful consideration of these risks.

\section{Related Work}
\label{sec:vi}
    \textbf{Adaptive methods.} Adaptive optimization is central to modern Deep Learning, where methods such as Adagrad \citep{streeter2010less, duchi2011adaptive}, RMSProp \citep{tieleman2012lecture}, and Adam \citep{kingma2014adam} improve training by adjusting learning rates based on gradient history. Numerous variants extend Adam, e.g., NAdam \citep{dozat2016incorporating}, AMSGrad \citep{reddi2019convergence}, AdamW \citep{loshchilov2017decoupled}. However, these methods target minimization, while many DL problems are more naturally expressed as saddle-point formulations, which require different techniques \citep{browder1966existence, nemirovski2004prox, korpelevich1976extragradient, popov1980modification}. Recent works \citep{daskalakis2017training, gidel2018variational, mertikopoulos2018optimistic, chavdarova2019reducing, liang2019interaction, peng2020training} adapted Adam-like schemes to these settings, demonstrating strong empirical results but relying on limited theory. More rigorous studies have since appeared, e.g. AdaGrad variants \citep{liu2019towards}, Adam-type analyses \citep{dou2021one}, and scaled adaptive methods \citep{beznosikov2022scaled}. Nonetheless, existing results largely focus on the convex-concave Euclidean case and are insufficient for addressing non-convex, distributionally robust objectives such as our formulation \eqref{eq:adv_pi_problem}.

    \textbf{Weighting in Deep Learning.} The idea of weighting each training example has been well studied in the literature \citep{byrd2019effect}. Basic examples of these techniques are the classical method in statistics -- importance sampling \citep{kahn1953methods} -- and AdaBoost \citep{freund1997decision}, where harder examples are selected to train subsequent classifiers. The main applications of loss weighting are learning from unbalanced data \citep{he2009learning, lin2017focal}, continual learning,  which often involves re-weighting past and current samples to ensure that earlier knowledge is not forgotten \citep{aljundi2019gradient}. Another application is making the training process more stable and robust \citep{pang2019libra, bi2022pangu, kendall2017uncertainties, ren2018learning}. There are different approaches for weights assignment: based on specific tasks \citep{pang2019libra, bi2022pangu}, use heuristics for weighting \citep{lin2017focal, dong2017class}, employ a meta-learning approach \citep{ren2018learning, jiang2018mentornet}. Another aspect is non-uniform sampling, which selects examples with varying probabilities to improve optimization. For instance, it has improved convergence in randomized Kaczmarz methods \citep{needell2015randomized}, enhanced stochastic optimization in prox-SMD/SDCA algorithms \citep{zhao2015stochastic}, and been used in SGD variants based on individual example loss \citep{loshchilov2015online}.

\section{Conclusion}

    This paper introduces \texttt{ALSO}, an adaptive optimizer designed to bridge the gap between Distributionally Robust Optimization (DRO) and practical Deep Learning. By incorporating adaptive updates and support for standard batching even with group-based weighting, \texttt{ALSO} effectively addresses the common need to handle data heterogeneity. We provide theoretical convergence guarantees in a stochastic, non-convex setting and demonstrate through extensive experiments across diverse tasks with different challenging types of heterogeneity that \texttt{ALSO} consistently outperforms both standard DL and existing DRO methods. Our work establishes \texttt{ALSO} as a powerful and practical tool for improving the robustness and performance of Deep Learning models in challenging, heterogeneous scenarios.

\end{mainpart}

\begin{appendixpart}
\section{Sampling variants}
\label{appendix:sampling}
In this section, we explore several object sampling strategies for \texttt{ALSO} and demonstrate that each produces unbiased estimates of the gradients for problem \eqref{eq:adv_pi_problem}.  For clarity in our analysis of unbiasedness, we consider a batch size $B=1$ (since batch elements are sampled independently), and we omit iteration indices, using notation $(i, j)$
to represent object indices.

\textbf{Uniform Sampling Across All Objects}. We first examine the sampling approach presented in Lines \ref{alg:also_sampling_line}, \ref{alg:also_theta_grad}, \ref{alg:also_pi_grad} of Algorithm \ref{algorithm:also_optimistic}. Here, a pair $(i, j)$ is sampled with probability $\frac{1}{n}$. This yields:

$$\mathbb{E}g = \mathbb{E}\left(c \pi_i \nabla f_{i, j}(\theta)\right)= \sum_{(i, j)} \frac{c}{n}\pi_i\nabla f_{i, j}(\theta) = \sum_{i=1}^c \pi_i \left(\frac{c}{n }\sum_{j=1}^{n_i}\nabla f_{i, j}(\theta)\right)$$

$$\mathbb{E}p = \mathbb{E}\left(c e_i \cdot f_{i, j}(\theta)\right)= \sum_{(i, j)} \frac{c}{n}e_i\cdot f_{i, j}(\theta) = \sum_{i=1}^c e_i \left(\frac{c}{n }\sum_{j=1}^{n_i}f_{i, j}(\theta)\right)$$

Now let us compute the variance bound:

$$\mathbb{E}_{k,l}\|c \pi_k \nabla f_{k, l}(\theta) -  \sum_{i=1}^c \pi_i \left(\frac{c}{n }\sum_{j=1}^{n_i}\nabla f_{i, j}(\theta)\right) \|^2 =$$
$$ = \sum_{(k,l)} \frac{1}{n}\|c \pi_k \nabla f_{k, l}(\theta) -  \sum_{i=1}^c \pi_i \left(\frac{c}{n }\sum_{j=1}^{n_i}\nabla f_{i, j}(\theta)\right) \|^2 = $$
$$= \sum_{(k,l)} \frac{1}{n}\|\frac{c}{n }\sum_{i=1}^c \sum_{j=1}^{n_i}\left(\pi_i\nabla f_{i, j}(\theta) - \pi_k\nabla f_{k,l}(\theta)\right) \|^2 \leq$$
$$\leq \sum_{(k,l)} \frac{c^2}{n^3}\left(\sum_{i=1}^c \sum_{j=1}^{n_i}\|\pi_i\nabla f_{i, j}(\theta) - \pi_k\nabla f_{k,l}(\theta) \|\right)^2 \leq$$
$$\leq \sum_{(k,l)} \frac{c^2}{n^3}\left(\sum_{i=1}^c \sum_{j=1}^{n_i}\left(\pi_i\|\nabla f_{i, j}(\theta)\| + \pi_k\|\nabla f_{k,l}(\theta) \|\right)\right)^2$$
Since $\|\nabla f_{i, j}(\theta)\| \leq K_{i,j} \leq \max_{i,j} K_{i,j} =: K$:
$$\mathbb{E}_{k,l}\|c \pi_k \nabla f_{k, l}(\theta) -  \sum_{i=1}^c \pi_i \left(\frac{c}{n }\sum_{j=1}^{n_i}\nabla f_{i, j}(\theta)\right) \|^2 \leq \sum_{(k,l)} \frac{c^2}{n^3}\left(\sum_{i=1}^c \sum_{j=1}^{n_i}\left(\pi_i + \pi_k\right)2K\right)^2 =$$
$$= \sum_{(k,l)} \frac{c^2}{n^3}\left(\sum_{i=1}^c \left(\pi_i + \pi_k\right)2n_iK\right)^2$$
Using Cauchy-Schwarz inequality:
$$\sum_{(k,l)} \frac{c^2}{n^3}\left(\sum_{i=1}^c \left(\pi_i + \pi_k\right)2n_iK\right)^2 \leq \sum_{(k,l)} \frac{4c^2K^2}{n^3}\left(\sum_{i=1}^c \left(\pi_i + \pi_k\right)^2 \sum_{i=1}^c n_i^2\right)$$
Since $(a+b)^2 \leq 2a^2+2b^2$:
$$\sum_{(k,l)} \frac{4c^2K^2}{n^3}\left(\sum_{i=1}^c \left(\pi_i + \pi_k\right)^2 \sum_{i=1}^c n_i^2\right) \leq \sum_{(k,l)} \frac{8c^2K^2 \sum_{i=1}^c n_i^2}{n^3}\sum_{i=1}^c \left(\pi_i^2 + \pi_k^2\right) =$$
$$=\sum_{(k,l)} \pi_k^2\frac{8c^3K^2 \sum_{i=1}^c n_i^2}{n^3}\sum_{i=1}^c \pi_i^2 $$
Since $p_i \in \Delta_{c-1} \Rightarrow \sum_{i=1}^c \pi_i^2 \leq1$:
$$\sum_{(k,l)} \pi_k^2\frac{8c^3K^2 \sum_{i=1}^c n_i^2}{n^3}\sum_{i=1}^c \pi_i^2 \leq \sum_{k=1}^c\sum_{l=1}^{n_k} \pi_k^2\frac{8c^3K^2 \sum_{i=1}^c n_i^2}{n^3} \leq \frac{8c^3K^2 \sum_{i=1}^c n_i^2}{n^3} \sum_{k=1}^c n_k \pi_k^2 \leq$$
$$\leq \frac{8c^3K^2 \sum_{i=1}^c n_i^2}{n^3} \sum_{k=1}^c n_k = \frac{8c^3K^2 \sum_{i=1}^c n_i^2}{n^2}$$

Now we will similarly consider $p$:

$$\mathbb{E}_{k,l}\|c e_k f_{k, l}(\theta) -  \sum_{i=1}^c \left(\frac{c}{n}e_i\sum_{j=1}^{n_i} f_{i, j}(\theta)\right) \|^2 = \sum_{(k,l)} \frac{1}{n} \|\frac{c}{n}\sum_{i=1}^c\sum_{j=1}^{n_i} \left(e_k f_{k,l}(\theta) - e_i f_{i,j}(\theta)\right)\|^2 \leq$$
$$\leq \sum_{(k,l)} \frac{c^2}{n^3} \left(\sum_{i=1}^c\sum_{j=1}^{n_i} \|e_k f_{k,l}(\theta) - e_i f_{i,j}(\theta)\|\right)^2 = $$
$$= \sum_{(k,l)} \frac{c^2}{n^3} \left(\sum_{(i,j)} \|e_k f_{k,l}(\theta) - e_k f_{k,l}(\theta^*) + e_k f_{k,l}(\theta^*) - e_i f_{i,j}(\theta^*) + e_i f_{i,j}(\theta^*) - e_i f_{i,j}(\theta)\|\right)^2$$
where $\theta^* = \argmin_{\theta \in \mathbb{R}^d} \max_{\pi \in \Delta_{c-1}}h(\theta, \pi)$. Thus:
$$\mathbb{E}_{k,l}\|c e_k f_{k, l}(\theta) -  \sum_{i=1}^c \left(\frac{c}{n}e_i\sum_{j=1}^{n_i} f_{i, j}(\theta)\right) \|^2 \leq \sum_{(k,l)} \frac{c^2}{n^3} (\sum_{(i,j)} \|e_k f_{k,l}(\theta) - e_k f_{k,l}(\theta^*)\| + \|e_k f_{k,l}(\theta^*)\|$$
$$+ \|e_i f_{i,j}(\theta^*)\| + \|e_i f_{i,j}(\theta^*) - e_i f_{i,j}(\theta)\|)^2$$

Since $\|e_i f_{i,j}(\theta^*) - e_i f_{i,j}(\theta)\| = \|e_i(f_{i,j}(\theta^*) - f_{i,j}(\theta))\| = |f_{i,j}(\theta^*) - f_{i,j}(\theta)| \leq K_{i,j}\|\theta^* - \theta\| \leq K ||\theta^*-\theta||$ and $\|e_k f_{k,l}(\theta^*)\| = |f_{k,l}(\theta^*)| \leq \max_{k,l}|f_{k,l}(\theta^*)| =: G$:
$$\mathbb{E}_{k,l}\|c e_k f_{k, l}(\theta) -  \sum_{i=1}^c \left(\frac{c}{n}e_i\sum_{j=1}^{n_i} f_{i, j}(\theta)\right) \|^2 \leq \sum_{(k,l)} \frac{c^2}{n^3} \left(\sum_{(i,j)} (2G + 2K\|\theta - \theta^*\|)\right)^2 = $$
$$= \sum_{(k,l)} \frac{c^2}{n^3} \left(n (2G + 2K\|\theta - \theta^*\|)\right)^2 =\frac{c^2}{n^3}n^3 (2G + 2K\|\theta - \theta^*\|)^2 =$$
$$= c^2(2G + 2K\|\theta - \theta^*\|)^2 \leq 8c^2 (G^2+K^2\|\theta-\theta^*\|)$$

Thus:
$$\sigma^2 \leq \max\left\{\frac{8c^3K^2 \sum_{i=1}^c n_i^2}{n^2}, 8c^2 (G^2+K^2\|\theta-\theta^*\|)\right\} = \mathcal{O}(K^2)$$

\textbf{Two-Stage Group-Object Sampling}. An alternative approach involves a two-stage sampling process: first sample a group index with uniform probability $\frac{1}{c}$, then sample an object from this group with uniform probability $\frac{1}{n_i}$. This gives a probability $\frac{1}{c n_i}$ for selecting object $(i, j)$. To maintain unbiased gradient estimates, we modify the scaling in Lines \ref{alg:also_theta_grad}, \ref{alg:also_pi_grad} as follows:

$$g = \frac{c^2n_i}{n}\pi_i \nabla f_{i, j}(\theta)$$

$$p = \frac{c^2n_i}{n}e_i \cdot f_{i, j}(\theta)$$

Then

$$\mathbb{E}g = \mathbb{E}\left(\frac{c^2n_i}{n}\pi_i \nabla f_{i, j}(\theta)\right)= \sum_{i=1}^c \sum_{j=1}^{n_i} \frac{c^2n_i}{n}\pi_i \nabla f_{i, j}(\theta) \frac{1}{cn_i} = \sum_{i=1}^c \pi_i \left(\frac{c}{n }\sum_{j=1}^{n_i}\nabla f_{i, j}(\theta)\right)$$

$$\mathbb{E}p = \mathbb{E}\left(\frac{c^2n_i}{n}e_i \cdot f_{i, j}(\theta)\right)= \sum_{i=1}^c \sum_{j=1}^{n_i} \frac{c^2n_i}{n}e_i\cdot f_{i, j}(\theta) \frac{1}{cn_i} = \sum_{i=1}^c e_i \left(\frac{c}{n }\sum_{j=1}^{n_i}\nabla f_{i, j}(\theta)\right)$$

\textbf{Probability-Weighted Group Sampling}. A third variant samples group $i$ according to its weight $\pi_i$, i.e. $i \sim Cat(\pi)$ followed by uniform sampling of $j$ with probability $\frac{1}{n_i}$. This gives a selection probability of $\frac{\pi_i}{n_i}$ for pair $(i, j)$. We adjust the scaling factors as:

$$g = \frac{cn_i}{n} \nabla f_{i, j}(\theta)$$

$$p =  \frac{cn_i}{n\pi_i}e_i \cdot f_{i, j}(\theta)$$

Then

$$\mathbb{E}g = \mathbb{E}\left(\frac{cn_i}{n} \nabla f_{i, j}(\theta)\right)= \sum_{i=1}^c \sum_{j=1}^{n_i} \frac{cn_i}{n\pi_i}\pi_i \nabla f_{i, j}(\theta) \frac{\pi_i}{n_i} = \sum_{i=1}^c \pi_i \left(\frac{c}{n }\sum_{j=1}^{n_i}\nabla f_{i, j}(\theta)\right)$$

$$\mathbb{E}g = \mathbb{E}\left(\frac{cn_i}{n\pi_i}e_i \cdot f_{i, j}(\theta)\right)= \sum_{i=1}^c \sum_{j=1}^{n_i} \frac{cn_i}{n\pi_i}e_i \cdot f_{i, j}(\theta) \frac{\pi_i}{n_i} = \sum_{i=1}^c e_i \left(\frac{c}{n }\sum_{j=1}^{n_i} f_{i, j}(\theta)\right)$$

\textbf{Note.} We employ the third sampling technique in our Robust Training experiments (see Section \ref{sec:robust_adversarial}). To see it let $n_i=k$ $\forall i$, where $k$ is the dataset length. In this scenario $n=c \cdot k$ is effective dataset size (i.e. attacked object can be considered as separate object). Since $j$ is independent of $i$ now we can reverse sampling order: first sample $j$, then sample $i$. This implementation -- sample objects and then sample attacks for them -- allows seamlessly integrate \texttt{ALSO} into a standard training procedure.

\newpage
\section{Missing Experiment Details}
\label{appendix:experimental_summary}
\subsection{Unbalanced Data Details (Section \ref{sec:unbalanced_data})}
\label{appendix:unbalanced_cifar}

\textbf{Baselines description.} Now, let us discuss described basic imbalance handling techniques. The first of these techniques is known as \textit{upsampling} \citep{he2009learning, kahn1953methods}, the idea is to sample objects for gradient calculation at the current optimization step not uniformly, but proportionally to the class ratio of each object in the training dataset. For the $\hat{\pi}$ regularizer in the problem \eqref{eq:adv_pi_problem}, we utilize this modified distribution instead of the vanilla uniform distribution $\mathcal{U}(\overline{1, n})$. This choice results in a significant improvement in the performance.
The second technique is called \textit{static weights} \citep{he2009learning}. 
Its idea is similar to the previous method, however, instead of modifying the sampling distribution, objects are sampled uniformly. The class imbalance is then addressed by multiplying the loss function for each object by a weight equal to the inverse ratio of the number of objects belonging to that class in the training dataset.

\textbf{Data preprocessing.}
For all optimizers the same preprocessing was used for fair comparison. We modified the images from CIFAR-10 train dataset with Normalizing and classical computer vision augmentations: Random Crop \citep{takahashi2019data}, Random  Horizontally Flip.

\textbf{Training neural networks.}
We use cross-entropy as the loss function.
We do not apply learning rate schedules since we tune hyperparameters.
We use a predefined batch size equal to $64$ and maximum number of epochs equal to $20$.

\textbf{Hyperparameter tuning.}
Hyperparameter tuning is performed with the TPE sampler (200 iterations) with $5$ epoch from the Optuna package \citep{akiba2019optuna}. Hyperparameter tuning spaces for experiment are provided in Table \ref{table:unbalaced_cifar_hyperparams}.

\begin{table}[h!]
\centering
\resizebox{\textwidth}{!}{
\begin{tabular}{lll}
    \toprule
    Parameter           & Distribution \\
     \midrule
    Learning rate       & $\mathrm{LogUniform}[1e\text{-}4, 1e\text{-}2]$ \\
    Weight decay        & $\mathrm{LogUniform}[1e\text{-}6, 1e\text{-}2]$ \\
     \midrule
    $\pi$-Learning rate ($\gamma_\pi$ from \texttt{ALSO}, used for \texttt{ALSO}, \texttt{DRAGO})      & $\mathrm{LogUniform}[1e\text{-}5, 1e\text{-}3]$ \\
    
    $\pi$-regularization ($\lambda$ from \texttt{ALSO}, used for \texttt{ALSO}, \texttt{DRAGO}, \texttt{RECOVER}, \texttt{Spectral Risk})       & $\mathrm{LogUniform}[1e\text{-}3, 1]$ \\
    \bottomrule
\end{tabular}}
\caption{The hyperparameter tuning space for unbalanced data experiment.}
\label{table:unbalaced_cifar_hyperparams}
\end{table}

\textbf{Evaluation.}
The tuned hyperparameters are evaluated under $20$ random seeds.
The mean test metric and its standard deviation over these random seeds are then used to compare algorithms as described in Section \ref{sec:unbalanced_data}.

\newpage
\subsection{Tabular Deep Learning Details (Section \ref{sec:tabdl})}
\label{appendix:tabdl}

\begin{table*}[!h]
\centering
\resizebox{\textwidth}{!}{
\begin{tabular}{@{}llccccccllc@{}}
\toprule
Name & \# Train & \# Validation & \# Test & \# Num & \# Bin & \# Cat & Task type & Metric & Heterogeniety & Batch size \\
\midrule
Sberbank Housing & $18\,847$ & $4\,827$ & $4\,647$ & $365$ & $17$ & $10$ & Regression & RMSE & Heavy-tailed & 256 \\
Ecom Offers & $109\,341$ & $24\,261$ & $26\,455$ & $113$ & $6$ & $0$ & Binclass & ROC AUC & Extreme shift & 1024 \\
Maps Routing & $160\,019$ & $59\,975$ & $59\,951$ & $984$ & $0$ & $2$ & Regression & RMSE & - &1024 \\
Homesite Insurance & $224\,320$ & $20\,138$ & $16\,295$ & $253$ & $23$ & $23$ & Binclass &ROC AUC & Class imbalance & 1024 \\
Cooking Time & $227\,087$ & $51\,251$ & $41\,648$ & $186$ & $3$ & $3$ & Regression & RMSE & Heavy-tailed & 1024 \\
Homecredit Default & $267\,645$ & $58\,018$ & $56\,001$ & $612$ & $2$ & $82$ & Binclass & ROC AUC & High uncertainty & 1024 \\
Delivery ETA & $279\,415$ & $34\,174$ & $36\,927$ & $221$ & $1$ & $1$ & Regression & RMSE & Non-symmetric & 1024 \\
Weather & $106\,764$ & $42\,359$ & $40\,840$ & $100$ & $3$ & $0$ & Regression & RMSE & Non-symmetric  & 1024 \\
Churn Modelling & $6\,400$ & $1\,600$ & $2\,000$ & $10$ & $3$ & $1$ & Binclass & ROC AUC & Noisy data &128 \\
California Housing & $13\,209$ & $3\,303$ & $4\,128$ & $8$ & $0$ & $0$ & Regression & RMSE & Heavy-tailed & 256 \\
Adult & $26\,048$ & $6\,513$ & $16\,281$ & $6$ & $1$ & $8$ & Binclass & ROC AUC & High uncertainty &256 \\
Higgs Small & $62\,751$ & $15\,688$ & $19\,610$ & $28$ & $0$ & $0$ & Binclass & ROC AUC & - &512 \\
Black Friday & $106\,764$ & $26\,692$ & $33\,365$ & $4$ & $1$ & $4$ & Regression & RMSE & Heavy-tailed & 512 \\
Microsoft & $723\,412$ & $235\,259$ & $241\,521$ & $131$ & $5$ & $0$ & Regression & RMSE & - & 1024 \\
\bottomrule
\end{tabular}
}
\caption{Properties of the datasets from \citep{gorishniy2024tabr, rubachev2024tabred}. “\# Num”, “\# Bin”, and “\# Cat” denote the number of numerical, binary, and categorical features, respectively}

\label{table:tabdl-datasets-properties}
\end{table*}

We mostly follow the experiment setup from \citep{gorishniy2024tabm}. As such, most of the text below is copied from \citep{gorishniy2024tabm}.

\textbf{Data preprocessing.}
For each dataset, for all optimizers, the same preprocessing was used for fair comparison.
For numerical features, by default, we used a slightly modified version of the quantile normalization from the Scikit-learn package \citep{pedregosa2011scikit} (see the source code), with rare exceptions when it turned out to be detrimental (for such datasets, we used the standard normalization or no normalization).
For categorical features, we used one-hot encoding.
Binary features (i.e. the ones that take only two distinct values) are mapped to $\{0,1\}$ without any further preprocessing.

\textbf{Training neural networks.}
We use cross-entropy for classification problems and mean squared error for regression problems as loss function.
We do not apply learning rate schedules.
We do not use data augmentations.
We apply global gradient clipping to $1.0$.
For each dataset, we used a predefined dataset-specific batch size.
We continue training until there are $\texttt{patience}$ consecutive epochs without improvements on the validation set; we set $\texttt{patience} = 16$.

\textbf{Hyperparameter tuning.}
In most cases, hyperparameter tuning is performed with the TPE sampler (100 iterations) from the Optuna package \citep{akiba2019optuna}.
Hyperparameter tuning spaces for experiment are provided in Table \ref{table:tabdl-hyperparams}.

\textbf{Evaluation.}
On a given dataset, for a given model, the tuned hyperparameters are evaluated under multiple (in most cases, $15$) random seeds.
The mean test metric and its standard deviation over these random seeds are then used to compare algorithms as described in Table \ref{table:tabdl-datasets-properties}.

\begin{table}[h!]
\centering
{\renewcommand{\arraystretch}{1.2}

\resizebox{\textwidth}{!}{
\begin{tabular}{lll}
    \toprule
    Parameter           & Distribution \\
    \midrule
    \# layers           & $\mathrm{UniformInt}[1,5]$ \\
    Width (hidden size) & $\mathrm{UniformInt}[64,1024]$ \\
    Dropout rate        & $\{0.0, \mathrm{Uniform}[0.0,0.5]\}$ \\
   
    n\_frequencies  & $\mathrm{UniformInt}[16,96]$ \\
    
    d\_embedding      & $\mathrm{UniformInt}[16,32]$ \\
    
    frequency\_init\_scale & $\mathrm{LogUniform}[1e\text{-}2, 1e\text{1}]$ \\
    
    \midrule
    Learning rate       & $\mathrm{LogUniform}[3e\text{-}5, 1e\text{-}3]$ \\
    
    Weight decay        & $\{0, \mathrm{LogUniform}[1e\text{-}4, 1e\text{-}1]\}$ \\
    
    $\pi$-Learning rate ($\gamma_\pi$ from \texttt{ALSO}, used for \texttt{ALSO}, \texttt{DRAGO})      & $\mathrm{LogUniform}[1e\text{-}5, 1e\text{-}3]$ \\
    
    $\pi$-regularization ($\lambda$ from \texttt{ALSO}, used for \texttt{ALSO}, \texttt{DRAGO}, \texttt{RECOVER}, \texttt{Spectral Risk})       & $\mathrm{LogUniform}[1e\text{-}3, 1]$ \\
    
    Size (used for \texttt{FastDRO})        & $\mathrm{Uniform}[0, 1]$ \\

    n\_draws (used for \texttt{Spectral Risk})        & $\mathrm{LogUniform}[1e\text{-}3, 1]$\\
    \bottomrule
\end{tabular}}
}
\caption{The hyperparameter tuning space for tabular Deep Learning experiment.}
\label{table:tabdl-hyperparams}
\end{table}

\newpage
\subsection{Robust Training to Adversarial Attacks (Section \ref{sec:robust_adversarial})}
\label{appendix:robust_adversarial}
We provide a detailed description of the experimental pipeline employed in our study. Our approach is based on a modified version of the \texttt{ALSO} pipeline. Specifically, at each iteration, we sample an index $i$ from a categorical distribution parameterized by $\pi^k$, apply the corresponding attack, and then proceed with the \texttt{ALSO} step. For comparison, the baseline pipeline consists of standard optimization using the \texttt{AdamW} optimizer, where the index $i$ is sampled from a uniform distribution.

The general procedure for each algorithm can be summarized as follows:
\begin{enumerate}
    \item Sample a mini-batch $(X^B_{\text{train}}, y^B_{\text{train}})$ from the training set $X_{\text{train}}$.
    \item Sample $i \sim \text{Categorical}(\pi^k)$ to select the attack for the batch, or sample $i$ from a uniform distribution in the baseline case.
    \item Perform an optimizer step to update $\theta$ and $\pi$ (if required)
\end{enumerate}
The hyperparameters used in our experiments are as follows: $\tau = 1$ (for DRO algorithms), $\gamma_\pi = 0.1$ for the \texttt{ALSO}, and a learning rate of $\text{lr} = 10^{-3}$ for all pipelines.

\begin{figure}[h!]
    \centering
    \includegraphics[width=0.8\linewidth]{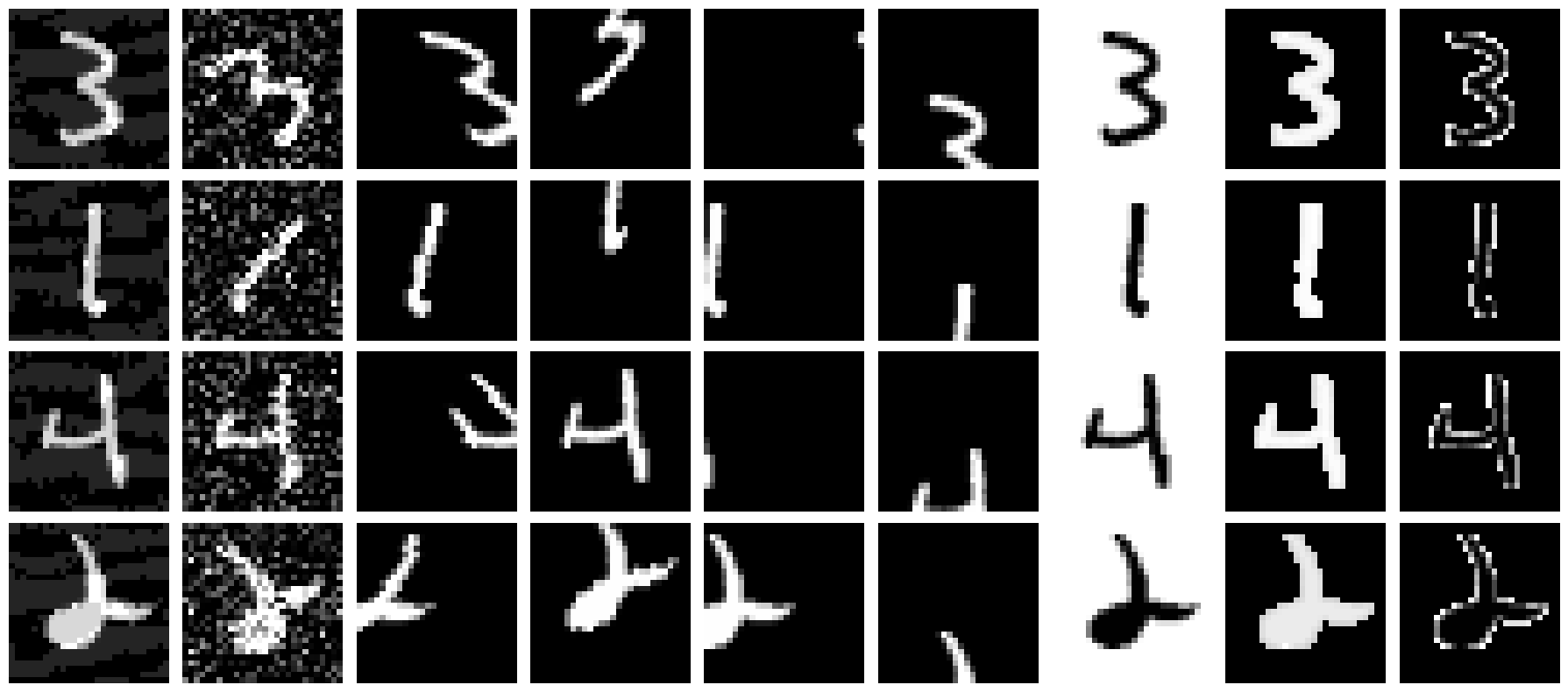}
        \caption{Examples of applied attacks to the test and train datasets.}
    \label{fig:adversarial_attacks}
\end{figure}

\subsection{Distributed Training Details (Section \ref{sec:distributed_learning})} \label{appendix:distributed_cifar}

\textbf{Data preprocessing.}
For all optimizers the same preprocessing was used for fair comparison. We modified the images from CIFAR-10 train dataset with Normalizing and classical computer vision augmentations: Random Crop \citep{takahashi2019data}, Random  Horizontally Flip.

\textbf{Parameter selection.} Different numbers of workers, different class distributions, and different class distributions among workers were considered during the experiments.

\textbf{Training neural networks.}
We use cross-entropy as the loss function. We use a predefined batch size equal to $1024$ and maximum number of epochs equal to $20$.

\newpage
\subsection{Split Learning (Section \ref{sec:split_learning})}
\label{appendix:split_learning}

\textbf{Split Learning Motivation.} The idea behind split learning is to train a shared encoder across multiple tasks distributed over different workers, while maintaining independent heads for each task’s predictions \citep{thapa2021advancements}. This approach enables collaborative training without sharing raw data, enhancing privacy, and reduces computational and communication overhead, making it suitable for low-resource or budget-constrained settings \citep{kim2020multiple}.

\textbf{Experiment Details.} First, we train a ResNet-18 model on the Food101 dataset \citep{food101} using the \texttt{AdamW} optimizer for 3 epochs. Next, we simulate the split learning process by introducing the Flowers102 dataset \citep{nilsback2008automated} into the training scheme. Training proceeds by alternating between datasets every epoch: one epoch on Food101, then one epoch on Flowers102, and so on. During each epoch, we train the shared encoder, while using separate linear heads for each dataset. For the DRO methods, we apply class weights for both datasets.

\textbf{Technical Details.} The default learning rate was set to $3 \times 10^{-4}$. Baseline hyperparameters were selected based on prior experiments and tuned over up to 5 iterations. The $\pi$-learning rate and $\pi$-decay parameters were kept at their default values of $1e\text{-}5$ and $1e\text{-}2$, respectively, without further tuning. All experiments were conducted on an NVIDIA Tesla V100 GPU.

\textbf{Comparison Details.} We compared the \texttt{ALSO} optimizer against baseline methods using the Accuracy@5 metric. Except for \texttt{DRAGO}, all DRO baselines improved accuracy on the newly introduced Flowers102 dataset at the expense of some accuracy loss on the original Food101 dataset, relative to \texttt{AdamW}. In contrast, \texttt{ALSO} outperformed \texttt{AdamW} on both datasets, demonstrating its ability to acquire out-of-domain knowledge without degrading performance on the initial task.

\newpage
\section{Ablation Study}
\label{appendix:ablation}

\subsection{\texttt{ALSO} Step Time Analysis}
\label{appendix:time_analysis}
\begin{figure}[h!]
    \centering
    \begin{minipage}{0.7\linewidth}
        \centering
        \includegraphics[width=\linewidth]{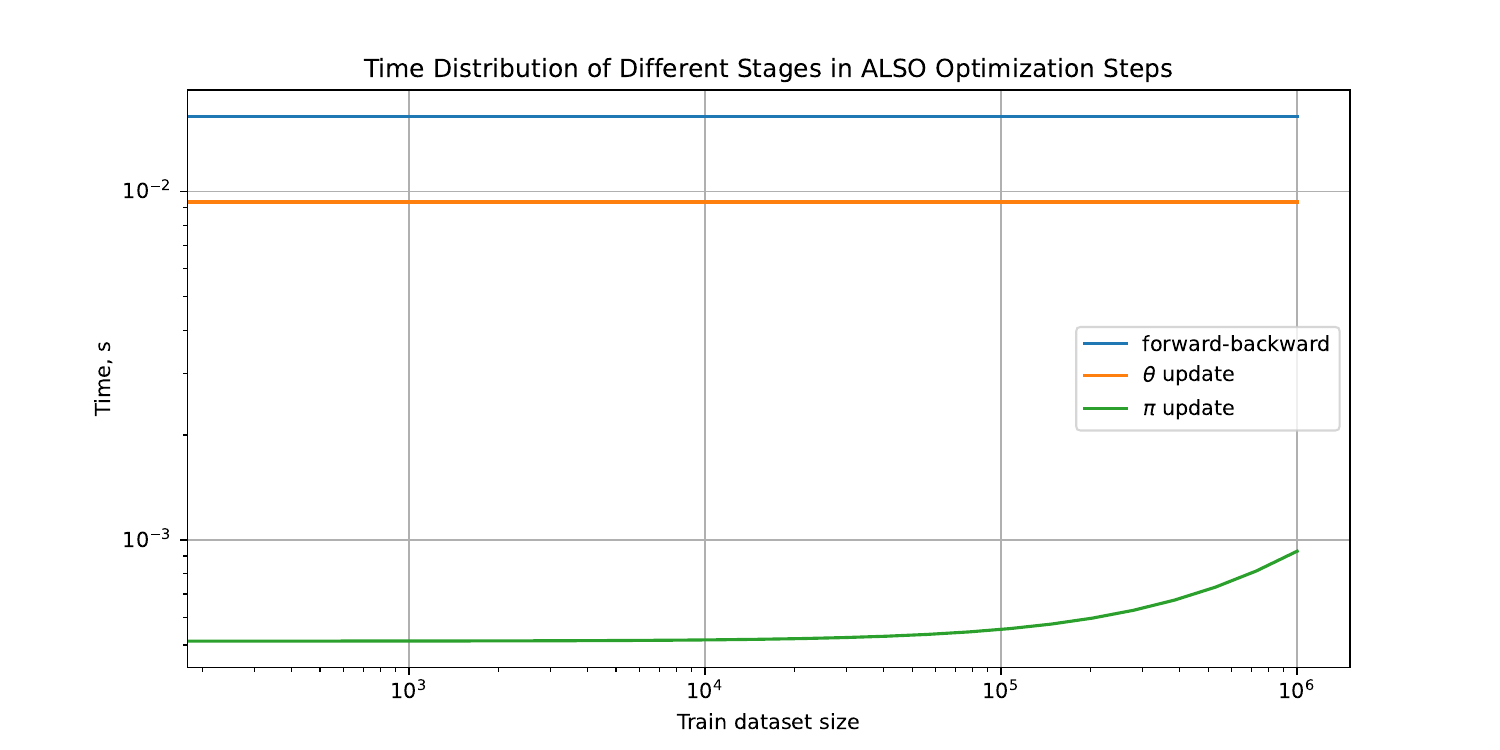}
    \end{minipage}
    \caption{Time distribution over dataset size of three main parts of optimization process with \texttt{ALSO}: gradient computation (forward-backward), $\theta$ update and $\pi$ update. The trained model is ResNet-18 with batch size. Time of each part is averaged across $25$ training steps. We want to highlight, that gradient computations are required for all first order optimization methods, and this measurement is used only for comparison. }
    \label{fig:time_analysis}
\end{figure}
To analyze the time consumption of each component in the optimization process with $\texttt{ALSO}$, we conduct an experiment training ResNet-18 \citep{he2016deep} with a fixed batch size of $64$ across various dataset sizes, measured time is averaged across $25$ iterations. This approach is chosen because while $\pi$ updates depend on dataset size, gradient computation and $\theta$ updates do not. We test dataset sizes up to 1 million samples, which exceeds our largest experimental dataset, which contains approximately $800 000$ samples. The experiment was conducted on one NVIDIA GeForce RTX 2080 Ti GPU. We want to highlight, that gradient computations are required for all first order optimization methods, and this measurement is used  only for comparison. 

The results, presented in Figure \ref{fig:time_analysis}, reveal a clear hierarchy in computational demands. Gradient computation (forward-backward passes) consistently requires significantly more time than both $\theta$ and $\pi$ updates across all dataset sizes, which is consistent with \citep{jiang2021optimizer}. Furthermore, $\theta$ updates consistently demand more computational time than $\pi$ updates. This experiment leads to conclusion that the explicit weight vector update ($\pi$ update) is computationally negligible relative to the overall training step time.

\subsection{Hyperparameters sensitivity}
\label{appendix:hyperparams_sensitivity}

This ablation study examines \texttt{ALSO}'s sensitivity to its $\pi$-specific hyperparameters: the $\pi$-learning rate ($\gamma_\pi$) and $\pi$-regularization ($\lambda$). We conducted full 2D sweeps for both parameters, fixing model weight learning rates and regularization to isolate their impact. Results from the imbalanced data setting (Section \ref{sec:unbalanced_data}) show consistent performance across varying imbalance coefficients (Figure \ref{fig:unbalanced_rob}). Similarly, in Split Learning (Section \ref{sec:split_learning}), 2D sweeps confirm broad robustness on Food101 and Flowers102 datasets (Figure \ref{fig:split_rob}). Across all experiments, \texttt{ALSO} proves largely insensitive to $\gamma_\pi$ and $\lambda$ settings, suggesting strong performance is achievable without extensive tuning.

\begin{figure}[h!]
    \centering
    \includegraphics[width=0.45\linewidth]{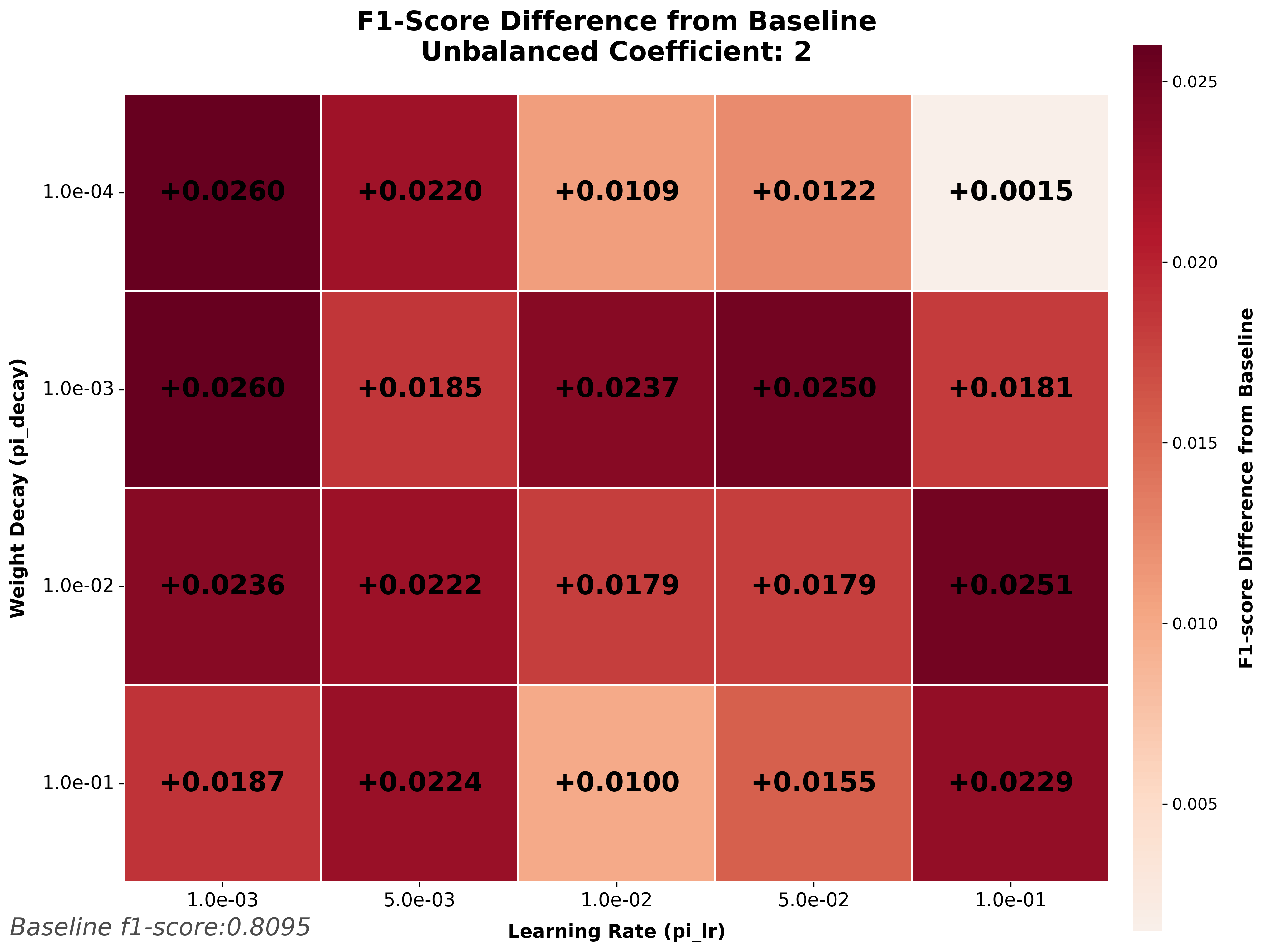}
    \includegraphics[width=0.45\linewidth]{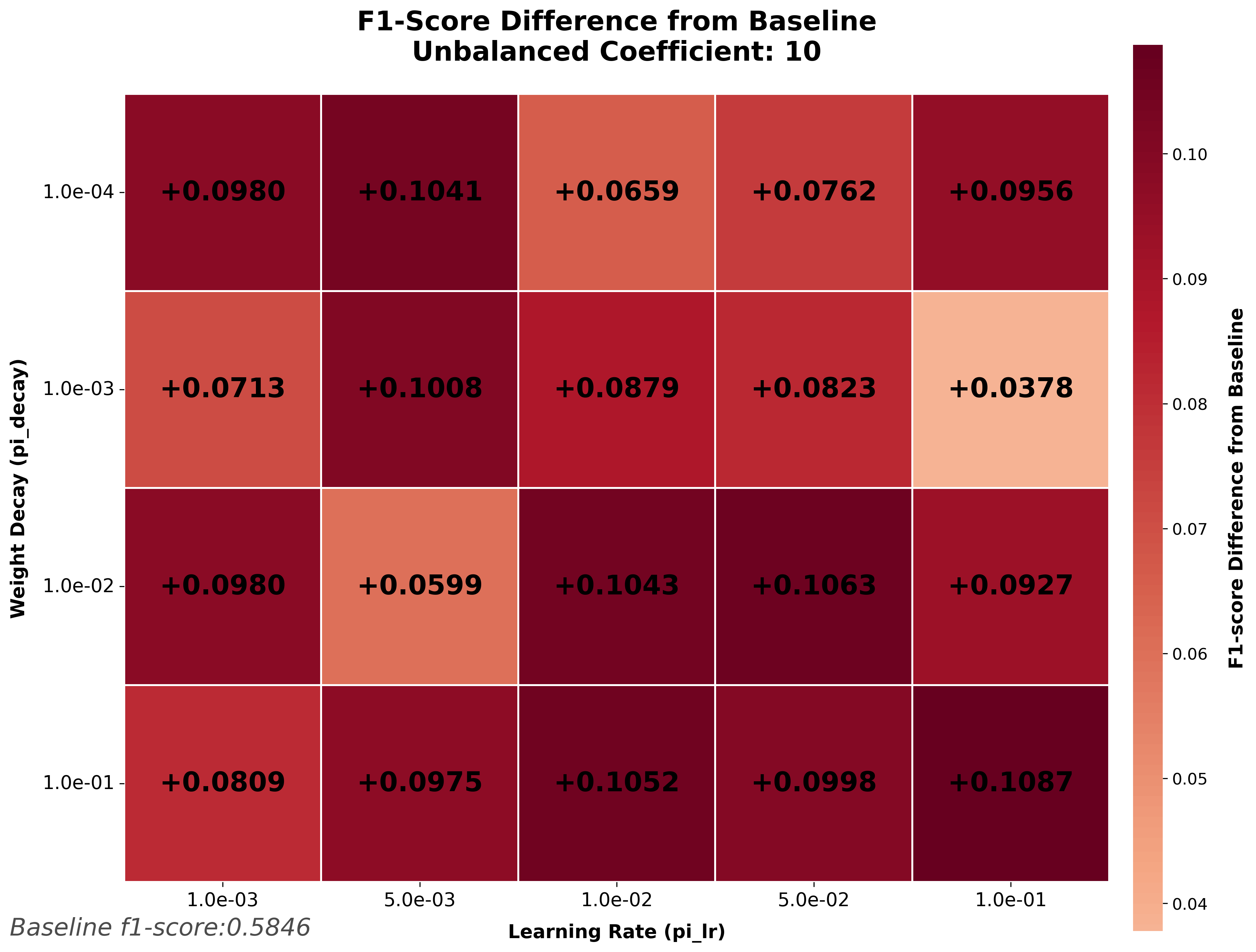}
    \caption{Robustness of \texttt{ALSO} to $\pi$-hyperparameters ($\Delta$F1-score vs. \texttt{AdamW} baseline). Each cell shows the F1-score difference between \texttt{ALSO} and \texttt{AdamW} with static weights (baseline), over a full 2D grid of $\pi$-learning rate ($\gamma_\pi$) and $\pi$-regularization ($\lambda$). All cells are red (positive $\Delta$F1), indicating that ALSO consistently outperforms the baseline across the entire grid and for different imbalance coefficients (2 and 10).}
    \label{fig:unbalanced_rob}
\end{figure}

\begin{figure}[h!]
    \centering
    \includegraphics[width=0.45\linewidth]{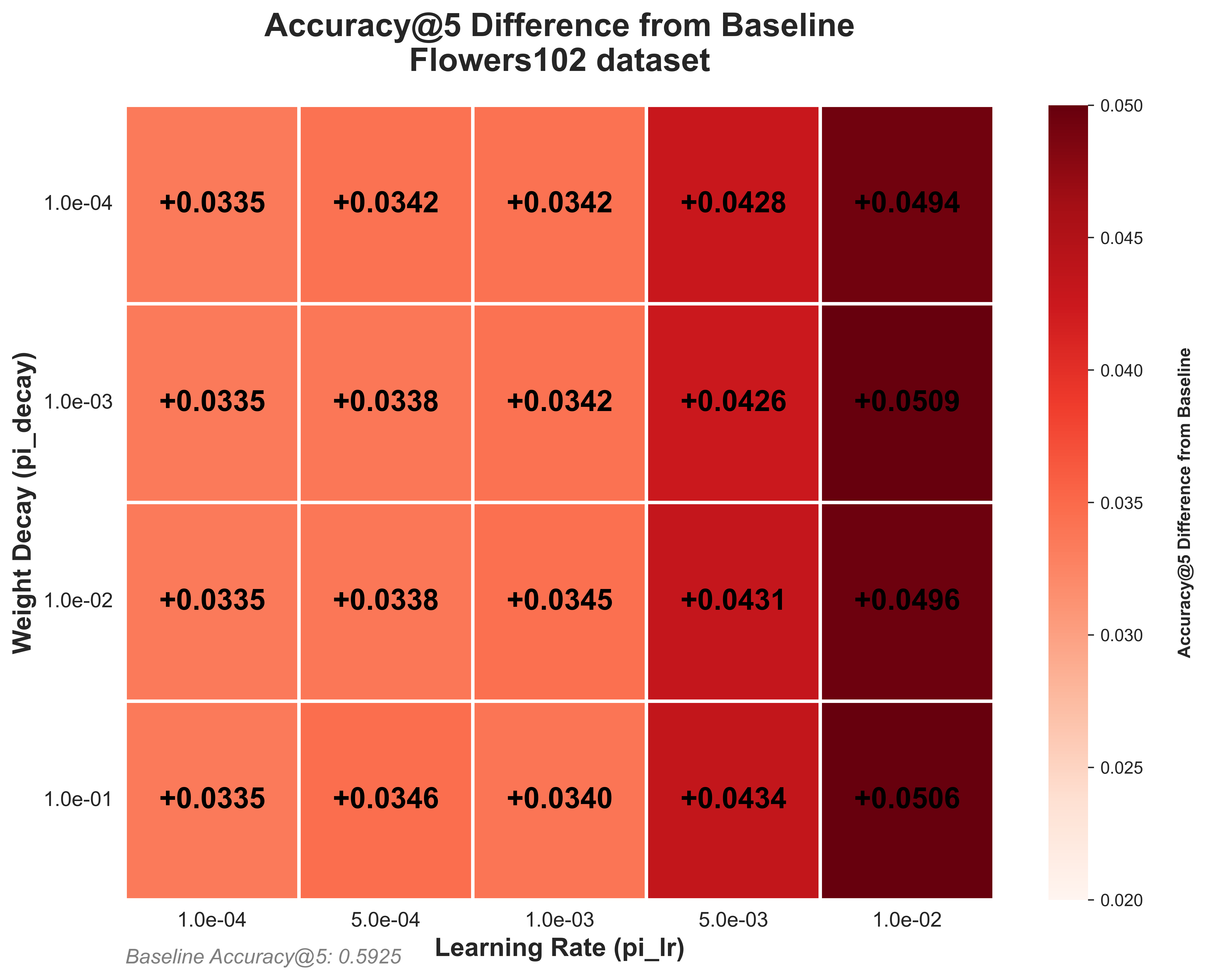}
    \includegraphics[width=0.45\linewidth]{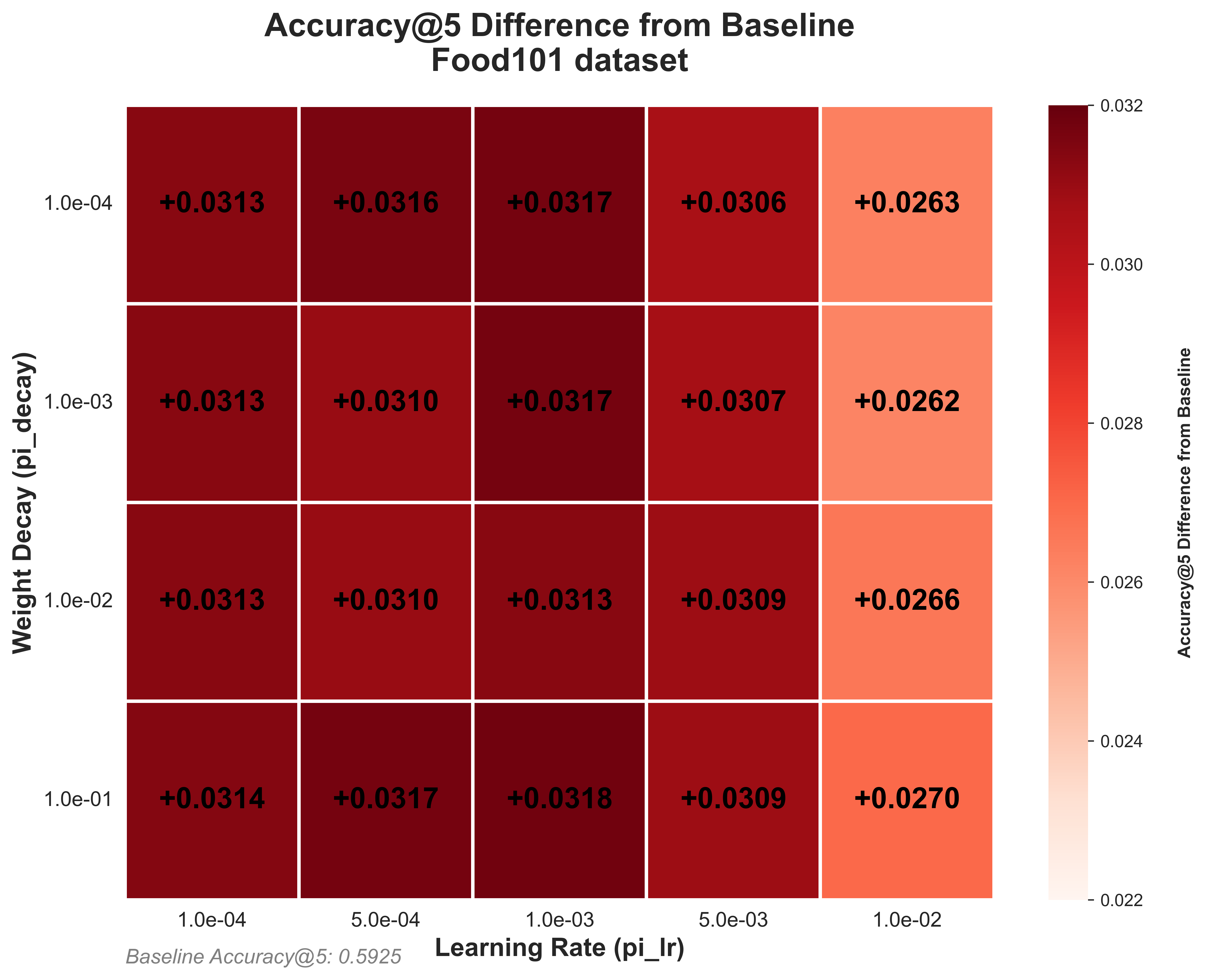}
    \caption{Robustness of \texttt{ALSO} to $\pi$-hyperparameters ($\Delta$F1-score vs. \texttt{AdamW} baseline). Each cell shows the F1-score difference between \texttt{ALSO} and \texttt{AdamW} with static weights (baseline), over a full 2D grid of $\pi$-learning rate ($\gamma_\pi$) and $\pi$-regularization ($\lambda$). All cells are red (positive $\Delta$F1), indicating that ALSO consistently outperforms the baseline across the entire grid and for both datasets from Split Learning section.}
    \label{fig:split_rob}
\end{figure}

\subsection{Design choices}
\label{appendix:design_choices}

This section presents an empirical evaluation of key design choices in the proposed algorithm, focusing on the optimistic step and the non-adaptive update rule for the parameter $\pi$. We compare the performance of three algorithm variants:

\begin{enumerate}
\item Vanilla \texttt{ALSO}: The standard implementation of the proposed algorithm (Algorithm \ref{algorithm:also_optimistic}).

\item Descent-Ascent \texttt{ALSO} ($\alpha = 0$): A variant where the optimistic step is removed by setting the optimistic coefficient $\alpha$ to zero.

\item \texttt{A$^\pi$LSO}: A modified version of \texttt{ALSO} that employs the Adam optimizer for updating the weight vector $\pi$.
\end{enumerate}

The algorithms were evaluated across three distinct experimental settings: Learning from Unbalanced Data (Section \ref{sec:unbalanced_data}), Tabular Deep Learning (Section \ref{sec:tabdl}), and Split Learning (Section \ref{sec:split_learning}). The results are summarized in Figure \ref{fig:unbalanced_cifar_appendix}, Table \ref{table:design-choices-tabdl} and Figure \ref{fig:split_learning_appendix}.

The Descent-Ascent variant has a significantly lower performance compared to the other two algorithms, indicating the importance of the optimistic step. The \texttt{A$^\pi$LSO} algorithm achieves comparable performance to vanilla \texttt{ALSO} in some scenarios (Table \ref{table:design-choices-tabdl}, Figure \ref{fig:split_learning_appendix}). However, in the Unbalanced Data experiment, \texttt{A$^\pi$LSO} demonstrates degraded performance when the unbalanced coefficient is large ($\geq 10$).

Considering both performance and ease of implementation, we recommend vanilla \texttt{ALSO} as a robust baseline. While \texttt{A$^\pi$LSO} can provide competitive results in certain settings, it introduces additional hyperparameters and computational overhead associated with the Adam optimizer for $\pi$. Therefore, \texttt{A$^\pi$LSO} may be considered when sufficient computational resources are available for hyperparameter tuning and multiple experimental runs.

\begin{figure}[!h]
    \centering
    \includegraphics[width=0.45\linewidth]{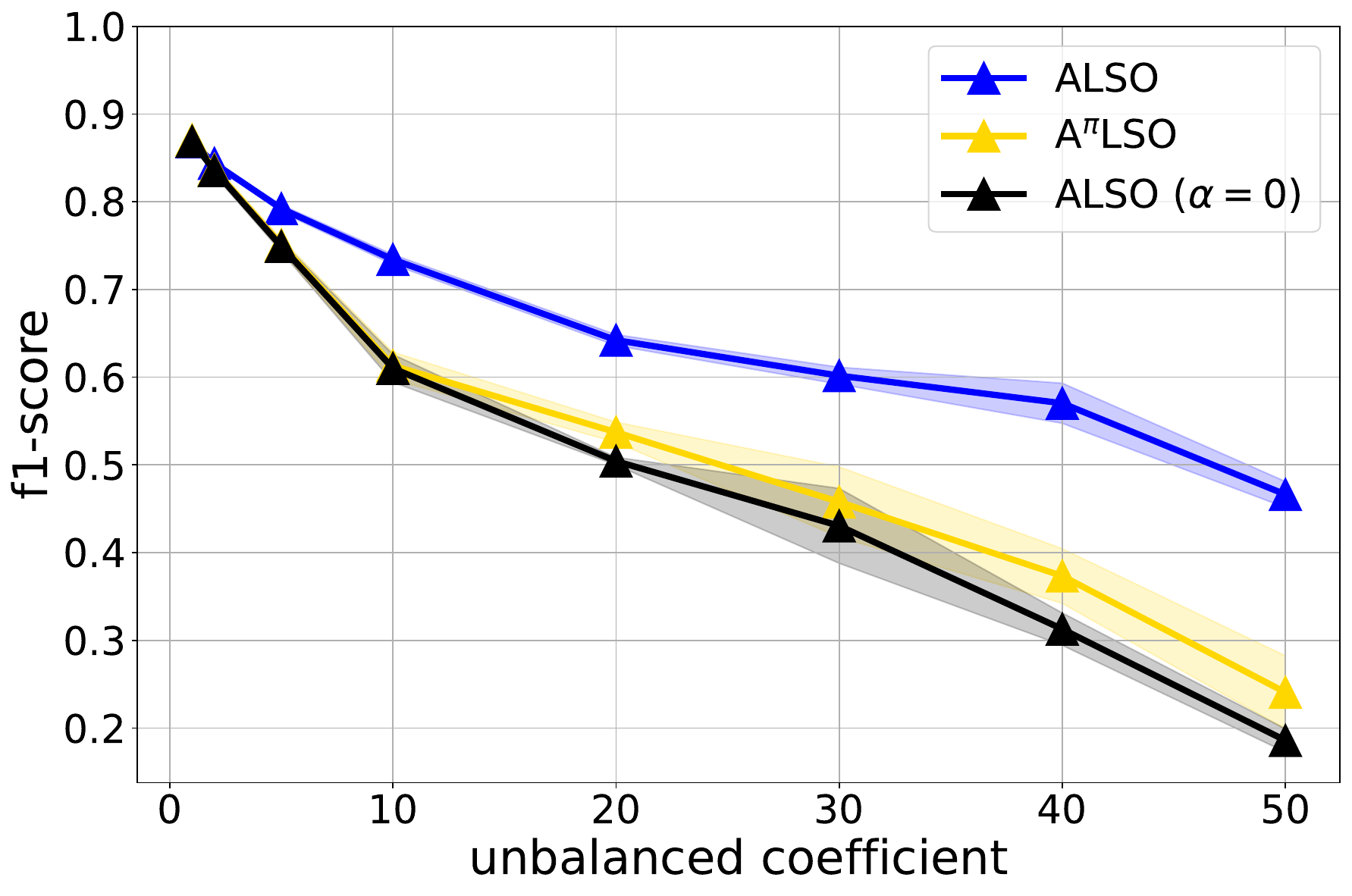}
    \caption{Performance comparison of \texttt{ALSO}, \texttt{ALSO} with $\alpha=0$ (descent-ascent), and \texttt{A$^\pi$LSO} (adaptive step over $\pi$) on the unbalanced CIFAR experiment from Section \ref{sec:unbalanced_data}. Hyperparameter tuning is performed in the same manner as in the main experiment.}
    \label{fig:unbalanced_cifar_appendix}
\end{figure}

\begin{figure}[h!]
    \centering
    \begin{minipage}{0.45\linewidth}
        \centering
        \includegraphics[width=\linewidth]{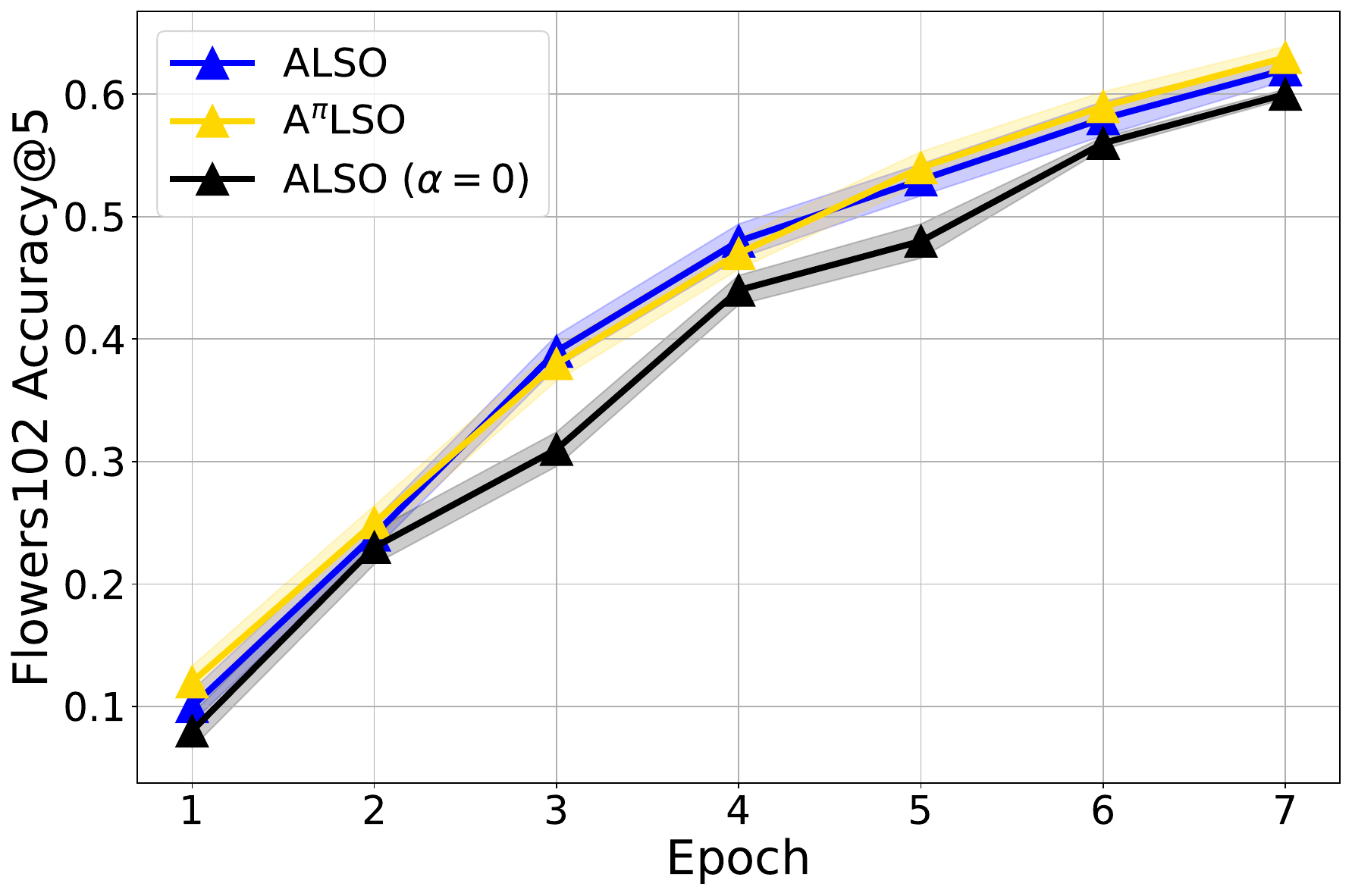}
    \end{minipage}
    \begin{minipage}{0.45\linewidth}
        \centering
        \includegraphics[width=\linewidth]{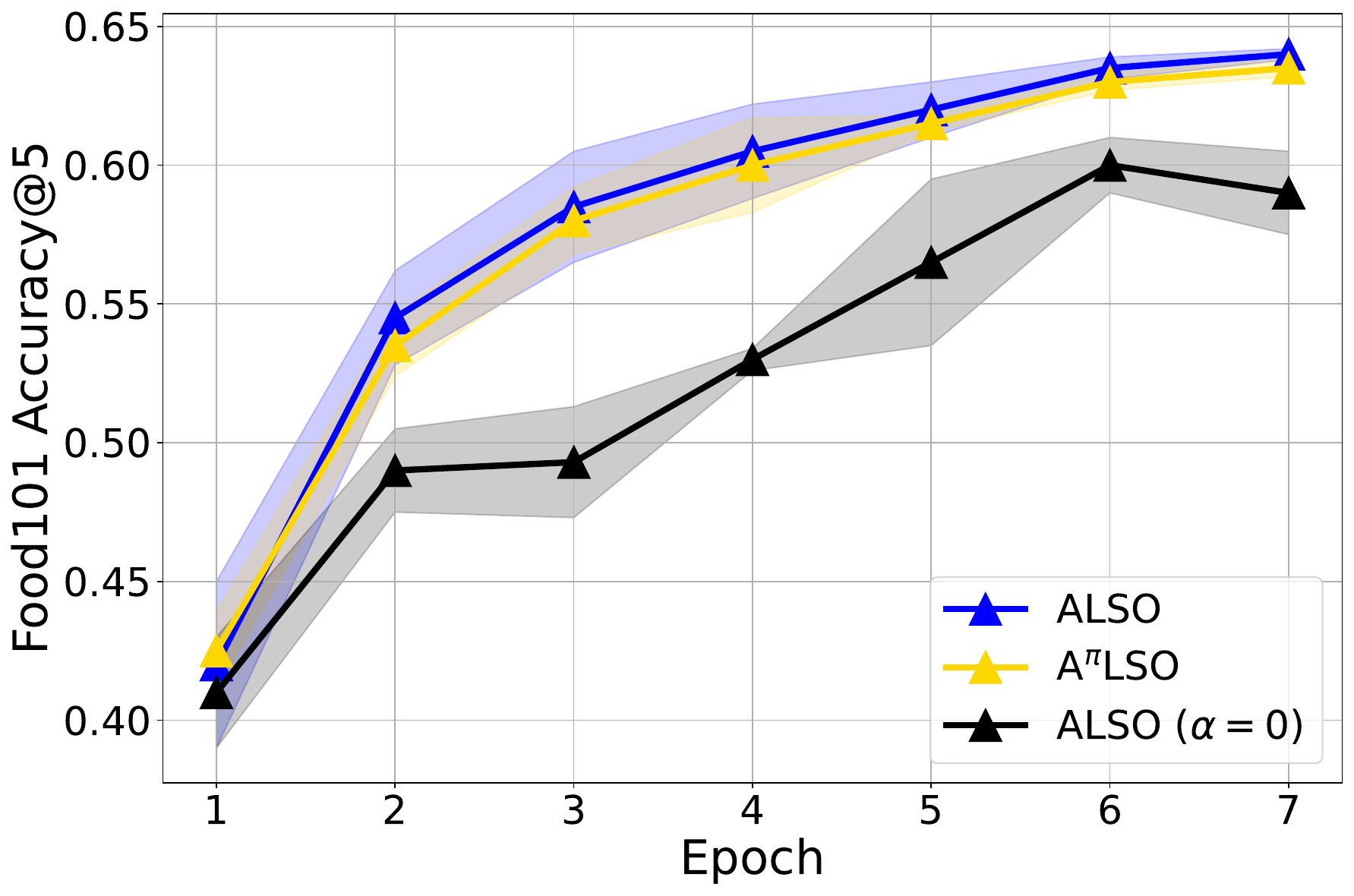}
    \end{minipage}
    \caption{Performance comparison of \texttt{ALSO}, \texttt{ALSO} with $\alpha=0$ (descent-ascent), and \texttt{A$^\pi$LSO} (adaptive step over $\pi$) on the Split Learning experiment from Section \ref{sec:split_learning}}
    \label{fig:split_learning_appendix}
\end{figure}

\begin{table*}[h!]
\centering
\resizebox{0.6\textwidth}{!}{
\begin{tabular}{@{}lcccccccc@{}}
\toprule
\textbf{Dataset} & \texttt{ALSO} & \texttt{ALSO} $\alpha=0$ & \texttt{A$^\pi$LSO} \\

\midrule Weather (RMSE $\downarrow$) & $\mathbf{1.4928 \pm 0.0042}$ & ${1.5209 \pm 0.0036}$ & $\underline{1.4967 \pm 0.0066}$ \\

\midrule Ecom Offers (ROC-AUC $\uparrow$) & $\mathbf{0.5976 \pm 0.0020}$ & $\underline{0.5975 \pm 0.0020}$ & ${0.5915 \pm 0.0087}$ \\

\midrule Cooking Time (RMSE $\downarrow$) & $\mathbf{0.4806 \pm 0.0003}$ & ${0.4810 \pm 0.0003}$ & $\mathbf{0.4806 \pm 0.0004}$  \\

\midrule Maps Routing (RMSE $\downarrow$) & $\underline{0.1612 \pm 0.0001}$ & ${0.1613} \pm {0.0002}$ & $\mathbf{0.1611 \pm 0.0001}$ \\

\midrule Homesite Insurance (ROC-AUC $\uparrow$) & $\mathbf{0.9632 \pm 0.0003}$ & $\underline{0.9630 \pm 0.0004}$ & ${0.9626 \pm 0.0003}$ \\

\midrule Delivery ETA (RMSE $\downarrow$) & $\underline{0.5513 \pm 0.0020}$ & ${0.5536 \pm 0.0030}$ & $\mathbf{0.5507 \pm 0.0011}$ \\

\midrule Homecredit Default (ROC-AUC $\uparrow$) & $\mathbf{0.8587 \pm 0.0012}$ & ${0.8587 \pm 0.0008}$ & $\mathbf{0.8587 \pm 0.0011}$ \\

\midrule Sberbank Housing (RMSE $\downarrow$) & $\underline{0.2424 \pm 0.0024}$ & ${0.2457 \pm 0.0044}$ & $\mathbf{0.2401 \pm 0.0073}$ \\

\midrule Black Friday (RMSE $\downarrow$) & $\underline{0.6842 \pm 0.0004}$ & $\underline{0.6843 \pm 0.0013}$ & $\mathbf{0.6838 \pm 0.0005}$ \\

\midrule Microsoft (RMSE $\downarrow$) & $\underline{0.7437 \pm 0.0003}$ & $\underline{0.7435 \pm 0.0003}$ & $\mathbf{0.7438 \pm 0.0003}$ \\

\midrule California Housing (RMSE $\downarrow$) & ${0.4495 \pm 0.0046}$ & ${0.4533 \pm 0.0043}$ & $\mathbf{0.4455 \pm 0.0032}$ \\
 
\midrule Churn Modeling (ROC-AUC $\uparrow$) & $\mathbf{0.8666 \pm 0.0027}$& ${0.8597 \pm 0.0076}$ & $\underline{0.8646 \pm 0.0019}$ \\

\midrule Adult (ROC-AUC $\uparrow$) & $\mathbf{0.8699 \pm 0.0001}$ & $\underline{0.8698 \pm 0.0002}$ & $\underline{0.8698 \pm 0.0014}$ \\
 
\midrule Higgs Small (ROC-AUC $\uparrow$) & $\underline{0.7280 \pm 0.0009}$  & $\underline{0.7279 \pm 0.0013}$ & $\mathbf{0.7288 \pm 0.0012}$ \\
\bottomrule
\end{tabular}
}
\caption{
Performance comparison of \texttt{ALSO}, \texttt{ALSO} $\alpha=0$ (descent-ascent) and \texttt{A$^\pi$LSO} (adaptive step over $\pi$). The trained model is MLP-PLR \citep{gorishniy2022embeddings}. Bold entries represent the best method on each dataset according to mean, underlined entries represent methods, which performance is best with standard deviations over 15 seeds taken into account. Metric is written near dataset name, $\uparrow$ means that higher values indicate better performance, $\downarrow$ means that lower values indicate better performance. Hyperparameter tuning is performed in the same manner as in the main experiment.
}
\label{table:design-choices-tabdl}
\end{table*}

\subsection{Tuning Comparison}
\label{appendix:ablation_hparams}

\begin{table*}[h!]
\centering
\resizebox{0.6\textwidth}{!}{
\begin{tabular}{@{}lcccccccc@{}}
\toprule
\textbf{Dataset} & \texttt{Adam} & \texttt{AdamW} & \texttt{ALSO} \\

\midrule Weather (RMSE $\downarrow$) & ${1.5199 \pm 0.0034}$ & ${1.5199 \pm 0.0034}$ & $\mathbf{1.4928 \pm 0.0042}$ \\

\midrule Ecom Offers (ROC-AUC $\uparrow$) & $\underline{0.5972 \pm 0.0020}$ & ${0.5717} \pm {0.0020}$ & $\mathbf{0.5976 \pm 0.0020}$ \\

\midrule Cooking Time (RMSE $\downarrow$) & ${0.4810 \pm 0.0005}$ & ${0.4810 \pm 0.0005}$ & $\mathbf{0.4806 \pm 0.0003}$  \\

\midrule Maps Routing (RMSE $\downarrow$) & ${0.1617} \pm {0.0002}$ & ${0.1625} \pm {0.0002}$ & $\mathbf{0.1612 \pm 0.0001}$ \\

\midrule Homesite Insurance (ROC-AUC $\uparrow$) & ${0.9614} \pm {0.0003}$ & ${0.9593} \pm {0.0005}$ & $\mathbf{0.9632 \pm 0.0003}$ \\

\midrule Delivery ETA (RMSE $\downarrow$) & ${0.5550 \pm 0.0021}$ & ${0.5544 \pm 0.0014}$ & $\mathbf{0.5513 \pm 0.0020}$ \\

\midrule Homecredit Default (ROC-AUC $\uparrow$) & $\underline{0.8581 \pm 0.0009}$ & $\underline{0.8581 \pm 0.0009}$ & $\mathbf{0.8585 \pm 0.0012}$ \\

\midrule Sberbank Housing (RMSE $\downarrow$) & ${0.2457 \pm 0.0046}$ & ${0.2455 \pm 0.0047}$ & $\mathbf{0.2424 \pm 0.0024}$ \\

\midrule Black Friday (RMSE $\downarrow$) & $\mathbf{0.6842 \pm 0.0006}$ & ${0.6869} \pm {0.0006}$ & $\mathbf{0.6842 \pm 0.0004}$ \\

\midrule Microsoft (RMSE $\downarrow$) & $\underline{0.7440 \pm 0.0002}$ & ${0.7442} \pm {0.0003}$ & $\mathbf{0.7437 \pm 0.0004}$ \\

\midrule California Housing (RMSE $\downarrow$) & ${0.4554 \pm 0.0034}$ & ${0.4734 \pm 0.0038}$ & $\mathbf{0.4495 \pm 0.0046}$ \\
 
\midrule Churn Modeling (ROC-AUC $\uparrow$) & ${0.8620 \pm 0.0075}$ & ${0.8618 \pm 0.0038}$ & $\mathbf{0.8666 \pm 0.0027}$ \\

\midrule Adult (ROC-AUC $\uparrow$) & ${0.8693 \pm 0.0010}$ & ${0.8689 \pm 0.0009}$ & $\mathbf{0.8699 \pm 0.0001}$ \\
 
\midrule Higgs Small (ROC-AUC $\uparrow$) & $\underline{0.7271 \pm 0.0013}$ & ${0.7248 \pm 0.0013}$ & $\mathbf{0.7280 \pm 0.0009}$ \\
\bottomrule
\end{tabular}
}
\caption{Performance comparison of \texttt{Adam}, \texttt{AdamW} and \texttt{ALSO} with hyperparameters found for \texttt{ALSO}. The trained model is MLP-PLR \citep{gorishniy2022embeddings}. Bold entries represent the best method on each dataset according to mean, underlined entries represent methods, which performance is best with standard deviations over 15 seeds taken into account. Metric is written near dataset name, $\uparrow$ means that higher values indicate better performance, $\downarrow$ means that lower values indicate better performance.}
\label{table:tabdl-ablation}
\end{table*}

We evaluate \texttt{Adam} and \texttt{AdamW} with hyperparameters for \texttt{ALSO} to isolate effect of dynamic weights usage. The results are presented in Table \ref{table:tabdl-ablation}. As we can see, the choice of hyperparameters, does not explain, why \texttt{ALSO} outperforms \texttt{Adam}.

\subsection{Weights Analysis}
\label{appendix:weights_changig}

Here we perform analysis of $\pi$ vector behavior. In Figure \ref{fig:pi_change} we can see that weights are changing during training process. For some tasks weights converge, while for other they are still changing. This effect can be explained that we use early stopping or stop training before converges. 

More interesting is comparison of values of default loss and weighted. As we can see in Figure \ref{fig:loss_comparision} weighted loss increases losses on some batches, and decreases on other. It means that intuition behind \eqref{eq:adv_pi_problem} is probably the same as we propose in Section \ref{sec:intro}.
\newpage
\begin{figure}[h!]
    \centering
    \includegraphics[width=1.0\linewidth]{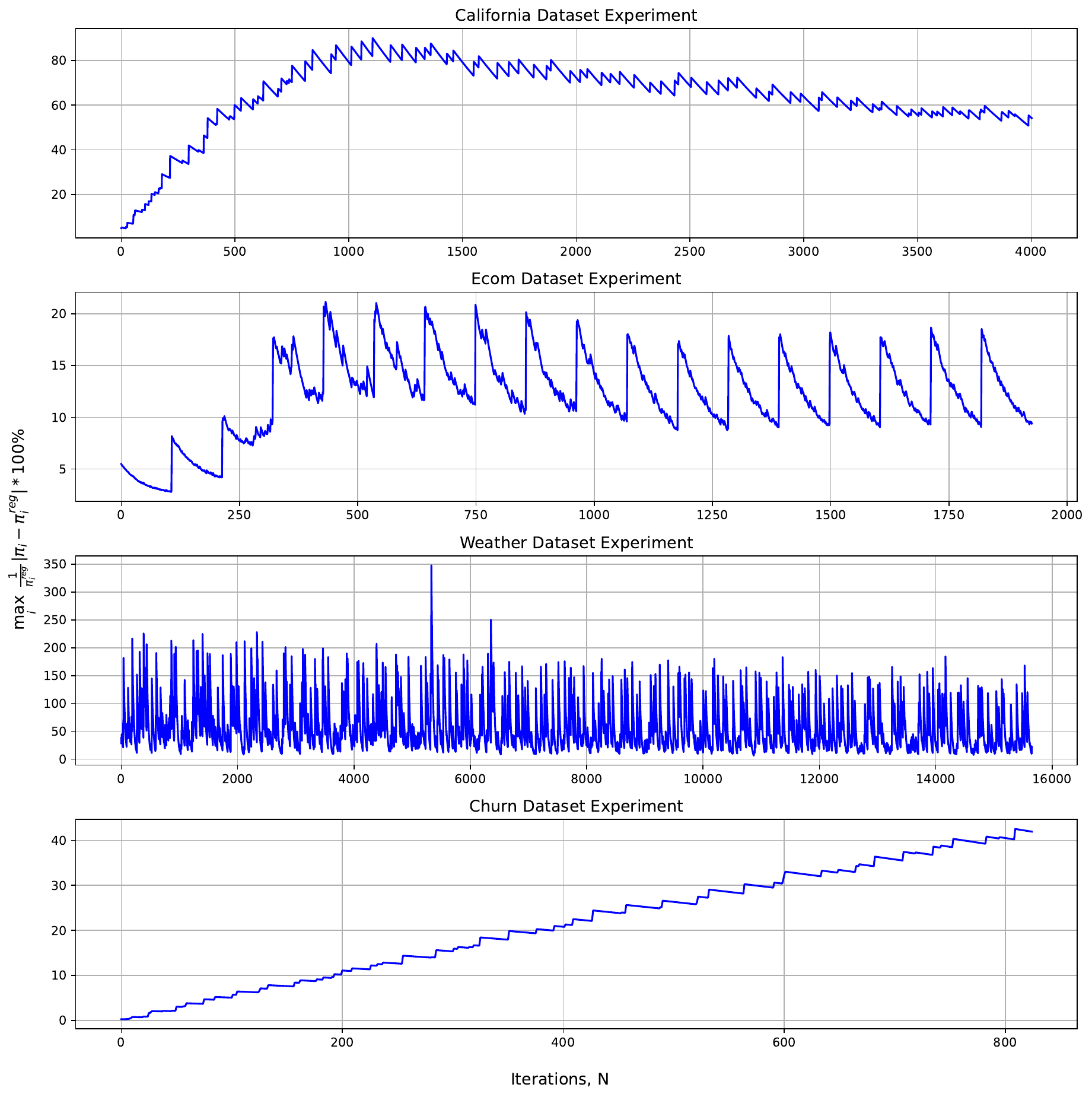}
    \caption{
    Maximum percentage difference between $\pi$ and $\hat{\pi}$ during training of several our experiments with \texttt{ALSO}.
    }
    \label{fig:pi_change}
\end{figure}

\newpage
\begin{figure}[h!]
    \centering
    \includegraphics[width=1.0\linewidth]{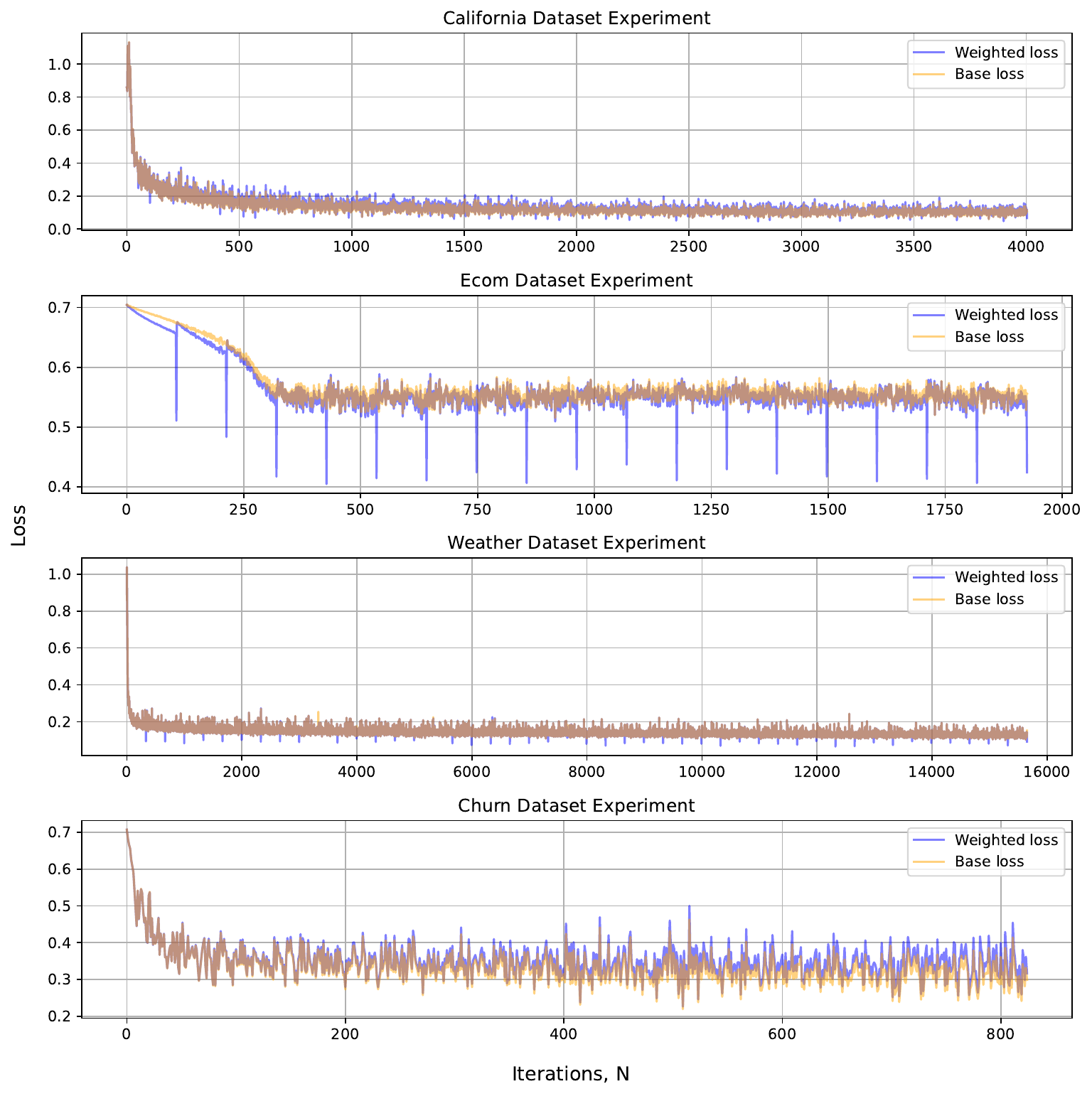}
    \caption{
    Comparison of weighted loss and non-weighted loss during f several our experiments with \texttt{ALSO}. At each iteration we report base loss on batch and weighted loss on batch.
    }
    \label{fig:loss_comparision}
\end{figure}
\newpage
\section{Extension of Section \ref{sec:optimistic_mirror_prox}}
\subsection{Variational Inequalities}
It is widely accepted in the modern literature to study the saddle point problem \eqref{eq:min_max}, and, correspondingly, the main problem of the paper \eqref{eq:adv_pi_problem}, within the more general framework of Variational Inequalities (VI) \citep{stampacchia1964formes, beznosikov2020distributed, mokhtari2020unified, hsieh2020explore, gorbunov2022stochastic}
with non-smooth regularization added, since we use the KL divergence in \eqref{eq:adv_pi_problem}. The task is to find $z^* \in \mathcal{Z}$ such that for all $z \in \cZ$ it hols that:
\begin{equation}
\begin{split}
        \label{eq:vi_problem}
        &\dotprod{F(z^*)}{z - z^*} + \tau V(z, \hat{z}) - \tau V(z^*, \hat{z}) \geq 0,
\end{split}
\end{equation}
where  $\mathcal{Z}$ is some convex vector space and $F: \mathcal{Z} \to \mathbb{R}^d$ is an operator. The function $V(z_1, z_2)$ represents a Bregman divergence, which serves as a non-smooth regularizer (for example, the KL divergence in problem \eqref{eq:adv_pi_problem}). We now provide the formal definition of $V(z_1, z_2)$. Let $\omega(\cdot)$ be a proper differentiable and $1$-strongly convex function with respect to $\| \cdot \|$ on $\mathcal{Z}$. Then for any $z_1, z_2 \in \mathcal{Z}$ we can define the Bregman divergence as 
\begin{equation}
\label{eq:bregman}
    V(z_1, z_2) := \omega(z_1) - \omega(z_2) - \dotprod{\nabla \omega(z_2)}{z_1 - z_2} .
\end{equation}
The definition \eqref{eq:bregman} is a generalization of the concept of norm for arbitrary convex sets. For example, the KL divergence from the equation \eqref{eq:adv_pi_problem} is a special case of the Bregman divergence on the simplex $\Delta_{n-1}$ with $\| \cdot \| = \| \cdot \|_1$ and a generating negative entropy function of the form $\omega_{\text{KL}}(u) = \sum_{j=1}^n u_j \log(u_j)$. 

In order to proceed from the problem \eqref{eq:vi_problem} to \eqref{eq:min_max}, one should set $z = [x, y]^T$ and $F(z) = [\nabla_x g(x, y), -\nabla_y g(x, y)]^T$. It is common to consider methods for solving saddle-point problems \eqref{eq:min_max} together with the solution of VI \eqref{eq:vi_problem}.

\textbf{Application of Variational Inequalities.} Although VI were inspired by min-max problem \eqref{eq:min_max}, the formulation \eqref{eq:vi_problem} has further numerous significant special cases, such as the classical minimization \citep{nesterov2005smooth} and fixed point \citep{reich1983some, taiwo2021inertial} problems. The setting \eqref{eq:vi_problem} is applied in classical disciplines such as game theory, economics, equilibrium theory and convex analysis \citep{stampacchia1964formes, browder1966existence, rockafellar1969convex, sibony1970methodes, luenberger1984linear}. However, formulation \eqref{eq:vi_problem} has gained the most popularity with the rise of machine learning and artificial intelligence models. VI problem arises in the GAN optimization \citep{arjovsky2017wasserstein, goodfellow2020generative, aggarwal2021generative}, in the reinforcement learning \citep{omidshafiei2017deep, jin2020efficiently} and adversarial training \citep{ben2009robust, madry2017towards}.
, sparse matrix factorizations \citep{bach2008convex}, unsupervised learning \citep{esser2010general, chambolle2011first}, non-smooth optimization \citep{nesterov2005smooth} and discriminative clustering \citep{joachims2005support}.

Now we connect our problem \eqref{eq:adv_pi_problem} to \eqref{eq:vi_problem}. For simplicity, let $n=c$, $U = \Delta_{n-1}$, $n_i=1$, $f_{i, 1} := f_i$. Then:
\begin{proposition}
\label{proposition:pi_to_vi}
    The formulation \eqref{eq:adv_pi_problem} is a special case of the VI problem \eqref{eq:vi_problem} with
    \begin{equation*}
    \label{eq:pi_to_vi}
    \begin{split}
        &z := [\theta, \pi]^T, ~\hat{z} := [\textbf{0}, \hat{\pi}]^T,~ \mathcal{Z} = \mathbb{R}^d \times \Delta_{n-1}, 
        \\&V(z_1, z_2) := \frac{1}{2}\| \theta^1 - \theta^2 \|^2_2 + \text{KL} \left[\mathbf{\pi}^1 ~\|~ \mathbf{\pi}^2\right],
        \\&F(z) := \left[\sum_{i=1}^n \pi_i \nabla {f}_i(\theta), \; -{f}_1(\theta), ..., -{f}_n(\theta)\right]^T .
    \end{split}
    \end{equation*}
\end{proposition}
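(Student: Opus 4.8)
The plan is to unpack both sides and check they coincide. First I would specialize \eqref{eq:adv_pi_problem} to $n=c$, $n_i=1$, $f_{i,1}=:f_i$: then $\tfrac{c}{n}\sum_{j=1}^{n_i} f_{i,j}(\theta)$ collapses to $f_i(\theta)$, so the problem is $\min_{\theta\in\R^d}\max_{\pi\in\Delta_{n-1}}h(\theta,\pi)$ with $h(\theta,\pi)=\sum_{i=1}^n\pi_i f_i(\theta)+\tfrac{\tau}{2}\|\theta\|_2^2-\tau\,\text{KL}[\pi\,\|\,\hat\pi]$. I would then split $h$ into the smooth bilinear coupling $\Psi(\theta,\pi):=\sum_{i=1}^n\pi_i f_i(\theta)$ — which is convex in $\theta$ (Assumption~\ref{as:convex}, $\pi_i\ge 0$) and linear, hence concave, in $\pi$, so its saddle operator is monotone — plus a separable regularizer that enters with a $+$ sign on the $\theta$-block and a $-$ sign on the $\pi$-block. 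This is exactly the composite min-max structure whose first-order optimality system is the mixed variational inequality \eqref{eq:vi_problem}.

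Second, I would match the data. Taking $\hat z=[\mathbf 0,\hat\pi]^T$ and $\mathcal Z=\R^d\times\Delta_{n-1}$, a direct computation shows that the Bregman divergence \eqref{eq:bregman} generated by $\omega(z)=\tfrac12\|\theta\|_2^2+\sum_{j=1}^n\pi_j\log\pi_j$ is $V(z_1,z_2)=\tfrac12\|\theta^1-\theta^2\|_2^2+\text{KL}[\pi^1\,\|\,\pi^2]$ (the quadratic part of $\omega$ contributes the squared Euclidean distance, the negative-entropy part the KL term), so $\tau V(z,\hat z)=\tfrac{\tau}{2}\|\theta\|_2^2+\tau\,\text{KL}[\pi\,\|\,\hat\pi]$ reproduces the regularizer of $h$ with the correct signs once it is carried inside \eqref{eq:vi_problem}. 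I would also verify the standing hypothesis on $\omega$: on $\R^d$ the map $\tfrac12\|\cdot\|_2^2$ is $1$-strongly convex with respect to $\|\cdot\|_2$, and on $\Delta_{n-1}$ the negative entropy is $1$-strongly convex with respect to $\|\cdot\|_1$ (Pinsker's inequality — the very fact recalled in the discussion following \eqref{eq:bregman}), so $\omega$ is $1$-strongly convex with respect to $\|z\|:=(\|\theta\|_2^2+\|\pi\|_1^2)^{1/2}$; and $\text{KL}[\cdot\,\|\,\hat\pi]$ is finite on all of $\Delta_{n-1}$ because $\hat\pi$ has full support. Finally, $F(z)=[\nabla_\theta\Psi(\theta,\pi),\,-\nabla_\pi\Psi(\theta,\pi)]^T$ gives $\nabla_\theta\Psi=\sum_{i=1}^n\pi_i\nabla f_i(\theta)$ and $-\nabla_\pi\Psi=(-f_1(\theta),\dots,-f_n(\theta))$, i.e.\ exactly the operator in the statement.

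Third, I would prove the equivalence ``$(\theta^*,\pi^*)$ solves the saddle-point problem $\iff$ $z^*=[\theta^*,\pi^*]^T$ solves \eqref{eq:vi_problem}''. A saddle point is characterized by the two first-order conditions: $\theta^*$ is an unconstrained minimizer of $\varphi(\theta):=\sum_i\pi_i^* f_i(\theta)+\tfrac{\tau}{2}\|\theta\|_2^2$, i.e.\ $\nabla\varphi(\theta^*)=0$, and $\pi^*$ maximizes $\psi(\pi):=\sum_i f_i(\theta^*)\pi_i-\tau\,\text{KL}[\pi\,\|\,\hat\pi]$ over $\Delta_{n-1}$, i.e.\ $\dotprod{\nabla\psi(\pi^*)}{\pi-\pi^*}\le 0$ for all $\pi\in\Delta_{n-1}$ (convexity of $\varphi$ and concavity of $\psi$ make these sufficient, settling the $\min\max$-vs.-optimality-system step). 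Then I would substitute $z=[\theta,\pi^*]^T$ and $z=[\theta^*,\pi]^T$ into \eqref{eq:vi_problem} — legitimate because both $\dotprod{F(z^*)}{z-z^*}$ and $\tau V(\cdot,\hat z)$ split additively across the $\theta$- and $\pi$-blocks, so the inequality for all $z$ is equivalent to the two frozen-block families — and use $\tfrac{\tau}{2}\|\theta\|_2^2-\tfrac{\tau}{2}\|\theta^*\|_2^2=\dotprod{\tau\theta^*}{\theta-\theta^*}+\tfrac{\tau}{2}\|\theta-\theta^*\|_2^2$ together with $\text{KL}[\pi\,\|\,\hat\pi]-\text{KL}[\pi^*\,\|\,\hat\pi]=\dotprod{\nabla_\pi\text{KL}[\pi^*\,\|\,\hat\pi]}{\pi-\pi^*}+\text{KL}[\pi\,\|\,\pi^*]$ to rewrite \eqref{eq:vi_problem} as $\dotprod{\nabla\varphi(\theta^*)}{\theta-\theta^*}+\tfrac{\tau}{2}\|\theta-\theta^*\|_2^2\ge 0$ and $-\dotprod{\nabla\psi(\pi^*)}{\pi-\pi^*}+\tau\,\text{KL}[\pi\,\|\,\pi^*]\ge 0$ for all $\theta,\pi$. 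Since the remainders $\tfrac{\tau}{2}\|\theta-\theta^*\|_2^2$ and $\tau\,\text{KL}[\pi\,\|\,\pi^*]$ are nonnegative, these are equivalent to $\nabla\varphi(\theta^*)=0$ and $\dotprod{\nabla\psi(\pi^*)}{\pi-\pi^*}\le0$: one implication is immediate, and the reverse follows by replacing $\theta$ by $\theta^*+t(\theta-\theta^*)$, resp.\ $\pi$ by $\pi^*+t(\pi-\pi^*)$, dividing by $t$, and letting $t\to0^+$ so the quadratic/KL remainder drops out to first order. Existence of a solution, while not part of the claim, follows from Sion's minimax theorem, since $h$ is convex and coercive in $\theta$, concave in $\pi$, and $\Delta_{n-1}$ is compact and convex.

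The genuinely delicate point is the bookkeeping in the third step: the term $\tau V(z,\hat z)-\tau V(z^*,\hat z)$ in \eqref{eq:vi_problem} is a \emph{difference of Bregman divergences}, not a linearization of the regularizer, so \eqref{eq:vi_problem} is formally slightly stronger than the plain monotone VI with operator $F+\tau\nabla_z V(\cdot,\hat z)$; showing that the extra Bregman remainders ($\tfrac{\tau}{2}\|\theta-\theta^*\|_2^2$ and $\tau\,\text{KL}[\pi\,\|\,\pi^*]$) are harmless — nonnegative, hence consistent with the optimality conditions in one direction, and recoverable by the radial limit in the other — is where the argument needs the only real care. The remaining ingredients (the substitution collapsing the double sum, the gradient computation for $F$, and the verification that $V$ is the stated Bregman divergence of a $1$-strongly convex generator) are routine.
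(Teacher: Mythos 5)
Your proof is correct, and it is worth noting that the paper itself gives no proof of Proposition~\ref{proposition:pi_to_vi} at all: the identification is asserted as an instance of the general remark that a saddle-point problem $\min_x\max_y g(x,y)$ maps to the VI \eqref{eq:vi_problem} via $z=[x,y]^T$, $F(z)=[\nabla_x g,-\nabla_y g]^T$, with the regularizers absorbed into $\tau V(\cdot,\hat z)$. Your write-up supplies exactly the verification the paper skips — that $\omega(z)=\tfrac12\|\theta\|_2^2+\sum_j\pi_j\log\pi_j$ generates the stated $V$ and is $1$-strongly convex for the product norm (this is also the content of the paper's Lemma~\ref{lemma:sum_X_and_Y}), that $F$ collects the correct partial gradients of the bilinear coupling, and that the first-order saddle conditions are equivalent to \eqref{eq:vi_problem} via the block splitting, the three-point Bregman identity, and the radial limit $t\to0^+$ that removes the quadratic/KL remainders. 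Your observation that \eqref{eq:vi_problem} carries a difference of Bregman divergences rather than a linearized regularizer, and that the resulting remainders are nonnegative in one direction and vanish to first order in the other, is precisely the point that makes the identification non-vacuous, and it is handled correctly.

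One small point to make explicit: the three-point identity
$\text{KL}[\pi\|\hat\pi]-\text{KL}[\pi^*\|\hat\pi]=\dotprod{\nabla_\pi\text{KL}[\pi^*\|\hat\pi]}{\pi-\pi^*}+\text{KL}[\pi\|\pi^*]$
requires $\pi^*$ to lie in the relative interior of $\Delta_{n-1}$ (otherwise $\nabla\omega(\pi^*)$ is not finite and $\text{KL}[\pi\|\pi^*]$ can be $+\infty$). This holds here because the entropic regularizer forces the inner maximizer to have full support, $\pi_i^*\propto\hat\pi_i\exp(f_i(\theta^*)/\tau)$ with $\hat\pi$ fully supported, but the argument should say so rather than rely on it tacitly. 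With that caveat recorded, the proof is complete.
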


We now introduce the common assumptions required for the analysis of solving \eqref{eq:vi_problem}.

\begin{assumption}
    \label{as:lipvar}
    The operator $F$ is $L_F$-Lipschitz continuous on $\mathcal{Z}$, i.e., for any $z_1, z_2 \in \mathcal{Z}$ the following inequality holds 
    \begin{equation*}
        \|F(z_1) - F(z_2)\|_*\leq L_F \|z_1 - z_2\|,
    \end{equation*}
    where $\| \cdot \|$ is the norm with respect to which the generating function $\omega(\cdot)$ of the Bregman divergence $V(\cdot, \cdot)$ form the problem \eqref{eq:vi_problem} is $1$-strongly convex.
\end{assumption}

\begin{assumption}
    \label{as:monotone}
    The operator $F$ is monotone on $\mathcal{Z}$, i.e., for all $z_1, z_2 \in \mathcal{Z}$ the following inequality holds 
    \begin{equation*}
        \langle F(z_1) - F(z_2), z_1 - z_2\rangle \geq 0.
    \end{equation*}
\end{assumption}
Assumptions \ref{as:lipvar} and \ref{as:monotone} are classical in the analysis of the problem \eqref{eq:vi_problem} in the deterministic case \citep{korpelevich1976extragradient, gidel2018variational, tseng1995linear, hsieh2019convergence, mokhtari2020unified}.

\subsection{Optimistic Mirror-Prox}

This section introduces an optimistic Mirror-Prox algorithm \cite{popov1980modification} designed to solve problem \eqref{eq:vi_problem}. We derive a convergence rate for this algorithm and then establish its relationship to Algorithm \ref{algorithm:optimistic_mirror_prox}, which is employed for solving the saddle point problem \eqref{eq:min_max}. In this section we will use $f_{i,j}$ and $f_i$ as synonyms, since $j$ is always equal to $1$, because of $c=n$ in Algorithm \ref{algorithm:optimistic_mirror_prox}.

\begin{algorithm}{Optimistic Mirror-Prox}
   \label{algorithm:optimistic_mirror_prox_app}
\begin{algorithmic}[1]
    \State {\bf Parameters:} stepsize $\gamma$, momentum $\alpha$, number of iterations $N$.
    \State {\bf Initialization:} choose  $z^{-1} = z^{0} \in \mathcal{Z}$.
    \For{$k = 0, 1, 2, \dots, N$}
        \State $g^k = (1 + \alpha) F(z^k) - \alpha F(z^{k-1})$
        \State $z^{k+1}=\arg\!\min\limits_{z \in \cZ} \{\langle \gamma g^k, z \rangle+V(z, z^k)+\gamma \tau V(z, \hat{z})\}$ 
    \EndFor
\end{algorithmic}
\end{algorithm}

We now provide proof of the convergence rate of Algorithm \ref{algorithm:optimistic_mirror_prox_app}.

\begin{theorem}
\label{theorem:optimistic_mirror_prox_app}
    Let Assumptions \ref{as:lipvar} and \ref{as:monotone} be satisfied. Let the problem \eqref{eq:vi_problem} be solved by Algorithm \ref{algorithm:optimistic_mirror_prox_app}. Assume that the stepsize $\gamma$ is chosen such that $0 <\gamma \leq 1/(2L_F)$ and momentum $\alpha$ is chosen such that $\alpha = (1 + \gamma \tau)^{-1}$. Then, for all $k \geq 1$ it holds that 
    \begin{equation*}
        V(z^*, z^{k})
        =
        \mathcal{O} \left[ \left(1 - \frac{\gamma \tau}{2}\right)^{k} V(z^*, z^0) \right].
    \end{equation*}
    where $z^*$ is the solution of the problem \eqref{eq:vi_problem}. In other words, if one takes $\gamma = 1 / (2 L_F)$, then to achieve $\varepsilon$-accuracy (in terms of $V(z^*, z^N) \leq \varepsilon$) one would need at most 
    \begin{equation*}
        \mathcal{O} \left[ \frac{L_F}{\tau} \cdot \log\left( \frac{V(z^*, z^0)}{\varepsilon} \right) \right] ~\text{ iterations of Algorithm \ref{algorithm:optimistic_mirror_prox}.}
    \end{equation*}
\end{theorem}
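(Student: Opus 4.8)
The plan is to run the whole argument inside the variational-inequality framework \eqref{eq:vi_problem}: even though $F$ is only monotone (Assumption~\ref{as:monotone}), the regularizer $\tau V(\cdot,\hat z)$ behaves like a $\tau$-strong monotonicity, so the goal is a geometric contraction of $V(z^*,z^k)$ at a rate $1-\gamma\tau/2$, and the entire difficulty is showing that the one-call optimistic correction $g^k=(1+\alpha)F(z^k)-\alpha F(z^{k-1})$ does not spoil it. First I would write the first-order optimality condition for the strongly convex subproblem defining $z^{k+1}$ and rewrite it with the three-point identity $\dotprod{\nabla\omega(z^{k+1})-\nabla\omega(u)}{z-z^{k+1}}=V(z,u)-V(z,z^{k+1})-V(z^{k+1},u)$, applied once with $u=z^k$ and once with $u=\hat z$. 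Testing at $z=z^*$ and collecting the $V(\cdot,z^{k+1})$ terms gives
$$(1+\gamma\tau)\,V(z^*,z^{k+1})\le V(z^*,z^k)-V(z^{k+1},z^k)+\gamma\tau V(z^*,\hat z)-\gamma\tau V(z^{k+1},\hat z)-\gamma\dotprod{g^k}{z^{k+1}-z^*}.$$
Then I split $\dotprod{g^k}{z^{k+1}-z^*}=\dotprod{F(z^{k+1})}{z^{k+1}-z^*}+\dotprod{g^k-F(z^{k+1})}{z^{k+1}-z^*}$ and use monotonicity of $F$ together with \eqref{eq:vi_problem} at the test point $z^{k+1}$, i.e. $\dotprod{F(z^{k+1})}{z^{k+1}-z^*}\ge\tau V(z^*,\hat z)-\tau V(z^{k+1},\hat z)$; the two $\hat z$-divergences cancel exactly and I arrive at the key recursion
$$(1+\gamma\tau)\,V(z^*,z^{k+1})\le V(z^*,z^k)-V(z^{k+1},z^k)-\gamma\dotprod{g^k-F(z^{k+1})}{z^{k+1}-z^*}.$$

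To control the optimistic residual I write $g^k-F(z^{k+1})=\bigl(F(z^k)-F(z^{k+1})\bigr)+\alpha\bigl(F(z^k)-F(z^{k-1})\bigr)$, set $A_k:=\dotprod{F(z^{k-1})-F(z^k)}{z^k-z^*}$, and split $z^{k+1}-z^*=(z^{k+1}-z^k)+(z^k-z^*)$: the first bracket contributes $-\gamma A_{k+1}$ and the second $\gamma\alpha A_k+\gamma\alpha\dotprod{F(z^{k-1})-F(z^k)}{z^{k+1}-z^k}$. Since $\alpha=(1+\gamma\tau)^{-1}$, the $A$-terms regroup into
$$(1+\gamma\tau)V(z^*,z^{k+1})+\gamma A_{k+1}\le(1+\gamma\tau)^{-1}\bigl[(1+\gamma\tau)V(z^*,z^k)+\gamma A_k\bigr]-V(z^{k+1},z^k)+\gamma\alpha\dotprod{F(z^{k-1})-F(z^k)}{z^{k+1}-z^k}.$$
I would then bound the last inner product by Lipschitzness (Assumption~\ref{as:lipvar}) and Young, $\gamma\alpha L_F\|z^{k-1}-z^k\|\,\|z^{k+1}-z^k\|\le\tfrac{\gamma\alpha L_F}{2}(\|z^{k-1}-z^k\|^2+\|z^{k+1}-z^k\|^2)$, absorb the $\|z^{k+1}-z^k\|^2$ piece into $V(z^{k+1},z^k)\ge\tfrac12\|z^{k+1}-z^k\|^2$, and carry the $\|z^{k-1}-z^k\|^2$ piece by a ghost divergence $\beta V(z^k,z^{k-1})$. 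With the Lyapunov function $\Psi_k:=(1+\gamma\tau)V(z^*,z^k)+\gamma A_k+\beta V(z^k,z^{k-1})$ and the choice $\beta:=\gamma L_F$, everything closes exactly when $\gamma L_F\le(1+\gamma\tau)/(2+\gamma\tau)$, which is implied by $\gamma\le1/(2L_F)$, yielding $\Psi_{k+1}\le(1+\gamma\tau)^{-1}\Psi_k$.

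To finish, $z^{-1}=z^0$ forces $A_0=0$ and $V(z^0,z^{-1})=0$, so $\Psi_0=(1+\gamma\tau)V(z^*,z^0)$ and hence $\Psi_k\le(1+\gamma\tau)^{-k}\Psi_0$. A reverse estimate $\gamma A_k\ge-\gamma L_F\|z^{k-1}-z^k\|\,\|z^k-z^*\|\ge-\beta V(z^k,z^{k-1})-\gamma L_F V(z^*,z^k)$ (using $V\ge\tfrac12\|\cdot\|^2$ on both factors and $\beta=\gamma L_F$) gives $\Psi_k\ge(1+\gamma\tau-\gamma L_F)V(z^*,z^k)\ge\tfrac12 V(z^*,z^k)$, so $V(z^*,z^k)\le2(1+\gamma\tau)^{1-k}V(z^*,z^0)$; since $(1+\gamma\tau)^{-1}\le1-\gamma\tau/2$ whenever $\gamma\tau\le1$, this is $\mathcal{O}\bigl((1-\gamma\tau/2)^k V(z^*,z^0)\bigr)$. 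Taking $\gamma=1/(2L_F)$ and solving $(1-\gamma\tau/2)^N\le\varepsilon/V(z^*,z^0)$ yields $N=\mathcal{O}\bigl((L_F/\tau)\log(V(z^*,z^0)/\varepsilon)\bigr)$. Existence and uniqueness of $z^*$ follow from monotonicity of $F$ and $\tau$-strong convexity of $V(\cdot,\hat z)$.

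The main obstacle is the Lyapunov construction of the third paragraph. The cross term $\gamma A_k$ is not sign-definite, so one must tune its coefficient jointly with that of the ghost divergence $V(z^k,z^{k-1})$ so that simultaneously (i) the per-step inequality contracts by the clean factor $(1+\gamma\tau)^{-1}$ under only $\gamma\le1/(2L_F)$ and $\alpha=(1+\gamma\tau)^{-1}$, and (ii) $\Psi_k$ still dominates a fixed multiple of $V(z^*,z^k)$; this is precisely where the choice $\alpha=(1+\gamma\tau)^{-1}$ and all the Young constants get pinned down, whereas the three-point identities, the $\hat z$-cancellation, and the logarithmic bookkeeping are routine. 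Deriving Theorem~\ref{theorem:optimistic_mirror_prox} from this VI statement then requires only the straightforward verification, via Proposition~\ref{proposition:pi_to_vi}, that the block operator $F$ is $L_F$-Lipschitz with $L_F^2=\mathcal{O}(\max_{i,j}L_{i,j}^2+\max_{i,j}K_{i,j}^2)$ under Assumption~\ref{as:lipgrad}.
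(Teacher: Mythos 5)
Your proposal is correct and follows essentially the same route as the paper: the prox/three-point inequality, cancellation of the regularizer terms via monotonicity and the VI definition at $z^{k+1}$, the same four-way decomposition of the optimistic inner product into the telescoping cross terms $A_{k+1},A_k$ and a Lipschitz--Young-controlled residual, and a final Cauchy--Schwarz lower bound showing the Lyapunov function dominates $\tfrac12 V(z^*,z^k)$. The only (cosmetic) difference is that you fold the ghost divergence $\gamma L_F V(z^k,z^{k-1})$ into the Lyapunov function to get a clean one-step contraction $\Psi_{k+1}\le(1+\gamma\tau)^{-1}\Psi_k$, whereas the paper carries the $V(z^{k-j},z^{k-j-1})$ terms explicitly through an unrolled recursion and discards the resulting nonnegative sum using $\gamma\le 1/(2L_F)$.
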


Full proof of Theorem \ref{theorem:optimistic_mirror_prox_app} is provided in next Section \ref{appendix:optimistic_mirror_prox}.

In the main part of the paper (Section \ref{sec:optimistic_mirror_prox}), we used Algorithm \ref{algorithm:optimistic_mirror_prox_app} directly for the min-max problem \eqref{eq:min_max}, skipping the complicated definitions about Variational Inequalities. Therefore, we need to transition from Algorithm \ref{algorithm:optimistic_mirror_prox_app} to Algorithm \ref{algorithm:optimistic_mirror_prox}, and correspondingly, from Assumptions \ref{as:lipvar} and \ref{as:monotone} to Assumptions \ref{as:lipgrad} and \ref{as:convex}. The next two propositions show how we can do this.

\begin{proposition}
\label{proposition:pi_to_vi_ass}
    Let Assumptions \ref{as:lipgrad} and \ref{as:convex} be satisfied. Then the target operator $F(\cdot)$ for the problem \eqref{eq:adv_pi_problem} from Proposition \ref{eq:pi_to_vi} fits under Assumptions \ref{as:lipvar} and \ref{as:monotone} with 
    \begin{equation*}
        L_F^2 = \mathcal{O} \left[ \max_{i \in \overline{1, n}}\{L_i^2\} + \max_{i \in \overline{1, n}}\left\{ K_i^2 \right\}  \right].
    \end{equation*}
\end{proposition}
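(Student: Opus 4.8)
The plan is to verify Assumptions~\ref{as:lipvar} and~\ref{as:monotone} separately for the operator $F$ from~\eqref{eq:pi_to_vi}, after first pinning down the norm underlying the Bregman divergence there. Since $V(z_1,z_2)=\tfrac12\|\theta^1-\theta^2\|_2^2+\text{KL}[\pi^1\,\|\,\pi^2]$, its generating function is $\omega(z)=\tfrac12\|\theta\|_2^2+\sum_j\pi_j\log\pi_j$, which is a sum of a function that is $1$-strongly convex in $\theta$ w.r.t.\ $\|\cdot\|_2$ and the negative entropy, which is $1$-strongly convex in $\pi$ w.r.t.\ $\|\cdot\|_1$ on $\Delta_{n-1}$ (Pinsker). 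Hence $\omega$ is $1$-strongly convex on $\mathcal{Z}$ w.r.t.\ the product norm $\|z\|:=\sqrt{\|\theta\|_2^2+\|\pi\|_1^2}$, whose dual is $\|w\|_*=\sqrt{\|w_\theta\|_2^2+\|w_\pi\|_\infty^2}$; these are precisely the norms appearing in Assumption~\ref{as:lipvar}.

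For monotonicity I would note that $F$ is exactly the saddle-point operator of $\Psi(\theta,\pi):=\sum_{i=1}^n\pi_i f_i(\theta)$, i.e.\ $F(z)=(\nabla_\theta\Psi(\theta,\pi),-\nabla_\pi\Psi(\theta,\pi))$. Under Assumption~\ref{as:convex}, for every fixed $\pi\in\Delta_{n-1}$ the map $\theta\mapsto\Psi(\theta,\pi)$ is a non-negative combination of convex functions, hence convex, while $\pi\mapsto\Psi(\theta,\pi)$ is linear, hence concave. The standard convex–concave argument then applies: for any $z_1,z_2$, the gradient inequalities give $\langle F(z_1),z_2-z_1\rangle\le\Psi(\theta_2,\pi_1)-\Psi(\theta_1,\pi_2)$ and, symmetrically, $\langle F(z_2),z_1-z_2\rangle\le\Psi(\theta_1,\pi_2)-\Psi(\theta_2,\pi_1)$; adding these two inequalities yields $\langle F(z_1)-F(z_2),z_1-z_2\rangle\ge0$.

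For the Lipschitz estimate I would split $F(z_1)-F(z_2)$ into its $\theta$-block $G:=\sum_i\pi_{1,i}\nabla f_i(\theta_1)-\sum_i\pi_{2,i}\nabla f_i(\theta_2)$ and its $\pi$-block $-(f(\theta_1)-f(\theta_2))$ with $f(\theta):=(f_1(\theta),\dots,f_n(\theta))$, so that $\|F(z_1)-F(z_2)\|_*^2=\|G\|_2^2+\|f(\theta_1)-f(\theta_2)\|_\infty^2$. The second term is at most $(\max_i K_i^2)\|\theta_1-\theta_2\|_2^2$ by the $K_i$-Lipschitz continuity in Assumption~\ref{as:lipgrad}. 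For $G$, write $G=\sum_i\pi_{1,i}\bigl(\nabla f_i(\theta_1)-\nabla f_i(\theta_2)\bigr)+\sum_i(\pi_{1,i}-\pi_{2,i})\nabla f_i(\theta_2)$, bound the first sum by $(\max_i L_i)\|\theta_1-\theta_2\|_2$ using $L_i$-smoothness and $\sum_i\pi_{1,i}=1$, and the second by $(\max_i K_i)\|\pi_1-\pi_2\|_1$ using $\|\nabla f_i\|_2\le K_i$ (a consequence of $K_i$-Lipschitz continuity) together with $\sum_i|\pi_{1,i}-\pi_{2,i}|=\|\pi_1-\pi_2\|_1$. Squaring via $(a+b)^2\le2a^2+2b^2$ and collecting terms gives $\|F(z_1)-F(z_2)\|_*^2\le(2\max_iL_i^2+2\max_iK_i^2)\bigl(\|\theta_1-\theta_2\|_2^2+\|\pi_1-\pi_2\|_1^2\bigr)=(2\max_iL_i^2+2\max_iK_i^2)\|z_1-z_2\|^2$, i.e.\ $L_F^2=\mathcal{O}\!\left[\max_iL_i^2+\max_iK_i^2\right]$, as claimed.

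All the steps are elementary; the two places where I expect to need care are (i) correctly identifying the primal/dual norm pair induced by the KL term on the product domain $\mathbb{R}^d\times\Delta_{n-1}$, together with the $1$-strong convexity of $\omega$ on that product space, and (ii) in the Lipschitz bound, keeping clear which half of Assumption~\ref{as:lipgrad} is used where — in particular, that the cross term arising from varying $\pi$ requires the gradient bound $\|\nabla f_i\|_2\le K_i$ coming from Lipschitz continuity, not from smoothness.
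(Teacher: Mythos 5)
Your proposal is correct and follows essentially the same route as the paper's proof: the same product-norm setup (the paper's lemma on sums of Bregman divergences), the same splitting of the $\theta$-block into a smoothness term plus a $\|\pi_1-\pi_2\|_1$ term controlled via $\|\nabla f_i\|_2\le K_i$, the same $\ell_\infty$ bound on the $\pi$-block, and the same convex--concave argument for monotonicity. The only cosmetic difference is that you use the exact dual of the product norm where the paper settles for the cruder bound $\|\cdot\|_{\mathcal{Z}^*}^2\le 2\|\cdot\|_{\mathcal{X}^*}^2+2\|\cdot\|_{\mathcal{Y}^*}^2$, which only affects the absolute constant inside the $\mathcal{O}[\cdot]$.
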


\begin{proposition}
\label{theorem:also_step}
    Consider the problem \eqref{eq:adv_pi_problem} and the step of Mirror-Prox like algorithm for solving it:
    $$z^{\text{new}} = \arg\min_{z \in \cZ} \left\{ \langle \gamma g, z \rangle + V(z, z^{\text{old}}) + \gamma \tau V(z, \hat{z})\right\}$$
    where $\hat{z} = (0, \hat{\pi})$ and $g = (g^\theta, g^\pi)$ is the target function from \eqref{eq:adv_pi_problem}. Then the update rule is:
    \begin{equation*}
    \begin{split}
        &\theta^{\text{new}} = \theta^{\text{old}} - \frac{\gamma}{1 + \gamma\tau}(g^\theta + \tau \theta^{\text{old}}),
        \\&\pi^{\text{new}} = \text{SM} \left[\log \pi^{\text{old}} - \frac{\gamma}{1+\gamma\tau}\left(g^\pi + \tau \log \frac{\pi^{\text{old}}}{\hat{\pi}}\right)\right],
    \end{split}
    \end{equation*}
    where $\text{SM}$ denotes softmax function.
\end{proposition}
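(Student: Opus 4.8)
The plan is to use the product structure of the feasible set $\cZ=\mathbb{R}^d\times\Delta_{n-1}$ together with the separability of the objective. Both the linear term $\langle \gamma g, z\rangle = \langle \gamma g^\theta,\theta\rangle + \langle \gamma g^\pi,\pi\rangle$ and the divergences $V(z,z^{\text{old}})=\tfrac12\|\theta-\theta^{\text{old}}\|_2^2+\mathrm{KL}[\pi\|\pi^{\text{old}}]$, $V(z,\hat z)=\tfrac12\|\theta\|_2^2+\mathrm{KL}[\pi\|\hat\pi]$ split into a $\theta$-part and a $\pi$-part, so the $\arg\min$ over $\cZ$ decouples into two independent subproblems. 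Each subproblem is the minimization of a linear function plus a sum of (strongly) convex Bregman divergences, hence strongly convex on its effective domain; the minimizer is therefore unique, and it suffices to solve the first-order optimality conditions.

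First I would handle the $\theta$-block: minimize over $\theta\in\mathbb{R}^d$ the smooth unconstrained function $\langle\gamma g^\theta,\theta\rangle+\tfrac12\|\theta-\theta^{\text{old}}\|_2^2+\tfrac{\gamma\tau}{2}\|\theta\|_2^2$. Setting its gradient to zero gives $\gamma g^\theta+(\theta-\theta^{\text{old}})+\gamma\tau\theta=0$, i.e. $(1+\gamma\tau)\theta=\theta^{\text{old}}-\gamma g^\theta$. Dividing by $1+\gamma\tau$ and rewriting $\tfrac{1}{1+\gamma\tau}=1-\tfrac{\gamma\tau}{1+\gamma\tau}$ yields exactly $\theta^{\text{new}}=\theta^{\text{old}}-\tfrac{\gamma}{1+\gamma\tau}(g^\theta+\tau\theta^{\text{old}})$; this is a one-line algebraic identity.

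Next I would handle the $\pi$-block: minimize over $\pi\in\Delta_{n-1}$ the function $\langle\gamma g^\pi,\pi\rangle+\mathrm{KL}[\pi\|\pi^{\text{old}}]+\gamma\tau\,\mathrm{KL}[\pi\|\hat\pi]$. Since the entropy in the KL terms forces $\pi_i>0$ at any feasible point of finite value, the nonnegativity constraints are inactive, so I introduce a single Lagrange multiplier $\lambda$ for $\sum_i\pi_i=1$ and write the stationarity condition coordinatewise: $\gamma g^\pi_i+\log(\pi_i/\pi^{\text{old}}_i)+1+\gamma\tau\log(\pi_i/\hat\pi_i)+\gamma\tau+\lambda=0$. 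Collecting the $\log\pi_i$ terms and all $i$-independent terms into a constant $C$ gives $\log\pi_i=\tfrac{1}{1+\gamma\tau}\bigl(\log\pi^{\text{old}}_i-\gamma g^\pi_i+\gamma\tau\log\hat\pi_i\bigr)+C$, and fixing $C$ by normalization shows $\pi^{\text{new}}=\mathrm{SM}\bigl[\tfrac{1}{1+\gamma\tau}(\log\pi^{\text{old}}-\gamma g^\pi+\gamma\tau\log\hat\pi)\bigr]$. Finally I expand $\log\pi^{\text{old}}-\tfrac{\gamma}{1+\gamma\tau}\bigl(g^\pi+\tau\log(\pi^{\text{old}}/\hat\pi)\bigr)$: the $\log\pi^{\text{old}}$ terms combine with coefficient $1-\tfrac{\gamma\tau}{1+\gamma\tau}=\tfrac{1}{1+\gamma\tau}$, giving precisely $\tfrac{1}{1+\gamma\tau}(\log\pi^{\text{old}}-\gamma g^\pi+\gamma\tau\log\hat\pi)$, so the two softmax arguments coincide (and softmax is shift-invariant in any case).

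All computations are elementary; the only place needing a little care is the $\pi$-subproblem, where one must justify dropping the positivity constraints so that the interior KKT point found via the Lagrange multiplier is genuinely the global minimizer, and one must track which terms are $i$-independent when absorbing them into the softmax normalization. Strong convexity of each subproblem guarantees uniqueness, which closes the argument.
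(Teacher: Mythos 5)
Your proposal is correct and follows essentially the same route as the paper: decouple the objective over $\theta$ and $\pi$, solve the unconstrained quadratic for $\theta$ by setting the gradient to zero, and solve the simplex-constrained $\pi$-subproblem via Lagrange/KKT stationarity plus normalization, reconciling the result with the stated softmax form by shift-invariance. One small imprecision: the KL terms are actually finite at $\pi_i=0$ (since $\pi_i\log\pi_i\to 0$), so the correct reason the positivity constraints are inactive is that the derivative $\log\pi_i$ blows up to $-\infty$ at the boundary, forcing the minimizer into the relative interior; the paper handles this via explicit multipliers $\beta_i$ that turn out to vanish.
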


Full proof of Propositions \ref{proposition:pi_to_vi_ass} and \ref{theorem:also_step} is given in Sections \ref{appendix:pi_to_vi_ass} and \ref{appendix:also_step}.

Combining the results from Theorem \ref{theorem:optimistic_mirror_prox_app} and Propositions \ref{proposition:pi_to_vi_ass} and \ref{theorem:also_step} directly yields the convergence rate of Algorithm \ref{algorithm:optimistic_mirror_prox} (Theorem \ref{theorem:optimistic_mirror_prox} in Section \ref{sec:optimistic_mirror_prox}).

\subsection{Proof of the convergence rate of Optimistic-Mirror Prox (Theorem \ref{theorem:optimistic_mirror_prox_app})}
\label{appendix:optimistic_mirror_prox}

In the proof of Theorem \ref{theorem:optimistic_mirror_prox} we use technical lemma.

\begin{lemma}[Bregman divergence properties]
\label{lemma:bregman_step}
        For any Bregman divergence $V$ on the set $\mathcal{Z}$, for any $u \in \mathcal{Z}^*$, $z^1, \hat{z} \in \mathcal{Z}$ and $c \in \mathbb{R}$, if we define 
        \begin{equation}
        \label{eq:reg_mirror_prox_algo_tmp}
        \begin{split}
            z^{\dag} := \arg\min_{z \in \cZ} \left\{ \langle u, z \rangle + V(z, z^1) + c V(z, \hat{z})\right\}.
        \end{split}
        \end{equation}
        Then, for all $z \in \mathcal{Z}$ it holds that 
        \begin{equation*}
            (1+c) V(z, z^\dag) \leq V(z, z^1) - V(z^\dag, z^1) 
            -
            \langle u, z^\dag - z \rangle
            +
            c V(z, \hat{z}) - c V(z^\dag, \hat{z}) .
        \end{equation*}
\end{lemma}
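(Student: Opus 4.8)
The statement to prove is Lemma~\ref{lemma:bregman_step}, the standard ``prox-inequality'' or ``three-point identity'' for a Bregman-regularized minimization step, generalized to include the extra $cV(z,\hat z)$ term. The plan is to use the first-order optimality condition for the minimizer $z^\dag$ and then expand everything in terms of the generating function $\omega$, recognizing the resulting combination as a sum of Bregman divergences.

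\textbf{Key steps.} First, I would write the objective in \eqref{eq:reg_mirror_prox_algo_tmp} as $\psi(z) := \langle u, z\rangle + V(z,z^1) + cV(z,\hat z)$ and note that, since $V(\cdot,z^1)$ is $1$-strongly convex and $V(\cdot,\hat z)$ is convex (both being Bregman divergences in the first argument with respect to the $1$-strongly convex $\omega$), $\psi$ is $(1+c)$-strongly convex on $\mathcal Z$, so the minimizer $z^\dag$ is unique. Second, I would invoke the variational (first-order) optimality condition: for all $z\in\mathcal Z$,
\begin{equation*}
\langle \nabla\psi(z^\dag), z - z^\dag\rangle \geq 0,
\end{equation*}
and compute $\nabla\psi(z^\dag) = u + \nabla\omega(z^\dag) - \nabla\omega(z^1) + c\big(\nabla\omega(z^\dag) - \nabla\omega(\hat z)\big)$, using $\nabla_{z}V(z,w) = \nabla\omega(z) - \nabla\omega(w)$. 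This gives
\begin{equation*}
\langle u, z^\dag - z\rangle \leq \langle \nabla\omega(z^1) - \nabla\omega(z^\dag), z^\dag - z\rangle + c\langle \nabla\omega(\hat z) - \nabla\omega(z^\dag), z^\dag - z\rangle.
\end{equation*}
Third, I would apply the exact three-point identity for Bregman divergences, namely $\langle \nabla\omega(a) - \nabla\omega(b), b - z\rangle = V(z,b) - V(z,a) - V(b,a)$ (which follows directly from expanding \eqref{eq:bregman}), once with $(a,b) = (z^1, z^\dag)$ and once with $(a,b) = (\hat z, z^\dag)$. This turns the right-hand side into $\big[V(z,z^1) - V(z^\dag,z^1) - V(z^\dag? )\dots\big]$; more precisely it yields $\langle u, z^\dag - z\rangle \leq V(z,z^1) - V(z^\dag,z^1) - V(z,z^\dag) + c\big(V(z,\hat z) - V(z^\dag,\hat z) - V(z,z^\dag)\big)$. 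Rearranging to isolate $(1+c)V(z,z^\dag)$ on the left gives exactly the claimed inequality.

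\textbf{Main obstacle.} There is no deep obstacle here; the lemma is a routine but bookkeeping-heavy manipulation. The one point requiring care is the correct handling of the constrained optimality condition on the possibly non-full-dimensional set $\mathcal Z$ (e.g.\ the simplex), so that one uses the variational inequality $\langle\nabla\psi(z^\dag),z-z^\dag\rangle\ge 0$ rather than $\nabla\psi(z^\dag)=0$, and the matching verification that $\omega$ being $1$-strongly convex makes each $V(\cdot,\cdot)$-in-first-argument term have the right strong-convexity constant so that the coefficient on the left is exactly $1+c$. The other subtlety is purely sign-tracking in the two applications of the three-point identity; I would double-check the identity on \eqref{eq:bregman} directly before substituting.
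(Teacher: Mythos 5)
Your proposal is correct and follows essentially the same route as the paper: first-order optimality for the regularized prox step, then the Bregman three-point identity applied twice, then rearrangement. The only blemish is a sign slip in your abstract statement of the identity (the correct form is $\langle \nabla\omega(a)-\nabla\omega(b),\,b-z\rangle = V(z,a)-V(z,b)-V(b,a)$), but the substituted expression you actually write down and the final rearrangement are correct, as you anticipated in your own caveat about sign-tracking.
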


\begin{proof}
    Using optimality condition in the equation \eqref{eq:reg_mirror_prox_algo_tmp} we obtain that for all $z \in \mathcal{Z}$ it holds that:
    \begin{equation*}
        \langle u + \nabla \omega(z^\dag) - \nabla \omega(z^1) + c \nabla \omega(z^\dag) - c \nabla \omega(\hat{z}), z^\dag - z \rangle \leq 0
    \end{equation*}
    Using the Law of cosines of the Bregman divergence  we can obtain that:
    \begin{equation*}
        \langle \nabla \omega(z_1) - \nabla \omega(z_2), z_1 - z_3 \rangle = V(z_3, z_1) + V(z_1, z_2) - V(z_3, z_2),
    \end{equation*}
    we can obtain that for all $z \in \mathcal{Z}$ it holds that:
    \begin{equation*}
        \langle u, z^\dag - z \rangle + V(z, z^\dag) + V(z^\dag, z^1) 
        -
        V(z, z^1) + c V(z, z^\dag) + c V(z^\dag, \hat{z}) - cV(z, \hat{z}) \leq 0
    \end{equation*}
    Re-arranging last inequality we obtain for all $z \in \mathcal{Z}$:
    \begin{equation*}
        (1 + c) V(z, z^\dag) 
        \leq 
        V(z, z^1) - V(z^\dag, z^1)
        - \langle u, z^\dag - z \rangle
        +
        c V(z, \hat{z}) - c V(z^\dag, \hat{z}).
    \end{equation*}
    This finishes the proof.
\end{proof}

\begin{proof}[Proof of Theorem \ref{theorem:optimistic_mirror_prox}]
    Using Lemma \ref{lemma:bregman_step} with $u = \gamma [ (1 + \alpha) F(z^k) - \alpha F(z^{k-1})], z^1 = z^k$ and $c = \gamma \tau V(z, \hat{z})$, we can obtain that for all $z \in \mathcal{Z}$ it holds that 
    \begin{equation}
    \label{eq:optimistic_tmp_1}
    \begin{split}
          \left(1 + \gamma \tau \right)V(z, z^{k+1}) 
          &\leq 
          V(z, z^k) - V(z^{k+1}, z^k) 
          + \gamma\tau V(z, \hat{z}) - \gamma \tau V(z^{k+1}, \hat{z}) 
          \\& - \gamma \langle (1 + \alpha) F(z^k) - \alpha F(z^{k-1}), z^{k+1} - z \rangle .
     \end{split}
    \end{equation}

    Consider the dot product in \eqref{eq:optimistic_tmp_1}. By using straightforward algebra we can obtain that 
    \begin{equation*}
    \begin{split}
        &- \gamma \langle (1 + \alpha) F(z^k) - \alpha F(z^{k-1}), z^{k+1} - z \rangle
        =
        -\underbrace{\gamma \langle F(z^k) - F(z^{k+1}), z^{k+1} - z \rangle}_{\circledOne}
        \\&-\underbrace{\gamma \alpha \langle F(z^k) - F(z^{k-1}), z^{k} - z \rangle}_{\circledTwo}
        -\underbrace{\gamma \alpha \langle F(z^k) - F(z^{k-1}), z^{k+1} - z^{k} \rangle}_{\circledThree}
        \\&-\underbrace{\gamma \langle  F(z^{k+1}), z^{k+1} - z \rangle}_{\circledFour} .
    \end{split}
    \end{equation*}

    Consider $\circledThree$. Since Assumption \ref{as:lipvar} is fulfilled, we can obtain that
    \begin{equation}
    \label{eq:optimistic_tmp_circ3}
    \begin{split}
        -\gamma \alpha \langle F(z^k) - F(z^{k-1}), z^{k+1} - z^{k} \rangle
        &\leq 
        \gamma^2 L^2 \alpha^2 \| z^k - z^{k-1} \|^2 + \frac{1}{4} \|z^{k+1} - z^{k} \|^2
        \\&\leq
        2 \gamma^2 L^2 \alpha^2 V(z^k, z^{k-1}) + \frac{1}{2} V(z^{k+1}, z^{k}) .
    \end{split}
    \end{equation}

    Consider $\circledFour + \gamma\tau V(z, \hat{z}) - \gamma \tau V(z^{k+1}, \hat{z})$. By using Assumption \ref{as:monotone} and the definition of the solution $z^* \in \mathcal{Z}$ of the problem \eqref{eq:vi_problem} we can obtain that 
    \begin{equation}
    \label{eq:optimistic_tmp_circ4}
    \begin{split}
        -\gamma \langle  F(z^{k+1}), z^{k+1} - z^* \rangle
        &+ \gamma\tau V(z, \hat{z}) - \gamma \tau V(z^{k+1}, \hat{z})
        =
        \\&-\gamma \langle  F(z^{k+1}) - F(z^*), z^{k+1} - z^* \rangle
        \\&-\gamma \langle  F(z^*), z^{k+1} - z^* \rangle
        + \gamma\tau V(z^*, \hat{z}) - \gamma \tau V(z^{k+1}, \hat{z})
        \\&\leq
        -\gamma \left[\langle  F(z^*), z^{k+1} - z^* \rangle
        - \tau V(z^*, \hat{z}) + \tau V(z^{k+1}, \hat{z}) \right]
        \leq
        0.
    \end{split}
    \end{equation}

    Consider $\circledTwo$. For the moment, we simply introduce the notation $a_k := -\circledTwo = -\gamma \alpha \langle F(z^k) - F(z^{k-1}), z^{k} - z \rangle$, and deal with it later in this proof. In this case $\circledOne$ is of the form $\circledOne = - \alpha^{-1} a_{k+1}$. Using this notation and the results of equations \eqref{eq:optimistic_tmp_circ3} and \eqref{eq:optimistic_tmp_circ4}, expression \eqref{eq:optimistic_tmp_1} takes the form
    \begin{equation*}
        \left(1 + \gamma \tau \right)V(z^*, z^{k+1}) 
        +
        \alpha^{-1} a_{k+1}
        \leq
        V(z^*, z^{k}) + a_k
        +
        2 \gamma^2 L^2 \alpha^2 V(z^k, z^{k-1}) - \frac{1}{2} V(z^{k+1}, z^{k}) .
    \end{equation*}

    For convenience, let us introduce another notation: $\Phi_k := V(z^*, z^{k}) + a_k$, also set $\alpha = (1 + \gamma \tau)^{-1}$, then we obtain result of the form
    \begin{equation*}
        \Phi_{k+1}
        \leq
        \alpha \Phi_k
        +
        \alpha \left[ 2 \gamma^2 L^2 \alpha^2 V(z^k, z^{k-1}) - \frac{1}{2} V(z^{k+1}, z^{k}) \right] .
    \end{equation*}

    We now start to roll-out the recursion from step $k$ to the step $k - m$:
    \begin{align}
        \Phi_{k+1}
        &\leq
        \alpha \Phi_k
        +
        \alpha \left[ 2 \gamma^2 L^2 \alpha^2 V(z^k, z^{k-1}) - \frac{1}{2} V(z^{k+1}, z^{k}) \right] \nonumber
        \\&\leq
        \alpha\left\{ 
            \alpha \Phi_{k-1}
            +
            \alpha \left[ 2 \gamma^2 L^2 \alpha^2 V(z^{k-1}, z^{k-2}) - \frac{1}{2} V(z^{k}, z^{k-1}) \right] 
        \right\} \nonumber
        \\&\quad~+
        \alpha \left[ 2 \gamma^2 L^2 \alpha^2 V(z^k, z^{k-1}) - \frac{1}{2} V(z^{k+1}, z^{k}) \right] \nonumber
        \\&\leq
        \alpha^2 \left\{             
            \alpha \Phi_{k-2}
            +
            \alpha \left[ 2 \gamma^2 L^2 \alpha^2 V(z^{k-2}, z^{k-3}) - \frac{1}{2} V(z^{k-1}, z^{k-2}) \right] 
        \right\} \nonumber
        \\&\quad~+
        \alpha^2 \left[ 2 \gamma^2 L^2 \alpha^2 V(z^{k-1}, z^{k-2}) - \frac{1}{2} V(z^{k}, z^{k-1}) \right]  \nonumber
        \\&\quad~+
        \alpha \left[ 2 \gamma^2 L^2 \alpha^2 V(z^k, z^{k-1}) - \frac{1}{2} V(z^{k+1}, z^{k}) \right] \nonumber
        \\&\dots \nonumber
        \\&\leq
        \alpha^{m+1} \Phi_{k - m} 
        -
        \sum_{j=0}^{m-1} \alpha^{j+2} \left( \frac{1}{2} - 2 \gamma^2 \alpha L^2 \right) V(z^{k-j}, z^{k-j-1}) \nonumber
        \\&- \frac{1}{2} \alpha V(z^{k+1}, z^k) 
        +
        2 \gamma^2 L^2 \alpha^{m+3} V(z^{k-m}, z^{k-m-1}). \label{eq:optimistic_tmp_2}
    \end{align}

    If we consider $\gamma \leq 1/(2 L)$, then $1/2 - 2 \gamma^2 \alpha L^2 \geq 1/2 - 2 \gamma^2 L^2 \geq 0$ and we can omit the sum in the equation \eqref{eq:optimistic_tmp_2}. Taking $m = k$ in \eqref{eq:optimistic_tmp_2} we obtain:
    \begin{equation*}
        \Phi_{k+1}
        \leq 
        \alpha^{k+1} \Phi_0
        - \frac{1}{2} \alpha V(z^{k+1}, z^k) 
        +
        2 \gamma^2 L^2 \alpha^{k+2} V(z^{0}, z^{-1}).
    \end{equation*}
    Since we initialize $z^{-1} = z^0$ in the Algorithm \ref{algorithm:optimistic_mirror_prox} we get $V(z^{0}, z^{-1})$. Now we return all the notations back and get:
    \begin{equation}
    \label{eq:optimistic_tmp_3}
    \begin{split}
        V(z^*, z^{k+1}) + \frac{1}{2} \alpha V(z^{k+1}, z^k)  -\gamma \alpha \langle F(z^{k+1}) - F(z^{k}), z^{k+1} - z^* \rangle
        \leq 
        &\alpha^k V(z^*, z^0) 
        .
    \end{split}
    \end{equation}

    By Using Fenchel-Young inequality we can obtain that: 
    \begin{equation}
    \label{eq:optimistic_tmp_4}
    \begin{split}
        V(z^*, z^{k+1}) + \frac{1}{2} \alpha V(z^{k+1}, z^k)
        &-\gamma \alpha \langle F(z^{k+1}) - F(z^{k}), z^{k+1} - z^* \rangle
        \geq
        V(z^*, z^{k+1})
        \\&- 
        \frac{1}{2} \alpha  V(z^*, z^{k+1})
        +
        \frac{1}{2} \alpha V(z^{k+1}, z^k)
        \\&-
        2 \gamma^2 L^2 \alpha V(z^{k+1}, z^k)
        \geq 
        \frac{1}{2}V(z^*, z^{k+1}) .
    \end{split}
    \end{equation}

    Combining \eqref{eq:optimistic_tmp_3} and \eqref{eq:optimistic_tmp_4} we can obtain that:
    \begin{equation*}
        V(z^*, z^{k+1}) \leq 
        2 \alpha^{k+1} V(z^*, z^0)
    \end{equation*}

    Subtracting $\alpha = (1+\gamma \tau)^{-1}$ 
    finishes the proof.
\end{proof}
\subsection{Proof of Proposition \ref{proposition:pi_to_vi_ass}}
\label{appendix:pi_to_vi_ass}

In the proof of Proposition \ref{proposition:pi_to_vi_ass} we use several technical lemmas.

    \begin{lemma}
    \label{lemma:sum_X_and_Y}
        If $V_\mathcal{X}$ and $V_\mathcal{Y}$ are Bregman divergences on normed vector spaces $(\mathcal{X}, \| \cdot \|_\mathcal{X})$ and $(\mathcal{Y}, \| \cdot \|_\mathcal{Y})$ respectively, then $V_\mathcal{Z}(\cdot) := V_\mathcal{X}(\cdot) + V_\mathcal{Y}(\cdot)$ is also a Bregman divergence on the normed vector space $ (\mathcal{Z} := \mathcal{X} \times \mathcal{Y}, \|\cdot \|_\mathcal{Z} := \sqrt{\| \cdot \|_\mathcal{X}^2 + \| \cdot \|_\mathcal{Y}^2})$ with generating function $\omega_\mathcal{Z}(\cdot) = \omega_\mathcal{X}(\cdot) + \omega_\mathcal{Y}(\cdot)$.
        Moreover, for conjugate norm $\|\cdot \|_\mathcal{Z^*}$ it holds that $\|\cdot \|_{\mathcal{Z}^*}^2 \leq 2 \|\cdot \|_{\mathcal{X}^*}^2 + 2 \|\cdot \|_{\mathcal{Y}^*}^2$.
    \end{lemma}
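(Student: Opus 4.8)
The plan is to verify directly the three properties that make $\omega_\mathcal{Z}$ a legitimate generating function for a Bregman divergence on $(\mathcal{Z}, \|\cdot\|_\mathcal{Z})$, then to read off the additive decomposition of $V_\mathcal{Z}$, and finally to establish the conjugate-norm bound via Cauchy--Schwarz.

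First I would check that $\omega_\mathcal{Z}(x,y) := \omega_\mathcal{X}(x) + \omega_\mathcal{Y}(y)$ is proper and differentiable: properness is inherited coordinatewise, and since $\omega_\mathcal{X}$ and $\omega_\mathcal{Y}$ are differentiable, $\nabla\omega_\mathcal{Z}(x,y) = \bigl(\nabla\omega_\mathcal{X}(x),\, \nabla\omega_\mathcal{Y}(y)\bigr)$. For $1$-strong convexity with respect to $\|\cdot\|_\mathcal{Z}$, I would write the strong-convexity inequality for $\omega_\mathcal{X}$ at the pair $(x_1,x_2)$ and for $\omega_\mathcal{Y}$ at $(y_1,y_2)$ and add them; the two quadratic remainders combine into $\tfrac12\|x_1-x_2\|_\mathcal{X}^2 + \tfrac12\|y_1-y_2\|_\mathcal{Y}^2 = \tfrac12\|z_1-z_2\|_\mathcal{Z}^2$, which is exactly the remainder required by the definition of $\|\cdot\|_\mathcal{Z}$. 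The same additive splitting, applied to $V(z_1,z_2) = \omega(z_1) - \omega(z_2) - \langle\nabla\omega(z_2), z_1-z_2\rangle$ together with the fact that the pairing on $\mathcal{Z} = \mathcal{X}\times\mathcal{Y}$ factors as $\langle(\xi,\eta),(x,y)\rangle = \langle\xi,x\rangle + \langle\eta,y\rangle$, immediately yields $V_\mathcal{Z}(z_1,z_2) = V_\mathcal{X}(x_1,x_2) + V_\mathcal{Y}(y_1,y_2)$. Hence $V_\mathcal{Z}$ is precisely the stated sum and is a Bregman divergence generated by $\omega_\mathcal{Z}$.

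For the conjugate-norm claim, I would unfold $\|(\xi,\eta)\|_{\mathcal{Z}^*} = \sup\{\langle\xi,x\rangle + \langle\eta,y\rangle : \|x\|_\mathcal{X}^2 + \|y\|_\mathcal{Y}^2 \le 1\}$, bound each term by the dual-norm inequalities $\langle\xi,x\rangle \le \|\xi\|_{\mathcal{X}^*}\|x\|_\mathcal{X}$ and $\langle\eta,y\rangle \le \|\eta\|_{\mathcal{Y}^*}\|y\|_\mathcal{Y}$, and then apply Cauchy--Schwarz in $\mathbb{R}^2$ to the pair of products. On the unit ball of $\|\cdot\|_\mathcal{Z}$ this gives $\|(\xi,\eta)\|_{\mathcal{Z}^*} \le \sqrt{\|\xi\|_{\mathcal{X}^*}^2 + \|\eta\|_{\mathcal{Y}^*}^2}$, and therefore $\|(\xi,\eta)\|_{\mathcal{Z}^*}^2 \le \|\xi\|_{\mathcal{X}^*}^2 + \|\eta\|_{\mathcal{Y}^*}^2 \le 2\|\xi\|_{\mathcal{X}^*}^2 + 2\|\eta\|_{\mathcal{Y}^*}^2$, which is the asserted bound (with room to spare, since the $\ell_2$-combination of norms is self-dual).

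There is no serious obstacle here; the statement is essentially bookkeeping. The only points requiring mild care are: ensuring that strong convexity is verified against the combined norm $\|\cdot\|_\mathcal{Z}$ itself, so that the added remainders match exactly with constant $1$ and no loss; and being explicit that the dual pairing splits across the product, so that the Fenchel--Young and dual-norm estimates used downstream (e.g.\ in Lemma~\ref{lemma:bregman_step} and Theorem~\ref{theorem:optimistic_mirror_prox_app}) are legitimate on $\mathcal{Z}$.
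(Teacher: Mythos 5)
Your proof is correct and the first part (summing the two strong-convexity inequalities to get $1$-strong convexity of $\omega_\mathcal{Z}$ with respect to $\|\cdot\|_\mathcal{Z}$, then reading off the additive split of $V_\mathcal{Z}$ from the product structure of the pairing) is essentially identical to the paper's argument. The only place you diverge is the conjugate-norm bound: the paper enlarges the constraint set to bound $\sup_{\|x\|_\mathcal{X}^2+\|y\|_\mathcal{Y}^2\le 1}\{\langle a_x,x\rangle+\langle a_y,y\rangle\}$ by the sum of the two separate suprema, obtaining $\|a\|_{\mathcal{Z}^*}\le \|a_x\|_{\mathcal{X}^*}+\|a_y\|_{\mathcal{Y}^*}$ and then invoking $(u+v)^2\le 2u^2+2v^2$, whereas you apply Cauchy--Schwarz in $\mathbb{R}^2$ after the dual-norm estimates and get the sharper $\|a\|_{\mathcal{Z}^*}^2\le \|a_x\|_{\mathcal{X}^*}^2+\|a_y\|_{\mathcal{Y}^*}^2$ (the $\ell_2$-combination of norms being self-dual), which implies the stated inequality with the factor $2$ to spare. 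Your route would shave a constant off the Lipschitz bound in Proposition \ref{proposition:pi_to_vi_ass}, but since everything there is stated up to $\mathcal{O}(\cdot)$, the two arguments are interchangeable.
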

    \begin{proof}
        Let us prove the first part of Lemma \ref{lemma:sum_X_and_Y}. Since $\omega_\mathcal{X}(\cdot)$ and $\omega_\mathcal{Y}(\cdot)$ are 1-strongly convex on $(\mathcal{X}, \| \cdot \|_\mathcal{X})$ and $(\mathcal{Y}, \| \cdot \|_\mathcal{Y})$ respectively, for all $x_1, x_2 \in \mathcal{X}$ and $y_1, y_2 \in \mathcal{Y}$ it holds that
        \begin{align}
            &\omega_\mathcal{X}(x_2) \geq \omega_\mathcal{X}(x_1) + \langle \nabla \omega_\mathcal{X} (x_1), x_2 - x_1 \rangle + \frac{1}{2} \| x_1 - x_2 \|_\mathcal{X}^2, \label{eq:tmp_X_Y_1}
            \\&\omega_\mathcal{Y}(y_2) \geq \omega_\mathcal{Y}(y_1) + \langle \nabla \omega_\mathcal{Y} (y_1), y_2 - y_1 \rangle + \frac{1}{2} \| y_1 - y_2 \|_\mathcal{Y}^2. \label{eq:tmp_X_Y_2}
        \end{align}

        Now consider $\mathcal{Z} := \mathcal{X} \times \mathcal{Y}$, $\omega_\mathcal{Z}(z = (x, y)^T) = \omega_\mathcal{X}(x) + \omega_\mathcal{Y}(y)$, $\|\cdot \|_\mathcal{Z}^2 := \| \cdot \|_\mathcal{X}^2 + \| \cdot \|_\mathcal{Y}^2$ and $z_1 := (x_1, y_1)^T, z_2 := (x_2, y_2)^T \in \mathcal{Z}$. Summing up \eqref{eq:tmp_X_Y_1} and \eqref{eq:tmp_X_Y_2} we obtain that
        \begin{equation}
        \label{eq:tmp_X_Y_3}
            \omega_\mathcal{Z}(z_2) \geq \omega_\mathcal{Z}(z_1) + \langle \nabla \omega_\mathcal{Z} (z_1), z_2 - z_1 \rangle + \frac{1}{2} \| z_1 - z_2 \|_\mathcal{Z}^2,
        \end{equation}
        since $\nabla_z \omega_\mathcal{Z}(z) = (\nabla_x \omega_\mathcal{X}(x), \nabla_y \omega_\mathcal{Y}(y))^T$. Inequality \eqref{eq:tmp_X_Y_3} means that $\omega_\mathcal{Z}(\cdot)$ is 1-strongly convex on $(\mathcal{Z}, \| \cdot \|_\mathcal{Z})$ by definition.

        Function $\omega_\mathcal{Z}(\cdot)$ generates Bregman divergence of the form 
        \begin{equation*}
        \begin{split}
            V_\mathcal{Z}(z_1, z_2) 
            &= 
            \omega_\mathcal{Z}(z_1) - \omega_\mathcal{Z}(z_2) - \dotprod{\nabla_z \omega_\mathcal{Z}(z_2)}{z_1 - z_2}
            \\&=
            \omega_\mathcal{X}(x_1) + \omega_\mathcal{Y}(y_1) - \omega_\mathcal{X}(x_2) - \omega_\mathcal{Y}(y_2) 
            - 
            \dotprod{\nabla_x \omega_\mathcal{X}(x_2)}{x_1 - x_2}
            \\&~~~~~-\dotprod{\nabla_y \omega_\mathcal{Y}(y_2)}{y_1 - y_2}
            =
            V_\mathcal{X}(x_1, x_2) + V_\mathcal{Y}(y_1, y_2) . 
        \end{split}
        \end{equation*}

        This finishes the first part of the proof. Consider the second statement. By definition of the conjugate norm for all $a := (a_x, a_y) \in \mathcal{Z}^*$ with $a_x \in \mathcal{X}^*$ and $a_y \in \mathcal{Y}^*$ we have
        \begin{equation*}
        \begin{split}
            \| a \|_{\mathcal{Z}^*} 
            &\overset{\text{def}}{=} 
            \sup_{z \in \mathcal{Z}: \|z\|_\mathcal{Z} \leq 1} \left\{ \langle a, z \rangle \right\}
            =
            \sup_{(x, y)^T \in \mathcal{Z}: \|x\|_\mathcal{X}^2 + \|y\|_\mathcal{Y}^2 \leq 1} \left\{ \langle a_x, x \rangle + \langle a_y, y \rangle \right\}
            \\&\leq
            \sup_{x \in \mathcal{X}: \|x\|_\mathcal{X} \leq 1} \left\{ \langle a_x, x \rangle \right\}
            +
            \sup_{y \in \mathcal{Y}: \|y\|_\mathcal{Y} \leq 1} \left\{ \langle a_y, y \rangle \right\}
            =
            \| a_x \|_{\mathcal{X}^*} + \| a_y \|_{\mathcal{Y}^*} .
        \end{split}
        \end{equation*}
        This means that $\| a \|_{\mathcal{Z}^*}^2 \leq \left( \| a_x \|_{\mathcal{X}^*} + \| a_y \|_{\mathcal{Y}^*} \right)^2 \leq 2 \| a_x \|_{\mathcal{X}^*}^2 + 2 \| a_y \|_{\mathcal{Y}^*}^2$. This finishes the proof.
    \end{proof}

    \begin{lemma}
    \label{lemma:conv_to_monotone}
        If a function $g(x, y): \mathcal{X} \times \mathcal{Y} \to \R$ is convex w.r.t. $x$ and concave w.r.t. $y$, then target operator $F$ for the min-max problem $\min_{x \in \mathcal{X}} \max_{y \in \mathcal{Y}} \{ g(x, y) \}$ of the from
        \begin{equation*}
            F(z) := \left[\nabla_x g(x, y), \; -\nabla_y g(x, y)\right]^T ,
        \end{equation*}
        is monotone.
    \end{lemma}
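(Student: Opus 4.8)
Write $z_1=(x_1,y_1)$ and $z_2=(x_2,y_2)$, and split the monotonicity gap as
\[
\langle F(z_1)-F(z_2),\,z_1-z_2\rangle = \langle F(z_1),\,z_1-z_2\rangle + \langle F(z_2),\,z_2-z_1\rangle .
\]
The idea is to lower-bound each of the two terms on the right by a difference of values of $g$ at the ``off-diagonal'' corners $(x_1,y_2)$ and $(x_2,y_1)$, arranged so that the two bounds are exact negatives of one another and therefore cancel upon addition.

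For the first term, expand $\langle F(z_1),z_1-z_2\rangle = \langle \nabla_x g(x_1,y_1),\,x_1-x_2\rangle - \langle \nabla_y g(x_1,y_1),\,y_1-y_2\rangle$. Convexity of $x\mapsto g(x,y_1)$ gives the first-order inequality $\langle \nabla_x g(x_1,y_1),\,x_1-x_2\rangle \geq g(x_1,y_1)-g(x_2,y_1)$, and concavity of $y\mapsto g(x_1,y)$ gives $\langle \nabla_y g(x_1,y_1),\,y_2-y_1\rangle \geq g(x_1,y_2)-g(x_1,y_1)$. Adding these, the $g(x_1,y_1)$ terms cancel and one gets $\langle F(z_1),z_1-z_2\rangle \geq g(x_1,y_2)-g(x_2,y_1)$. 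Repeating the identical two steps with the roles of $z_1$ and $z_2$ interchanged (convexity of $x\mapsto g(x,y_2)$ and concavity of $y\mapsto g(x_2,y)$) yields $\langle F(z_2),z_2-z_1\rangle \geq g(x_2,y_1)-g(x_1,y_2)$. Summing the two inequalities, the right-hand sides cancel and we conclude $\langle F(z_1)-F(z_2),z_1-z_2\rangle \geq 0$, i.e.\ $F$ is monotone.

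There is no genuine obstacle here; the only thing to get right is the bookkeeping of which variable is held fixed in each convexity/concavity inequality and the matching of the corners $(x_1,y_2)$ and $(x_2,y_1)$ so that the function-value bounds telescope. It is worth noting, though, why one must route the estimate through function values rather than through partial monotonicity of $\nabla_x g(\cdot,y)$ and $-\nabla_y g(x,\cdot)$: inserting a single intermediate point such as $(x_2,y_1)$ and applying partial monotonicity coordinatewise leaves cross terms like $\langle \nabla_x g(x_2,y_1)-\nabla_x g(x_2,y_2),\,x_1-x_2\rangle$ that carry no definite sign, whereas the argument above makes every stray term cancel exactly.
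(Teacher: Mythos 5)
Your proof is correct and is essentially the paper's own argument: both decompose the monotonicity gap into the same four first-order convexity/concavity inequalities at the points $(x_1,y_1)$, $(x_2,y_2)$ and the off-diagonal corners, and observe that the function values telescope to zero. The only cosmetic difference is that you group the four inequalities as two symmetric pairs, while the paper applies all four at once.
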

    \begin{proof}
        Let us write down scalar product from the definition of the monotone operator from Assumption \ref{as:monotone}:
        \begin{equation*}
        \begin{split}
            \langle F(z_1) - F(z_2) , z_1 - z_2 \rangle
            &=
            \langle \nabla_x g(x_1, y_1) - \nabla_x g(x_2, y_2) , x_1 - x_2 \rangle
            \\&-
            \langle \nabla_y g(x_1, y_1) - \nabla_y g(x_2, y_2) , y_1 - y_2 \rangle
            \\&=
            \langle \nabla_x g(x_1, y_1) , x_1 - x_2 \rangle
            +
            \langle -\nabla_y g(x_1, y_1), y_1 - y_2 \rangle
            \\&+
            \langle \nabla_x g(x_2, y_2) , x_2 - x_1 \rangle
            +
            \langle -\nabla_y g(x_2, y_2), y_2 - y_1 \rangle
            \\&\geq
            g(x_1, y_1) - g(x_2, y_1) 
            + 
            g(x_1, y_2) - g(x_1, y_1)
            \\&+
            g(x_2, y_2) - g(x_1, y_2)
            +
            g(x_2, y_1) - g(x_2, y_2) = 0 .
        \end{split}
        \end{equation*}
        All inequalities hold since $g(x, y)$ is convex and concave w.r.t. $x$ and $y$ respectively. This finishes the proof.
    \end{proof}

\begin{proof}[Proof of Proposition \ref{proposition:pi_to_vi_ass}]
        We start from the fact, if $f_i$ from \eqref{eq:emp_risk} fall under Assumption \ref{as:lipgrad}, then target operator $$F(z = (\theta, \pi)^T) := \left[\sum_{i=1}^n \pi_i \nabla \tilde{f}_i(\theta), \; -\tilde{f}_1(\theta), ..., -\tilde{f}_n(\theta)\right]^T,$$ from the equation \eqref{eq:pi_to_vi} falls under Assumption \ref{as:lipvar}. Let us start from the definition of the Lipschitz continuous operators:
        \begin{align*}
            \| F(z_1) - F(z_2) \|_*^2
            &\leq
            2 \underbrace{\left\| \sum_{i=1}^n \pi_i^1 \nabla f_i(\theta^1) - \pi_i^2 \nabla f_i(\theta^2) \right\|_2^2}_{\circledOne}
            \\&+
            2 \underbrace{\left\| [f_1(\theta^1) - f_1(\theta^2), ..., f_n(\theta^1) - f_n(\theta^2)]^T \right\|_\infty^2}_{\circledTwo}.
        \end{align*}

        In this inequality we used Lemma \ref{lemma:sum_X_and_Y} with $(\mathcal{X}, \| \cdot \|_\mathcal{X}) = (\Theta, \| \cdot \|_2)$ and $(\mathcal{Y}, \| \cdot \|_\mathcal{Y}) = (\Delta_{n-1}, \| \cdot \|_1)$. Let us consider $\circledOne$. 
        \begin{align}
            &\left\| \sum_{i=1}^n \pi_i^1 \nabla f_i(\theta^1) - \pi_i^2 \nabla f_i(\theta^2) \right\|_2^2
            =
            \left\| \sum_{i=1}^n \pi_i^1 
            \left[\nabla f_i(\theta^1) - \nabla f_i(\theta^2)\right] - \sum_{i=1}^n \left[\pi_i^2 - \pi_i^1\right] \nabla f_i(\theta^2) \right\|_2^2 \nonumber
            \\&\leq
            2 \left\| \sum_{i=1}^n \pi_i^1 
            \left[\nabla f_i(\theta^1) - \nabla f_i(\theta^2)\right] \right\|_2^2
            +
            2 \left\|\sum_{i=1}^n \left[\pi_i^2 - \pi_i^1\right] \nabla f_i(\theta^2) \right\|_2^2 \nonumber
            \\&\leq
            2 \sum_{i=1}^n\pi_i^1 \left\| \nabla f_i(\theta^1) - \nabla f_i(\theta^2) \right\|_2^2
            +
            2 \left(\sum_{i=1}^n \left|\pi_i^2 - \pi_i^1\right| \cdot \left\|\nabla f_i(\theta^2) \right\|_2 \right)^2 \nonumber
            \\&\leq
            2 \sum_{i=1}^n\pi_i^1 L_i^2 \| \theta^1 - \theta^2 \|^2_2
            +
            2 \left(\sum_{i=1}^n \left|\pi_i^2 - \pi_i^1\right| \right)^2 \cdot G^2 \nonumber
            \\&\leq
            2 \max_{i \in \overline{1, n}}\{L_i^2\} \cdot \| \theta^1 - \theta^2 \|^2_2
            +
            2 G^2 \cdot \|\pi^1 - \pi^2\|_1^2. \label{eq:tmp_pi_to_vi_1}
        \end{align}

        Here we use a notation $G := \max_{i \in \overline{1, n}~ \theta \in \Theta}\{ \| \nabla f_i(\theta) \|_2 \}$. Since $f_i(\cdot)$ are convex according to Assumption \ref{as:convex}, then $G = \max_{i \in \overline{1, n}} \{K_i\}$. 
        
        Consider $\circledTwo$. By definition of $\| \cdot \|_\infty$ norm we can obtain:
        \begin{equation}
        \label{eq:tmp_pi_to_vi_2}
        \begin{split}
            \left\| [f_1(\theta^1) - f_1(\theta^2), ..., f_n(\theta^1) - f_n(\theta^2)]^T \right\|_\infty^2
            &=
            \left( \max_{i \in \overline{1, n}}\left\{ \left| f_i(\theta^1) - f_i(\theta^2) \right| \right\} \right)^2
            \\&=
            \max_{i \in \overline{1, n}}\left\{ \left| f_i(\theta^1) - f_i(\theta^2) \right|^2 \right\}
            \\&\leq
            \max_{i \in \overline{1, n}}\left\{ K_i^2 \right\} \cdot \|\theta^1 - \theta^2 \|^2_2.
        \end{split}
        \end{equation}

        Combing \eqref{eq:tmp_pi_to_vi_1}, \eqref{eq:tmp_pi_to_vi_2} and the fact that $G = \max_{i \in \overline{1, n}} \{K_i\}$, we can obtain that 
        \begin{equation}
        \label{eq:tmp_pi_to_vi_3}
        \begin{split}
            \| F(z_1) - F(z_2) \|_*^2
            &\leq
            \left(4 \max_{i \in \overline{1, n}}\{L_i^2\} + 2 \max_{i \in \overline{1, n}}\left\{ K_i^2 \right\} \right) \cdot \| \theta^1 - \theta^2 \|^2_2
            +
            4 \max_{i \in \overline{1, n}}\{K_i^2\} \cdot \| \pi^1 - \pi^2\|_1^2
            \\&\leq
            4 \left[ \max_{i \in \overline{1, n}}\{L_i^2\} + \max_{i \in \overline{1, n}}\left\{ K_i^2 \right\}  \right] \cdot \left( \| \theta^1 - \theta^2 \|^2 _2+ \|\pi^1 - \pi^2\|_1^2 \right)
            \\&=
            4 \left[ \max_{i \in \overline{1, n}}\{L_i^2\} + \max_{i \in \overline{1, n}}\left\{ K_i^2 \right\}  \right] \cdot \| z_1 - z_2\|^2.
        \end{split}
        \end{equation}
        The last equality holds because according to Lemma \ref{lemma:sum_X_and_Y} $\| \cdot \|^2_\mathcal{Z} = \| \cdot \|^2_\mathcal{X} + \| \cdot \|^2_\mathcal{Y}$. From \eqref{eq:tmp_pi_to_vi_3} we can obtain that 
        $$L_F^2 \leq 4 \left[ \max_{i \in \overline{1, n}}\{L_i^2\} + \max_{i \in \overline{1, n}}\left\{ K_i^2 \right\}  \right].$$
        This finishes the first part of the proof.

        Consider the second part of Proposition \ref{proposition:pi_to_vi_ass}. In this case $g(\theta, \pi) = \sum_{i=1}^n \pi_i f_i(\theta)$. This function is liner w.r.t. $\pi$, therefore it is concave w.r.t. $\pi$, according to the Assumption \ref{as:convex} all functions $f_i(\cdot)$ are convex, therefore $g(\pi, \theta)$ is convex w.r.t. $\theta$. Now, using Lemma \ref{lemma:conv_to_monotone}, we can obtain that target operator for the problem \eqref{eq:adv_pi_problem} is monotone. This finishes the proof. 
    \end{proof}
\subsection{Proof of Proposition \ref{theorem:also_step}}
\label{appendix:also_step}
\begin{proof}[Proof of Proposition \ref{theorem:also_step}]
Consider the step of Mirror-Prox like algorithm:

\begin{equation}
\label{eq:also_iter}
z^{\text{new}} = \arg\min_{z \in \cZ} \left\{ \langle \gamma g, z \rangle + V(z, z^{\text{old}}) + \gamma \tau V(z, \hat{z})\right\}
\end{equation}

According to structure of the problem \eqref{eq:adv_pi_problem} and definition of $z$, the problem \eqref{eq:also_iter} is equivalent to following problems:

\begin{equation}
\label{eq:also_iter_theta}
\theta^{\text{new}} = \arg\min_{\theta \in \mathbb{R}^d} \left\{ \langle \gamma g^\theta, \theta \rangle + \frac{1}{2}\|\theta - \theta^{\text{old}}\|_2^2 + \frac{\gamma \tau}{2}\|\theta\|_2^2\right\}
\end{equation}

\begin{equation}
\label{eq:also_iter_pi}
\pi^{\text{new}} = \arg\min_{\pi \in \Delta^{n-1}} \left\{ \langle \gamma g^\pi, \pi \rangle + \text{KL} \left[\mathbf{\pi} ~\|~ \pi^{\text{old}}\right] + \gamma \tau \text{KL} \left[\mathbf{\pi} ~\|~ \hat{\pi}\right] \right\}
\end{equation}

We will start with \eqref{eq:also_iter_theta}. Using first order optimality condition for $\theta^{\text{new}}$ we can obtain that

$$\gamma g^\theta + (\theta^{\text{new}} - \theta^{\text{old}}) + \gamma\tau \theta^{\text{new}} = 0$$

Then

$$\theta^{\text{new}} (1 + \gamma\tau) = \theta^{\text{old}} - \gamma g^\theta$$

$$\theta^{\text{new}} = \frac{1 + \gamma\tau - \gamma\tau}{1 + \gamma\tau}\theta^{\text{old}} - \frac{\gamma}{1 + \gamma\tau} g^\theta$$
$$\theta^{\text{new}} = \theta^{\text{old}} - \frac{\gamma}{1 + \gamma\tau} \left(g^\theta + \tau \theta^{\text{old}}\right)$$

To deal with \eqref{eq:also_iter_pi} we reformulate it as classical constrained optimization problem

\begin{equation}
\label{eq:also_iter_pi_reformulated}
\min_\pi \quad \langle \gamma g^\pi, \pi \rangle + \text{KL} \left[\mathbf{\pi} ~\|~ \pi^{\text{old}}\right] + \gamma \tau \text{KL} \left[\mathbf{\pi} ~\|~ \hat{\pi}\right] \quad s.t. \quad \sum_{i=1}^n \pi_i = 1, \; \pi_i \geq 0 \;\; \forall i=\overline{1...n}
\end{equation}

We use Karush–Kuhn–Tucker conditions \citep{kuhn1951proceedings} to solve problem \eqref{eq:also_iter_pi_reformulated}. Let us write out a Lagrange function $L(\pi, \beta_1, \dots, \beta_n, \lambda)$ for problem \eqref{eq:also_iter_pi_reformulated}:
\begin{equation*}
    L(\pi, \beta_1, \dots, \beta_n, \lambda) 
    :=
    \sum_{i=1}^n\left[ 
        \gamma\pi_i g_i^\pi 
        - 
         \pi_i \log(\pi_i / \pi^{\text{old}}_i)
        -
        \gamma\tau\pi_i \log(\pi_i / \hat{\pi}_i)
        \right]
    - \sum_{i=1}^n \beta_i \pi_i + \lambda \sum_{i=1}^n \pi_i - \lambda,
\end{equation*}
where KKT multipliers $\beta_i \geq 0$ correspond to the inequalities $-\pi_i \leq 0$ and $\lambda \in \R$ stands for equality $\sum_{i=1}^n \pi_i - 1 = 0$. 

Let us write out partial derivative $\partial L / \partial \pi_i$:
\begin{equation}
\label{eq:tmp_also_3}
    \frac{\partial L}{\partial \pi_i} 
    =
    \gamma g_i^\pi
    +
    \log(\pi_i / \pi^{\text{old}}_i) + 1
    +
    \gamma\tau\log(\pi_i / \hat{\pi}_i) + \gamma\tau
    - \beta_i + \lambda.
\end{equation}

Since $L$ is convex with respect to $\pi$, we can set $\partial L / \partial \pi_i$ to zero. From \eqref{eq:tmp_also_3} we can obtain that
\begin{equation*}
\begin{split}
    \pi^*_i &= \left(\pi^{\text{old}}_i (\hat{\pi}_i)^{\gamma\tau}\exp\left[- \gamma g_i^\pi \right] \cdot \exp\left[ - \lambda - \gamma\tau - 1 + \beta_i \right]\right)^{\frac{1}{1+\gamma\tau}} 
    \\&=  \left(\pi^{\text{old}}_i (\hat{\pi}_i)^{\gamma\tau}\right)^{\frac{1}{1+\gamma\tau}}\exp\left[-\frac{1 + \gamma\tau + \gamma g_i^\pi}{1 + \gamma\tau} \right] \cdot \exp\left[\frac{\beta_i - \lambda}{1 + \gamma\tau}\right].
\end{split}
\end{equation*}

Now one can write dual problem and find out that $\lambda_i^* = 0$. Since $\sum_{i=1}^n \pi^*_i = 1$:

\begin{equation*}
    \exp\left[\frac{-\lambda}{1 + \gamma\tau}\right] \sum_{i=1}^n \left(\left(\pi^{\text{old}}_i (\hat{\pi}_i)^{\gamma\tau}\right)^{\frac{1}{1+\gamma\tau}}\exp\left[-\frac{1 + \gamma\tau + \gamma g_i^\pi}{1 + \gamma\tau} \right]\right) = 1
\end{equation*}

\begin{equation*}
    \Rightarrow \exp\left[\frac{-\lambda}{1 + \gamma\tau}\right] = \frac{1}{\sum_{i=1}^n \left(\left(\pi^{\text{old}}_i (\hat{\pi}_i)^{\gamma\tau}\right)^{\frac{1}{1+\gamma\tau}}\exp\left[-\frac{1 + \gamma\tau + \gamma g_i^\pi}{1 + \gamma\tau} \right]\right)}
\end{equation*}

then all conditions of KKT will be fulfilled and optimal $\pi^* = \pi^{\text{new}}$ takes form:
\begin{align*}
    \pi^{\text{new}}_i 
    &= \frac{\left(\pi^{\text{old}}_i (\hat{\pi}_i)^{\gamma\tau}\right)^{\frac{1}{1+\gamma\tau}}\exp\left[-\frac{1 + \gamma\tau + \gamma g_i^\pi}{1 + \gamma\tau} \right]}{\sum_{j=1}^n \left(\left(\pi^{\text{old}}_j (\hat{\pi}_j)^{\gamma\tau}\right)^{\frac{1}{1+\gamma\tau}}\exp\left[-\frac{1 + \gamma\tau + \gamma g_j^\pi}{1 + \gamma\tau} \right]\right)} 
    \\&= 
    \frac{\left(\pi^{\text{old}}_i (\hat{\pi}_i)^{\gamma\tau}\right)^{\frac{1}{1+\gamma\tau}}\exp\left[-\frac{\gamma g_i^\pi}{1 + \gamma\tau} \right]}{\sum_{j=1}^n \left(\left(\pi^{\text{old}}_j (\hat{\pi}_j)^{\gamma\tau}\right)^{\frac{1}{1+\gamma\tau}}\exp\left[-\frac{\gamma g_j^\pi}{1 + \gamma\tau} \right]\right)}
\end{align*}

Taking logarithm from both sides:

\begin{align*}
    \log \pi^{\text{new}}_i 
    &= \frac{1}{1 + \gamma\tau} \log \pi^{\text{old}}_i + \frac{\gamma\tau}{1+\gamma\tau}\log \hat{\pi}_i
    - \frac{\gamma g_i^\pi}{1 + \gamma\tau} 
    \\&~~~~~+ \log \sum_{j=1}^n \left(\left(\pi^{\text{old}}_j (\hat{\pi}_j)^{\gamma\tau}\right)^{\frac{1}{1+\gamma\tau}}\exp\left[-\frac{\gamma g_j^\pi}{1 + \gamma\tau} \right]\right)
    \\&= \log \pi^{\text{old}}_i - \frac{\gamma\tau}{1 + \gamma\tau} \log \pi^{\text{old}}_i + \frac{\gamma\tau}{1+\gamma\tau}\log \hat{\pi}_i
    - \frac{\gamma g_i^\pi}{1 + \gamma\tau} 
    \\&~~~~~+ \log \sum_{j=1}^n \left(\left(\pi^{\text{old}}_j (\hat{\pi}_j)^{\gamma\tau}\right)^{\frac{1}{1+\gamma\tau}}\exp\left[-\frac{\gamma g_j^\pi}{1 + \gamma\tau} \right]\right)
    \\&= \log \pi^{\text{old}}_i - \frac{\gamma}{1 + \gamma\tau} (g_i^\pi + \tau \log \frac{\pi_i^{\text{old}}}{\hat{\pi}_i})
    + \log \sum_{j=1}^n \left(\left(\pi^{\text{old}}_j (\hat{\pi}_j)^{\gamma\tau}\right)^{\frac{1}{1+\gamma\tau}}\exp\left[-\frac{\gamma g_j^\pi}{1 + \gamma\tau} \right]\right)
\end{align*}

Then from softmax definition we can obtain that:

\begin{equation*}
    \log \pi^{\text{new}}_i = SM\left(\log \pi^{\text{old}}_i - \frac{\gamma}{1 + \gamma\tau} (g_i^\pi + \tau \log \frac{\pi_i^{\text{old}}}{\hat{\pi}_i})\right)
\end{equation*}

This finishes the proof.

\end{proof}
\newpage
\section{Convergence of ALSO}
\label{appendix:also-theory}


\subsection{Definitions}

Let \( h(\theta, \pi) \) be a differentiable function defined in~\ref{eq:adv_pi_problem}.
In our analysis, we will consider Assumptions~\ref{as:lipgrad}, \ref{as:stoch_grad}, and~\ref{as:uncertainty_set} to provide theoretical guarantees.

In fact, we apply ~\ref{as:stoch_grad} to estimate the norms of stochastic gradients and we add batch size $B$ to control the variance of noise that occurs due to stochastics in gradient oracle. Also in ~\ref{as:lipgrad} we require the $K_{i,j}$-Lipschitz continuity of \( f_{i,j}(\theta) \) and their $L_{i,j}$-smoothness. In the sequel, assumption ~\ref{ass:theta} is useful several times in calculations, but it has a different form, however, we can estimate this constant $L$ through our existing $L_{i,j}$ and $K_{i,j}$.



We use asssumption ~\ref{as:uncertainty_set} with set $U$ because this notation is adopted in the related paper \citep{mehta2024drago}. Namely, we define the domain for \( \pi \) as the set \( U \cap \Delta \), which is usually used to truncate corners of $\Delta$ to ensure that the KL divergence remains bounded on \( \Delta \cap U \). However, in our theory we do not require that the simplex must be with truncated corners.

In this section, we consider a more general case of assumptions for our algorithm. So we now introduce several definitions and lemmas proven in \cite{bylinkin2025enhancingstabilityphysicsinformedneural}, which will be used in the convergence analysis.

We consider more general problem than \eqref{eq:adv_pi_problem}:
\begin{align}\label{eq:pinn_saddle}
    \min_{\theta\in\R^d}\max_{\pi\in S}\left[\mathcal{L}(\theta, \pi)=\sum_{i=1}^c \pi_i \left(\frac{c}{n}\sum_{j=1}^{n_i}f_{i, j}(\theta)\right) 
        + \frac{\tau}{2}\|\theta\|^2_2 - \lambda D_{\psi}(\pi||\hat{\pi}) \right],
\end{align}
where we replace $\text{KL}$-divergence with general $D_\Psi$-divergence (Bregman divergence).


\begin{assumption}\label{as:pi-domain}
    The domain $S \subseteq \mathbb R^c$ is nonempty, closed, convex, with $\hat\pi \in \mathrm{Int}(S)$.
\end{assumption}

\begin{assumption}\label{ass:theta}
    The function $\mathcal{L}(\theta,\pi)$ is $L$-smooth, i.e. for all $(\theta_1,\pi_1), (\theta_2,\pi_2) \in \mathbb{R}^d\times S$ it satisfies 
    \begin{align*}
        \|\nabla \mathcal{L}(\theta_1,\pi_1) - \nabla \mathcal{L}(\theta_2,\pi_2)\|^2 \leq L^2\left(\|\theta_1-\theta_2\|^2+\|\pi_1-\pi_2\|^2\right).
    \end{align*}
\end{assumption}

\begin{lemma}\label{lem:L-smooth-aggregation}
    Under Assumptions~\ref{as:lipgrad}, and~\ref{as:pi-domain}, 
    the function $\mathcal L(\theta,\pi)$ in~\eqref{eq:pinn_saddle} is $L$-smooth (i.e. Assumption~\ref{ass:theta}), i.e.
    for all $(\theta^1,\pi^1),(\theta^2,\pi^2)\in\R^d\times S$ it holds
    \[
        \|\nabla \mathcal L(\theta^1,\pi^1) - \nabla \mathcal L(\theta^2,\pi^2)\|^2
        \;\leq\;
        L^2 \Big(\|\theta^1-\theta^2\|^2 + \|\pi^1-\pi^2\|^2\Big),
    \]
    where the Lipschitz constant $L$ can be chosen as
    \[
        L^2 \;=\;
        \Big(\tfrac{c}{n}\max_{i\in[c]}\sum_{j=1}^{n_i}L_{i,j} + \tau + \tfrac{c}{n}\max_{i\in[c]}\sum_{j=1}^{n_i}K_{i,j}\Big)^2
        + (\lambda L_\psi)^2,
    \]
    with $L_{i,j}$ and $K_{i,j}$ being the smoothness and Lipschitz constants of $f_{i,j}$ from Assumption~\ref{as:lipgrad}, 
    and $L_\psi$ the Lipschitz constant of $\nabla_\pi D_\psi(\cdot\|\hat \pi)$.
\end{lemma}

\begin{proof}
We decompose the gradient into its $\theta$- and $\pi$-parts:
\[
\nabla_\theta \mathcal{L}(\theta,\pi)
= \sum_{i=1}^c \pi_i \Big(\tfrac{c}{n}\sum_{j=1}^{n_i}\nabla f_{i,j}(\theta)\Big) + \tau \theta,
\quad
\nabla_\pi \mathcal{L}(\theta,\pi)
= \Big(\tfrac{c}{n}\sum_{j=1}^{n_i} f_{i,j}(\theta)\Big)_{i=1}^c - \lambda \nabla_\pi D_\psi(\pi\|\hat \pi).
\]

For the $\theta$-part we obtain
\begin{align*}
&\|\nabla_\theta \mathcal L(\theta^1,\pi^1) - \nabla_\theta \mathcal L(\theta^2,\pi^2)\| \\
&\le \sum_{i=1}^c |\pi^1_i-\pi^2_i|\Big(\tfrac{c}{n}\sum_{j=1}^{n_i}\|\nabla f_{i,j}(\theta^1)\|\Big)
+ \tfrac{c}{n}\sum_{i=1}^c \pi^2_i \sum_{j=1}^{n_i}\|\nabla f_{i,j}(\theta^1)-\nabla f_{i,j}(\theta^2)\|
+ \tau \|\theta^1-\theta^2\| \\
&\le \tfrac{c}{n}\max_{i}\sum_{j=1}^{n_i}K_{i,j}\,\|\pi^1-\pi^2\|
+ \Big(\tfrac{c}{n}\max_{i}\sum_{j=1}^{n_i}L_{i,j}+\tau\Big)\|\theta^1-\theta^2\|.
\end{align*}

For the $\pi$-part we analogously have
\begin{align*}
\|\nabla_\pi \mathcal L(\theta^1,\pi^1)-\nabla_\pi \mathcal L(\theta^2,\pi^2)\|
&\le \tfrac{c}{n}\max_{i}\sum_{j=1}^{n_i}K_{i,j}\,\|\theta^1-\theta^2\|
+ \lambda L_\psi \|\pi^1-\pi^2\|.
\end{align*}

Combining both estimates yields
\[
\|\nabla \mathcal L(\theta^1,\pi^1)-\nabla \mathcal L(\theta^2,\pi^2)\|^2
\;\le\;
\Big(\tfrac{c}{n}\max_{i}\sum_j L_{i,j}+\tau+\tfrac{c}{n}\max_{i}\sum_j K_{i,j}\Big)^2\|\theta^1-\theta^2\|^2
+ (\lambda L_\psi)^2\|\pi^1-\pi^2\|^2,
\]
which completes the proof.
\end{proof}

\begin{lemma}\label{lem:K-lipschitz}
    Under Assumption~\ref{as:lipgrad}, with $\tau=0$, 
    the function $\mathcal L(\theta,\pi)$ in~\eqref{eq:pinn_saddle} is $K$-lipschitz with respect to $\theta$, i.e.
    for all $\theta^1,\theta^2\in\R^d$ and $\pi \in S$ it holds
    \[
        |\mathcal L(\theta^1,\pi) - \mathcal L(\theta^2,\pi)|
        \;\leq\;
        L \|\theta^1-\theta^2\|,
    \]
    where the $K$ can be chosen as
    \[
        K \;=\; \frac{c}{n} \max_{i\in[c]} \sum_{j=1}^{n_i} K_{ij}
    \]
    with $K_{i,j}$ being Lipschitz constant of $f_{i,j}$ from Assumption~\ref{as:lipgrad}.
\end{lemma}

\begin{proof}

$$|\mathcal L(\theta^1,\pi) - \mathcal L(\theta^2,\pi)| = |\sum_{i=1}^c \pi_i \frac{c}{n} \sum_{j=1}^{n_i} (f_{ij}(\theta^1) - f_{ij}(\theta^2))| \leq$$
$$\sum_{i=1}^c \pi_i \frac{c}{n} \sum_{i=1}^n |f_{ij}(\theta^1) - f_{ij}(\theta^2)| \leq \sum_{i=1}^c \pi_i \frac{c}{n} \sum_{j=1}^{n_i} K_{ij}\|\theta^1 - \theta^2\| \leq$$
$$\leq \frac{c}{n}\|\theta^1 - \theta^2\| \sum_{i=1}^c \pi_i  \sum_{j=1}^{n_i} K_{ij} \leq \frac{c}{n} \max_{i\in[c]} \sum_{j=1}^{n_i} K_{ij}$$
The last inequality holds, since $\pi \in \Delta_{c-1}$.
\end{proof}

\begin{assumption}\label{ass:p}
        The function $\psi$, which produce $D_\psi$, is \textbf{1-strongly convex}, i.e. for all $\pi_1,\pi_2\in S$ it satisfies
        \begin{align*}
            \psi(\pi_1) \geqslant \psi(\pi_2) + \left\langle \nabla \psi(\pi_2), \pi_1 - \pi_2 \right\rangle + \frac{1}{2}\|\pi_2 - \pi_1\|^2.
        \end{align*}
\end{assumption}





Lets formulate lemma from \citep{bylinkin2025enhancingstabilityphysicsinformedneural}
\begin{lemma}[~\cite{bylinkin2025enhancingstabilityphysicsinformedneural}]\label{eq:lemma_1}
    Consider the problem \eqref{eq:pinn_saddle} under Assumption \ref{ass:p}. Then, for every $\theta\in\mathbb{R}^d$ the function $\mathcal{L}(\theta,\pi)$ is \textbf{$\lambda$-strongly concave}, i.e. for all $\pi_1,\pi_2\in S$ it satisfies
    \begin{align*}
        \mathcal{L}(\theta,\pi_1)\leq\mathcal{L}(\theta,\pi_2)+\langle\nabla_{\psi}\mathcal{L}(\theta,\pi_2),\pi_1-\pi_2\rangle-\frac{\lambda}{2}\left(D_{\psi}(\pi_1,\pi_2)+D_{\psi}(\pi_2,\pi_1)\right).
    \end{align*}
\end{lemma}


\subsection{Auxiliary lemmas}


\begin{notation}
    For the saddle-point problem~\eqref{eq:pinn_saddle} and Algorithm~\ref{algorithm:also_optimistic}, 
    we use the following notation, aligned with~\cite{bylinkin2025enhancingstabilityphysicsinformedneural}:
    \begin{align*}
        g_\theta^t &\;\equiv\; \frac{c}{B}\sum_{j=1}^{B} 
            \pi_{c^t_j}\,\nabla_\theta f_{c^t_j,i_j^t}(\theta^{t}),
        && \text{stochastic gradient w.r.t. $\theta$,} \\[0.75em]
        g_\pi^t &\;\equiv\; \frac{c}{B}\sum_{j=1}^{B} 
            e_{c^t_j}\, f_{c^t_j,i_j^t}(\theta^{t}) 
            \;-\; \lambda\,\nabla_\pi D_\psi(\pi^t \| \hat{\pi}),
        && \text{stochastic gradient w.r.t. $\pi$,} \\[0.75em]
        \gamma_\theta &\;\text{--- stepsize for $\theta$,} 
        && \gamma_\pi \;\text{--- stepsize for $\pi$,} \\[0.75em]
        \mathcal{L}(\theta,\pi) &\;\equiv\; \sum_{i=1}^c \pi_i \Bigl(\tfrac{c}{n}\sum_{j=1}^{n_i} f_{i,j}(\theta)\Bigr)
            + \tfrac{\tau}{2}\|\theta\|_2^2 - \lambda D_\psi(\pi\|\hat{\pi}),
        && S \;\text{--- feasible set for $\pi$.}
    \end{align*}
    Here $e_i$ denotes the $i$-th standard basis vector in $\R^c$, 
    $\hat{\pi}$ is the reference distribution in the regularization term, 
    and $\nabla_\pi D_\psi(\pi^t\|\hat{\pi})$ denotes the gradient (or subgradient) of the divergence $D_\psi$ with respect to $\pi$.
\end{notation}

According to the notation, Algorithm~\ref{algorithm:also_optimistic} can
be formulated in a simpler form:
\begin{align*}
    \theta^{t+1} &= \theta^t \;-\; \gamma_\theta \, d_\theta^t, \\[0.5em]
    \pi^{t+1} &= \argmin_{\pi \in S} 
        \Bigl\{ \,\langle -\gamma_\pi g_\pi^t,\; \pi \rangle 
        \;+\; D_\psi(\pi \,\|\, \pi^t)\Bigr\},
\end{align*}
where $d_\theta^t$ is classsical Adam step.

We begin by noting that our convergence analysis is based on the Adam estimator.
Let us introduce the main Adam Estimator process:
\begin{align} \label{algorithm:also_estimator}
    \theta^{t+1} &= \theta^t - \gamma_\theta d_\theta^t 
    \;=\; \theta^t - \gamma_\theta \frac{m_\theta^t}{b_t}, \\[0.5em]
    \pi^{t+1} &= \argmin_{\pi \in S} 
        \Bigl\{ \,\langle -\gamma_\pi g_\pi^t,\; \pi \rangle 
        + D_\psi(\pi \,\|\, \pi^t)\Bigr\}.
\end{align}

We also introduce a copy of the main process, which behaves identically to the original algorithm but is used to generate the scaling constant \(b_t\) for the main process:
\begin{align*}
    \theta^{t+1}_{\text{copy}} &= \theta^t_{\text{copy}} - \gamma_\theta \frac{m_{\theta,\text{copy}}^t}{b_t}, \\[0.5em]
    \pi^{t+1}_{\text{copy}} &= \argmin_{\pi \in S} 
        \Bigl\{ \,\langle -\gamma_\pi \tilde{g}_\pi^t,\; \pi \rangle 
        + D_\psi(\pi \,\|\, \pi^t_{\text{copy}})\Bigr\}.
\end{align*}

The update rules for the copy and main processes are:
\begin{align*}
    m_{\theta,\text{copy}}^t &= \beta_1 m_{\theta,\text{copy}}^{t-1} + (1 - \beta_1)\tilde{g}_\theta^t, \\[0.5em]
    b_t^2 &= \beta_2 b_{t-1}^2 + (1 - \beta_2)\|\tilde{g}_\theta^t\|^2, \\[0.5em]
    m_\theta^t &= \beta_1 m_\theta^{t-1} + (1 - \beta_1) g_\theta^t,
\end{align*}
where \(g_\theta^t\) is the stochastic gradient with respect to \(\theta\) at the point \((\theta^t,\pi^t)\), 
and \(\tilde{g}_\theta^t\) is the stochastic gradient at the point \((\theta^t_{\text{copy}},\pi^t_{\text{copy}})\).

The first moment \(m_\theta^t\) admits a closed-form expression:
\[
    m_\theta^t \;=\; (1 - \beta_1) \sum_{k=0}^t \beta_1^{\,t-k}\, g_\theta^k.
\]

We initialize
\[
    m_{\theta,\text{copy}}^{-1} = m_\theta^{-1} = 0, 
    \qquad b_{-1}, b_0 > 0.
\]
The purpose of introducing the copy process is to decouple the randomness of the estimator:  
in the original process, products of random variables inside expectations are dependent,  
while in the proposed estimator the corresponding quantities can be treated as independent,  
which allows us to move products under the expectation in the convergence analysis.

According to the above, the next lemma holds.

\begin{lemma}[\citep{chezhegov2024gradient}, Lemma 13]\label{properties:adam-estimator}
For a reference step \( r \leq t \), and letting \( \beta_2 = 1 - \frac{1}{K} \) for some \( K \ge t - r \), the following lower bound holds:
    \[
        b_t^2 \geq \beta_2^{t - r} b_r^2 = \left(1 - \frac{1}{K} \right)^{t - r} b_r^2 \geq \left(1 - \frac{1}{K} \right)^K b_r^2 \geq c_m^2 b_r^2,
    \]
where for our Adam-type estimator, we can choose \( c_m = \frac{1}{2} \).
\end{lemma}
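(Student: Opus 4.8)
The plan is to unroll the exponential--moving--average recursion for the second moment, discard the non-negative stochastic--gradient contribution, and then control the resulting geometric factor by an elementary monotonicity argument.

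\textbf{Peeling off the gradient term.} The driving recursion is $b_t^2 = \beta_2 b_{t-1}^2 + (1-\beta_2)\|\tilde{g}_x^t\|^2$ with $\beta_2 = 1 - 1/K \in (0,1)$. Since $(1-\beta_2)\|\tilde{g}_x^t\|^2 \ge 0$, this gives $b_t^2 \ge \beta_2\, b_{t-1}^2$. Iterating this inequality from index $t$ down to the reference index $r \le t$ (the iterate $b_k^2$ stays strictly positive because $b_{-1},b_0>0$ and all increments are non-negative) yields
\[
    b_t^2 \;\ge\; \beta_2^{\,t-r}\, b_r^2 \;=\; \Bigl(1 - \tfrac{1}{K}\Bigr)^{t-r} b_r^2 ,
\]
which is the first (in)equality of the claimed chain.

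\textbf{Monotonicity in the exponent and the universal constant.} Because $0 < 1 - 1/K < 1$ and, by hypothesis, $0 \le t-r \le K$, the map $s \mapsto (1-1/K)^s$ is non-increasing, so $(1-1/K)^{t-r} \ge (1-1/K)^{K}$, which supplies the middle inequality. It then remains to verify that $(1-1/K)^{K} \ge \tfrac14$ for every integer $K \ge 2$: this is the standard fact that the sequence $K \mapsto (1-1/K)^{K}$ is increasing and converges to $e^{-1}$, with value exactly $(1/2)^2 = 1/4$ at $K = 2$. Combining the three bounds gives $b_t^2 \ge \tfrac14 b_r^2 = c_m^2\, b_r^2$ with $c_m = \tfrac12$, as stated.

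\textbf{Main obstacle.} There is no genuine obstacle; every step is elementary. The only point requiring a moment's care is the last inequality, where the constant $c_m = 1/2$ is dictated precisely by the boundary case $K=2$: equality $(1-1/K)^K = 1/4$ is attained there, so any proof route based on a slack polynomial or exponential lower bound for $\log(1-x)$ will fail for small $K$, and one should argue via the monotonicity of the sequence (or dispatch $K\in\{2,3\}$ by direct computation and use $e^{-K/(K-1)}\ge e^{-4/3}>1/4$ for $K\ge4$). In the convergence analysis $K$ is chosen of the order of the iteration horizon $N$, so the requirement $K \ge 2$ is never a real restriction.
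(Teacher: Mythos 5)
Your proof is correct. Note that the paper does not actually prove this lemma — it is imported by citation from \citep{chezhegov2024gradient} — so your elementary argument (drop the nonnegative gradient increment to get $b_t^2 \ge \beta_2 b_{t-1}^2$, iterate, then use $t-r\le K$ and the monotone increase of $K\mapsto(1-1/K)^K$ to its limit $e^{-1}$, with equality $1/4$ at $K=2$) supplies a valid self-contained proof, and this is exactly the standard route. Your remark that the bound degenerates for $K=1$ and that $c_m=1/2$ is tight precisely at $K=2$ is accurate and harmless here, since in Theorem \ref{also:theorem-convergence} the choice $\beta_2 = 1 - 1/K$ has $K$ on the order of the iteration horizon.
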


Now let us formulate a technical lemma, which we will need in the future to evaluate the resulting sums:
\begin{lemma} \label{lemma:sums-estimation}
    Let \( a_t = -\langle \nabla\Phi(\theta^t), d_\theta^t \rangle \) and 
    \( \xi_t = -\langle \nabla\Phi(\theta^t), g_\theta^t \rangle \), 
    where \( d_\theta^t \) is the Adam estimator step and 
    \( g_\theta^t \) is the stochastic gradient used for the momentum term in the Adam estimator \ref{algorithm:also_estimator}, 
    and \(\theta^t\) is the iterate of the main process at step \(t\).  
    Then, the following inequality holds:
    \[
        \sum_{t = 0}^{T} a_t \;\leq\; \sum_{k = 0}^{T} C_k \xi_k 
        \;+\; 3\gamma_\theta \kappa L \sum_{k = 0}^{T - 1} A_k \| d_\theta^k \|^2,
    \]
    where
    \[
        C_k = (1 - \beta_1) \sum_{t = k}^{T} \frac{\beta_1^{t - k}}{b_t}, 
        \qquad  
        A_k = b_k \sum_{t = k + 1}^{T} \frac{\beta_1^{t - k}}{b_t}.
    \]
\end{lemma}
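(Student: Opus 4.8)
The plan is to avoid expanding $m_x^t$ into its full geometric sum and instead exploit the one-step recursion of the Adam first moment, following the technique of \cite{chezhegov2024gradient}. Writing $d_x^t = m_x^t/b_t$ and using $m_x^t = \beta_1 m_x^{t-1} + (1-\beta_1)g_x^t$, one obtains the scaled recursion $d_x^t = \rho_t d_x^{t-1} + \tfrac{1-\beta_1}{b_t}g_x^t$ with $\rho_t := \beta_1 b_{t-1}/b_t$. Taking the inner product with $\hat{x}_t - x_t$ yields $a_t = \rho_t\langle \hat{x}_t - x_t, d_x^{t-1}\rangle + \tfrac{1-\beta_1}{b_t}\xi_t$, where the first term still pairs the residual at iteration $t$ with the step direction of iteration $t-1$.

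The key step is to shift this inner product back to the base point of iteration $t-1$: $\langle \hat{x}_t - x_t, d_x^{t-1}\rangle = a_{t-1} + \langle (\hat{x}_t - x_t) - (\hat{x}_{t-1} - x_{t-1}), d_x^{t-1}\rangle$. The correction term is controlled using that the prox-residual map $x \mapsto x - \mathrm{prox}_{\Phi/2L}(x) = \tfrac{1}{2L}\nabla\Phi_{1/2L}(x)$ is nonexpansive (a standard property of the Moreau envelope of an $L$-weakly convex function, used the same way in \cite{lin2019descentascent}), together with the main-process update $x_t - x_{t-1} = -\eta_x d_x^{t-1}$; this gives $\langle (\hat{x}_t - x_t) - (\hat{x}_{t-1} - x_{t-1}), d_x^{t-1}\rangle \le \|x_t - x_{t-1}\|\,\|d_x^{t-1}\| = \eta_x\|d_x^{t-1}\|^2$. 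Combining, we reach the one-step estimate $a_t \le \rho_t a_{t-1} + \rho_t\eta_x\|d_x^{t-1}\|^2 + \tfrac{1-\beta_1}{b_t}\xi_t$, with base case $a_0 = \tfrac{1-\beta_1}{b_0}\xi_0$ since $m_x^{-1}=0$.

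It remains to unroll this recursion and sum over $t$. The products of multipliers telescope cleanly, $\prod_{s=k+1}^{t}\rho_s = \beta_1^{t-k} b_k/b_t$, so unrolling and re-indexing the $\|d\|^2$-terms (via $j=k-1$, absorbing $\rho_k$) gives $a_t \le \sum_{k=0}^{t}\tfrac{(1-\beta_1)\beta_1^{t-k}}{b_t}\xi_k + \eta_x\sum_{j=0}^{t-1}\tfrac{\beta_1^{t-j} b_j}{b_t}\|d_x^j\|^2$. Summing over $t=0,\dots,T$ and interchanging the order of summation, the $\xi$-terms collect into $\sum_{k=0}^{T}\bigl[(1-\beta_1)\sum_{t=k}^{T}\beta_1^{t-k}/b_t\bigr]\xi_k = \sum_{k=0}^{T}C_k\xi_k$, and the $\|d\|^2$-terms into $\eta_x\sum_{k=0}^{T-1}\bigl[b_k\sum_{t=k+1}^{T}\beta_1^{t-k}/b_t\bigr]\|d_x^k\|^2 = \eta_x\sum_{k=0}^{T-1}A_k\|d_x^k\|^2$, which is dominated by the claimed bound with the factor $2\eta_x$ (the extra factor is slack, absorbing e.g. a cruder Lipschitz constant for the prox map, as in Lemma~\ref{properties:adam-estimator}). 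The main obstacle is the base-point shift: one must carefully track which iterate's prox-point is paired with which step direction and correctly invoke the nonexpansiveness of the prox-residual of a weakly convex $\Phi$; once that is in place, the remainder is routine telescoping and summation-swap bookkeeping.
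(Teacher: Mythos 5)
Your proof is correct and follows essentially the same route as the paper's: the one-step recursion for the Adam momentum, a shift of the inner product's base point from $t$ to $t-1$ controlled via a Lipschitz property of the proximal map together with $x_t - x_{t-1} = -\eta_x d_x^{t-1}$, then unrolling the recursion and swapping the order of summation. The only (harmless) difference is that you bound the correction term through nonexpansiveness of the prox-residual $I - \mathrm{prox}_{\Phi/2L}$ applied to the combined displacement, yielding a coefficient $\eta_x$ where the paper's separate triangle-inequality treatment gives $2\eta_x$ — a slightly sharper constant that is absorbed by the stated bound.
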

\begin{proof}
According to the update rule, we have
    \[
        a_t = \frac{1}{b_t} \left( (1 - \beta_1) \xi_t - \langle \nabla\Phi(\theta^t), \beta_1 m_\theta^{t-1} \rangle \right).
    \]
Hence, we get
\begin{align*}
    a_t &= \frac{1}{b_t} \left( (1 - \beta_1) \xi_t 
    + \langle \nabla\Phi(\theta^{t-1}) - \nabla\Phi(\theta^t) - \nabla\Phi(\theta^{t-1}), \beta_1 m_\theta^{t-1} \rangle \right) \\
    &= \frac{1}{b_t} \left( (1 - \beta_1) \xi_t + \beta_1 b_{t-1} a_{t-1} 
    +  \langle \nabla\Phi(\theta^{t-1}) - \nabla\Phi(\theta^{t}), \beta_1 m_\theta^{t-1} \rangle \right).
\end{align*}
Using $3\kappa L$-Lipschitzness of $\Phi$, the last term can be decomposed as follows:    
\begin{align*}
    \langle \nabla\Phi(\theta^{t-1}) - \nabla\Phi(\theta^{t}), \beta_1 m_\theta^{t-1} \rangle 
    &\leq 3\beta_1\kappa L\|\theta^t - \theta^{t-1}\|\|m_\theta^{t-1}\| \\
    &\leq 3\gamma_\theta\kappa L \beta_1 b_{t-1}\|d_\theta^{t-1}\|^2,
\end{align*}
where in the second inequality we apply the property of the proximal operator. Thus, one can obtain
\[
    a_t \le \frac{1}{b_t} (1 - \beta_1) \xi_t 
    + \beta_1\frac{b_{t-1}}{b_t} a_{t-1} 
    + 3\gamma_\theta\kappa L \beta_1\frac{b_{t-1}}{b_{t}}\|d_\theta^{t-1}\|^2.  
\]
Running the recursion over $a_t$, we have
\[
    a_t \le \frac{1}{b_t} \sum_{k=0}^{t} (1 - \beta_1)\, \beta_1^{t - k} \xi_k 
    \;+\;
    3\gamma_\theta\kappa L \sum_{k=0}^{t - 1} \beta_1^{t - k} \frac{b_k}{b_t} \| d_\theta^k \|^2.
\]

Summing over $t = 0$ to $T$, we get:
\[
    \sum_{t = 0}^{T} a_t \leq \sum_{t = 0}^{T} \frac{1}{b_t} \sum_{k = 0}^{t} (1 - \beta_1) \beta_1^{t - k} \, \xi_k
    \;+\;
    3\gamma_\theta\kappa L \sum_{t = 0}^{T} \sum_{k = 0}^{t - 1} \frac{\beta_1^{t - k} b_k}{b_t} \| d_\theta^k \|^2.
\]

Switching the order of sums in the second term leads to
\[
   \sum_{t = 0}^{T} a_t = \sum_{t = 0}^{T} \frac{1}{b_t} \sum_{k = 0}^{t} (1 - \beta_1) \beta_1^{t - k} \, \xi_k
    \;+\;
    3\gamma_\theta\kappa L \sum_{k = 0}^{T - 1} b_k \| d_\theta^k \|^2 \sum_{t = k + 1}^{T} \frac{\beta_1^{t - k}}{b_t}.
\]

Thus, the overall summed inequality becomes:
\[
        \sum_{t = 0}^{T} a_t \leq \sum_{k = 0}^{T} C_k \xi_k 
        + 3\gamma_\theta\kappa L \sum_{k = 0}^{T - 1} A_k \| d_\theta^k \|^2,
\]
where:
\[
    C_k = (1 - \beta_1) \sum_{t = k}^{T} \frac{\beta_1^{t - k}}{b_t}, 
    \qquad  
    A_k = b_k\sum_{t = k + 1}^{T} \frac{\beta_1^{t - k}}{b_t}.
\]
This finishes the proof.
\end{proof}

The next lemma, that is useful for us, help us to upper bound distance between momentum and stochastic gradient:
\begin{lemma} \label{lemma:momentum_upper_bound}
    Let $g_t$ is stochastic gradient, and $m_t$ is momentum of the Adam estimator \ref{algorithm:also_estimator} then distance between them such as folowing:
    \begin{align}
        \| g_t - m_t\|^2 \le \beta_1^2 \cdot G_t,
    \end{align}
    where $\beta_1$ is parameter in Adam and $G_t = 2\left(\|g_t\|^2 + (1 - \beta_1) \sum_{k=0}^{t-1} \beta_1^{t-k} \|g_k\|^2\right) $.
\end{lemma}
\begin{proof}
    \begin{align*}
        \| g_t - m_t\|^2 = \| g_t - (1-\beta_1)g_t - \beta_1 m_{t-1}\|^2 &= \beta_1^2\| g_t - m_{t-1}\|^2 \\
        &\le 2\beta_1^2 \left( \|g_t\|^2 + \| m_{t-1}\|^2\right) 
    \end{align*}

    We know that recursion on momentum $m_t$ is revealed in the following:
    \begin{align*}
        m_{t-1} = (1 - \beta_1)g_{t-1} + m_{t-2} = (1-\beta_1)\sum_{k=0}^{t-1} \beta_1^{t-k}g_k
    \end{align*}

    Using convexity of $\| \cdot \|^2$ we have:
    \begin{align*}
        \|m_{t-1}\|^2 = \|(1-\beta_1)\sum_{k=0}^{t-1} \beta_1^{t-k}g_k\|^2 &\le (1 - \beta_1)^2\frac{1}{1 - \beta_1^{t}} \sum_{k=0}^{t-1} \beta_1^{t-k} \|g_k\|^2 \\ 
        &\le (1 - \beta_1) \sum_{k=0}^{t-1} \beta_1^{t-k} \|g_k\|^2
    \end{align*}
\end{proof}

Now we can move on to the main theorem.


\subsection{Main lemmas and theorem }

\subsubsection{Main lemma}
\begin{lemma}[Stochastic distance recursion]\label{lemma:distance_stoch}
    Consider the problem \eqref{eq:pinn_saddle} under Assumptions \ref{ass:theta}, \ref{ass:p}, and \ref{as:stoch_grad}. 
    Let $g_t=\nabla_\pi \mathcal{L}(\theta^t,\pi^t;\zeta_t)$ be the stochastic gradient computed using a mini-batch of size $B$, and let $\xi_t:=g_t-\nabla_\pi \mathcal L(\theta^t,\pi^t)$ be the noise term. 
    Then, Algorithm \ref{algorithm:also_estimator} with tuning
    \[
        \gamma_{\pi}=\frac{\lambda}{8L^2}, 
        \qquad 
        \gamma_\theta \le \frac{c_m b_0}{1048\,L\,\kappa^4},
    \]
    produces a sequence $\{(\theta^t,\pi^t)\}_{t=1}^T$ such that
    \begin{align*}
        \mathbb{E}[D_{\psi}(\pi^*(\theta^{t+1}),\pi^{t+1})] 
        &\leq \Big(1-\tfrac{1}{128\kappa^2}\Big)\,\mathbb{E}[D_{\psi}(\pi^*(\theta^t),\pi^t)] \\
        &\quad + \gamma_\theta^2\,C_{\Phi}\,\mathbb{E}\|\nabla\Phi(\theta^t)\|^2 
        + \gamma_\theta^2\,C_{B}\,\tfrac{\sigma^2}{B} 
        + \gamma_\theta^2\,\beta_1^2 C_{\beta},
    \end{align*}
    where the constants are
    \[
    C_{\Phi} = \frac{2080\,\kappa^6}{c_m^2 b_0^2}, 
    \qquad 
    C_{B} = \frac{1040\,\kappa^6}{c_m^2 b_0^2} + \frac{\lambda^2}{32L^4}, 
    \qquad 
    C_{\beta} = \frac{8320\,\kappa^6}{c_m^2 b_0^2}\Big(K^2+\tfrac{\sigma^2}{B}\Big).
    \]
\end{lemma}
    \begin{proof}
        To begin, we use three-point identity:
        \begin{align}\label{eq:three_point}
        \begin{split}
            D_{\psi}(\pi^*(\theta^{t+1}),\pi^{t+1})=&D_{\psi}(\pi^*(\theta^{t+1}),\pi^*(\theta^{t}))+D_{\psi}(\pi^*(\theta^{t}),\pi^{t+1})\\&+\langle 
    \nabla\psi(\pi^*(\theta^{t}))-\nabla\psi(\pi^{t+1}),\pi^*(\theta^{t+1})-\pi^*(\theta^{t}) \rangle.
        \end{split}
        \end{align}
        Further, we write the optimality condition for the stochastic mirror-ascent step:
        \begin{align*}
            \left\langle -\gamma_{\pi}g_t+[\nabla\psi(\pi^{t+1})-\nabla\psi(\pi^t)],\pi^*(\theta^t)-\pi^{t+1} \right\rangle\geq0.
        \end{align*}
        Applying \eqref{eq:three_point}, we obtain
        \begin{align*}
            -\gamma_{\pi}\left\langle g_t,\pi^*(\theta^t)-\pi^{t+1}\right\rangle+D_{\psi}(\pi^*(\theta^t),\pi^t)-D_{\psi}(\pi^*(\theta^t),\pi^{t+1})-D_{\psi}(\pi^{t+1},\pi^t)\geq0.
        \end{align*}
        Substituting $g_t = \nabla_\pi\mathcal{L}(\theta^t,\pi^t) + \xi_t$, we get:
        \begin{align*}
            -\gamma_{\pi}\left\langle \nabla_\pi\mathcal{L}(\theta^t,\pi^t),\pi^*(\theta^t)-\pi^{t+1}\right\rangle -\gamma_{\pi}\left\langle \xi_t,\pi^*(\theta^t)-\pi^{t+1}\right\rangle+D_{\psi}(\pi^*(\theta^t),\pi^t)-D_{\psi}(\pi^*(\theta^t),\pi^{t+1})-D_{\psi}(\pi^{t+1},\pi^t)\geq0.
        \end{align*}
        After re-arranging the terms, we get
        \begin{align}\label{eq:2_stoch}
            D_{\psi}(\pi^*(\theta^t),\pi^{t+1})\leq D_{\psi}(\pi^*(\theta^t),\pi^t)-D_{\psi}(\pi^{t+1},\pi^t)-\gamma_{\pi}\left\langle\nabla_{\pi}\mathcal{L}(\theta^t,\pi^t),\pi^*(\theta^t)-\pi^{t+1}\right\rangle -\gamma_{\pi}\left\langle \xi_t,\pi^*(\theta^t)-\pi^{t+1}\right\rangle.
        \end{align}
        Since $\pi^*(\theta^t)$ is the exact maximum of $\mathcal{L}(\theta^t,\pi)$ in $\pi$, there is another optimility condition
        \begin{align*}
            \gamma_{\pi}\left\langle \nabla_{\pi}\mathcal{L}(\theta^t,\pi^*(\theta^t)),\pi^*(\theta^t)-\pi \right\rangle\geq0.
        \end{align*}
        Substituting $\pi=\pi^{t+1}$ and summing it with \eqref{eq:2_stoch}, we derive
        \begin{align*}
            D_{\psi}(\pi^*(\theta^t),\pi^{t+1})\leq& D_{\psi}(\pi^*(\theta^t),\pi^t)-D_{\psi}(\pi^{t+1},\pi^t)\\&+\gamma_{\pi}\left\langle\nabla_{\pi}\mathcal{L}(\theta^t,\pi^*(\theta^t))-\nabla_{\pi}\mathcal{L}(\theta^t,\pi^t),\pi^*(\theta^t)-\pi^{t+1}\right\rangle -\gamma_{\pi}\left\langle \xi_t,\pi^*(\theta^t)-\pi^{t+1}\right\rangle\\\leq&D_{\psi}(\pi^*(\theta^t),\pi^t)-D_{\psi}(\pi^{t+1},\pi^t)\\&+\gamma_{\pi}\left\langle\nabla_{\pi}\mathcal{L}(\theta^t,\pi^*(\theta^t))-\nabla_{\pi}\mathcal{L}(\theta^t,\pi^t),\pi^*(\theta^t)-\pi^{t}\right\rangle\\&+\gamma_{\pi}\left\langle\nabla_{\pi}\mathcal{L}(\theta^t,\pi^*(\theta^t))-\nabla_{\pi}\mathcal{L}(\theta^t,\pi^t),\pi^t-\pi^{t+1}\right\rangle -\gamma_{\pi}\left\langle \xi_t,\pi^*(\theta^t)-\pi^{t}\right\rangle -\gamma_{\pi}\left\langle \xi_t,\pi^t-\pi^{t+1}\right\rangle.
        \end{align*}
        Now, we are going to utilize the strong concavity of $\mathcal{L}(\theta,\pi)$ in $\pi$:
        \begin{align*}
            \gamma_{\pi}\left\langle\nabla_{\pi}\mathcal{L}(\theta^t,\pi^*(\theta^t))-\nabla_{\pi}\mathcal{L}(\theta^t,\pi^t),\pi^*(\theta^t)-\pi^{t}\right\rangle\leq\frac{-\gamma_{\pi}\lambda}{2}D_{\psi}(\pi^*(\theta^t),\pi^t).
        \end{align*}
        Thus, we have
        \begin{align*}
            D_{\psi}(\pi^*(\theta^t),\pi^{t+1})\leq&\left(1-\frac{\gamma_{\pi}\lambda}{2}\right)D_{\psi}(\pi^*(\theta^t),\pi^t)-D_{\psi}(\pi^{t+1},\pi^t)\\&+\gamma_{\pi}\left\langle\nabla_{\pi}\mathcal{L}(\theta^t,\pi^*(\theta^t))-\nabla_{\pi}\mathcal{L}(\theta^t,\pi^t),\pi^t-\pi^{t+1}\right\rangle -\gamma_{\pi}\left\langle \xi_t,\pi^*(\theta^t)-\pi^{t+1}\right\rangle .
        \end{align*}
        Next, we apply Cauchy-Schwartz inequality to the scalar product and obtain
        \begin{align*}
            D_{\psi}(\pi^*(\theta^t),\pi^{t+1})\leq\;&\left(1-\frac{\gamma_{\pi}\lambda}{2}\right)D_{\psi}(\pi^*(\theta^t),\pi^t)-D_{\psi}(\pi^{t+1},\pi^t)\\
            &+\frac{\gamma_{\pi}\alpha}{2}\|\nabla_{\pi}\mathcal{L}(\theta^t,\pi^*(\theta^t))-\nabla_{\pi}\mathcal{L}(\theta^t,\pi^t)\|^2
            +\frac{\gamma_{\pi}}{2\alpha}\|\pi^t-\pi^{t+1}\|^2 \\
            &-\gamma_{\pi}\left\langle \xi_t,\pi^*(\theta^t)-\pi^{t}\right\rangle
            \;-\gamma_{\pi}\left\langle \xi_t,\pi^t-\pi^{t+1}\right\rangle.
        \end{align*}
        For the stochastic noise terms, we apply Young's inequality in Bregman geometry:
        \begin{align*}
            -\gamma_{\pi}\left\langle \xi_t,\pi^t-\pi^{t+1}\right\rangle 
            \;\leq\; \gamma_{\pi}^2\|\xi_t\|_*^2 + \tfrac{1}{2}D_{\psi}(\pi^{t+1},\pi^t).
        \end{align*}
        Using $L$-smoothness of $\mathcal{L}$ (see Assumption \ref{ass:theta}) and $\psi$ is $1$-strongly convex (see Assumption \ref{ass:p}), we obtain
        \begin{align*}
            D_{\psi}(\pi^*(\theta^t),\pi^{t+1})\leq\;&\left(1-\frac{\gamma_{\pi}\lambda}{2}\right)D_{\psi}(\pi^*(\theta^t),\pi^t)-D_{\psi}(\pi^{t+1},\pi^t) + \frac{1}{2}D_{\psi}(\pi^{t+1},\pi^t) \\
            &+\gamma_{\pi}\alpha L^2 D_{\psi}(\pi^*(\theta^t),\pi^t) + \frac{\gamma_{\pi}}{\alpha} D_{\psi}( \pi^{t+1},\pi^t) \\
            &-\gamma_{\pi}\left\langle \xi_t,\pi^*(\theta^t)-\pi^{t}\right\rangle
            \;+\;\gamma_{\pi}^2\|\xi_t\|_*^2.
        \end{align*}

        Choose $\alpha=2\gamma_{\pi}$. Substituting this into the previous inequality and reducing terms $D_{\psi}(\pi^{t+1},\pi^t)$, we get
        \begin{align*}
            D_{\psi}(\pi^*(\theta^t),\pi^{t+1})\leq\;&\left(1-\frac{\gamma_{\pi}\lambda}{2}\right)D_{\psi}(\pi^*(\theta^t),\pi^t) \\
            &+2\gamma_{\pi}^2 L^2 D_{\psi}(\pi^*(\theta^t),\pi^t)\\
            &-\gamma_{\pi}\left\langle \xi_t,\pi^*(\theta^t)-\pi^{t}\right\rangle
            \;+\;\gamma_{\pi}^2\|\xi_t\|_*^2.
        \end{align*}

        Taking conditional expectation $\mathbb{E}[\cdot \mid \mathcal{F}_t]$ and using
        $\mathbb{E}[\langle \xi_t,\pi^*(\theta^t)-\pi^t\rangle \mid \mathcal{F}_t]=0$, we obtain
        \begin{align}
            \mathbb{E}\!\left[D_{\psi}(\pi^*(\theta^{t}),\pi^{t+1}) \mid \mathcal{F}_t\right]
            \leq\;&\left(1-\frac{\gamma_{\pi}\lambda}{2}+2\gamma_{\pi}^2L^2\right)
            D_{\psi}(\pi^*(\theta^t),\pi^t) \;+\;\gamma_{\pi}^2\,\frac{\sigma^2}{B}.
        \end{align}

        The stepsize that minimizes the quadratic factor is
        \[
            \gamma_{\pi} = \frac{\lambda}{8L^2}.
        \]
        
        Substituting this choice and applying full expectation yields
        \begin{align}
            \label{eq:4_stoch}
            \mathbb{E}\!\left[D_{\psi}(\pi^*(\theta^{t}),\pi^{t+1}) \right]
            \leq
            \left(1-\frac{1}{32\kappa^2}\right) \mathbb{E}\!\left[D_{\psi}(\pi^*(\theta^t),\pi^t)\right]
            + \frac{\lambda^2}{64L^4}\,\frac{\sigma^2}{B},
        \end{align}
        where $\kappa = \tfrac{L}{\lambda}$ is the condition number.

        Let us return to \eqref{eq:three_point}. Note that
        \begin{align*}
            \nabla\psi(\pi^*(\theta^t))-\nabla\psi(\pi^{t+1})=\frac{1}{\lambda}\left(\nabla_{\pi}\mathcal{L}(\theta^t,\pi^{t+1})-\nabla_{\pi}\mathcal{L}(\theta^t,\pi^*(\theta^t))\right).
        \end{align*}
        Thus, there is
        \begin{align*}
                D_{\psi}(\pi^*(\theta^{t+1}),\pi^{t+1})=&D_{\psi}(\pi^*(\theta^{t+1}),\pi^*(\theta^{t}))+D_{\psi}(\pi^*(\theta^{t}),\pi^{t+1})\\&+\frac{1}{\lambda}\langle 
        \nabla_{\pi}\mathcal{L}(\theta^t,\pi^{t+1})-\nabla_{\pi}\mathcal{L}(\theta^t,\pi^*(\theta^t)),\pi^*(\theta^{t+1})-\pi^*(\theta^{t}) \rangle\\\leq&D_{\psi}(\pi^*(\theta^{t+1}),\pi^*(\theta^{t}))+D_{\psi}(\pi^*(\theta^{t}),\pi^{t+1})\\&+\frac{\alpha L^2}{\lambda}D_{\psi}(\pi^*(\theta^t),\pi^{t+1})+\frac{1}{\lambda\alpha}D_{\psi}(\pi^*({\theta^{t+1}}),\pi^*(\theta^t)).
        \end{align*}
        Let us choose $\alpha=\nicefrac{\lambda^3}{64L^4}$. With such a choice and using fact that $\kappa \ge 1$, we have
        \begin{align*}
            D_{\psi}(\pi^*(\theta^{t+1}),\pi^{t+1})\leq65\kappa^4D_{\psi}(\pi^*({\theta^{t+1}}),\pi^*(\theta^t))+\left(1+\frac{1}{64\kappa^2}\right)D_{\psi}(\pi^*(\theta^t),\pi^{t+1}).
        \end{align*}
        To deal with $D_{\psi}(\pi^*(\theta^t),\pi^{t+1})$, we utilize \eqref{eq:4_stoch}. Using $(1 + \frac{1}{64\kappa^2})(1 - \frac{1}{32\kappa^2}) \le 1 - \frac{1}{64\kappa^2}$ and $1 + \frac{1}{64\kappa^2} \le 2$ we obtain
        \begin{align}\label{eq:7_stoch}
            \mathbb{E}\!\left[D_{\psi}(\pi^*(\theta^{t+1}),\pi^{t+1})\right]\leq65\kappa^4\mathbb{E}\!\left[D_{\psi}(\pi^*({\theta^{t+1}}),\pi^*(\theta^t)) \right]+\left(1-\frac{1}{64\kappa^2}\right)\mathbb{E}\!\left[D_{\psi}(\pi^*(\theta^t),\pi^{t})\right] + \frac{\lambda^2}{32L^4}\,\frac{\sigma^2}{B}.
        \end{align}
        The remaining task is to prove that the descent step does not dramatically change the distance between the optimal values of weights.
        Let us write down two optimality conditions:
        \begin{align*}
            &\langle \nabla_{\pi}\mathcal{L}(\theta^t,\pi^*(\theta^t)),\pi-\pi^*(\theta^t) \rangle\leq0,\\
            &\langle \nabla_{\pi}\mathcal{L}(\theta^{t+1},\pi^*(\theta^{t+1})),\pi-\pi^*(\theta^{t+1}) \rangle\leq0.
        \end{align*}
        Let us substitute $\pi=\pi^*(\theta^{t+1})$ into the first inequality and $\pi=\pi^*(\theta^{t})$ into the second one. When summing them up, we have
        \begin{align}\label{eq:5_stoch}
            \langle \nabla_{\pi}\mathcal{L}(\theta^t,\pi^*(\theta^t))-\nabla_{\pi}\mathcal{L}(\theta^{t+1},\pi^*(\theta^{t+1})), \pi^*(\theta^{t+1})-\pi^*(\theta^t) \rangle\leq0.
        \end{align}
        On the other hand, we can take advantage of the strong concavity of the objective (see Lemma \ref{eq:lemma_1}) and write
        \begin{align}\label{eq:6_stoch}
            &\langle \nabla_{\pi}\mathcal{L}(\theta^t,\pi^*(\theta^{t+1}))-\nabla_{\pi}\mathcal{L}(\theta^t,\pi^*(\theta^t)),\pi^*(\theta^{t+1})-\pi^*(\theta^{t}) \rangle\\&\leq-\frac{\lambda}{2}\left[D_{\psi}(\pi^*(\theta^t),\pi^*(\theta^{t+1}))+D_{\psi}(\pi^*(\theta^{t+1}),\pi^*(\theta^t))\right].
        \end{align}
        Combining \eqref{eq:5_stoch} and \eqref{eq:6_stoch}, we obtain
        \begin{align*}
            \frac{\lambda^2}{4}\left[D_{\psi}(\pi^*(\theta^t),\pi^*(\theta^{t+1}))+D_{\psi}(\pi^*(\theta^{t+1}),\pi^*(\theta^t))\right]^2\leq L^2\|\pi^*(\theta^{t+1})-\pi^*(\theta^{t})\|^2\|\theta^{t+1}-\theta^t\|^2.
        \end{align*}
        Re-arranging the terms and substituting Adam estimator step, we derive
        \begin{align*}
            \left[D_{\psi}(\pi^*(\theta^t),\pi^*(\theta^{t+1}))+D_{\psi}(\pi^*(\theta^{t+1}),\pi^*(\theta^t))\right]\leq& 4\kappa^2\|\theta^{t+1}-\theta^t\|^2 \equiv 4\gamma_{\theta}^2\kappa^2\left\| d_\theta^t \right\|^2.
        \end{align*}
        After simplifying, we have
        \begin{align*}
            D_{\psi}(\pi^*(\theta^{t+1}),\pi^*(\theta^t))\leq&4\gamma_{\theta}^2\kappa^2 \left\|d_\theta^t\right\|^2.
        \end{align*}

        Using lemma~\ref{lemma:momentum_upper_bound} and lemma~\ref{properties:adam-estimator}:
        \begin{align} \label{eq:adam_step_upper_bound}
            \| d_\theta^t \|^2 = \| \frac{m_\theta^t}{b_t} \|^2 \le \frac{1}{c_m^2b_0^2} \| m_\theta^t\|^2 &\le \frac{4}{c_m^2b_0^2} \left( \|g_\theta^t - m_\theta^t \|^2 + \| \nabla_\theta \mathcal{L}(\theta^t, \pi^t)\|^2 + \|\xi_t\|^2\right) \\ 
            &\le \frac{4}{c_m^2b_0^2} \left( \beta_1^2 \cdot G_t + \| \nabla_\theta \mathcal{L}(\theta^t, \pi^t)\|^2 + \|\xi_t\|^2\right),
        \end{align}
        where $\xi_t = \nabla_\theta\mathcal{L}(\theta^t, \pi^t) - g_\theta^t$ is the stochastic gradient noise, $G_t = 2\left(\|g_\theta^t\|^2 + (1 - \beta_1) \sum_{k=0}^{t-1} \beta_1^{t-k} \|g_\theta^k\|^2\right)$.        

        Using $L$-smoothness of $\mathcal{L}$ (see Assumption \ref{ass:theta}) and $\psi$ is $1$-strongly convex (see Assumption \ref{ass:p}), we obtain
        \begin{align*}
            \| \nabla_\theta \mathcal{L}(\theta^t, \pi^t)\|^2 &\le 2 \left(\| \nabla\Phi(\theta^t)\|^2 + \| \nabla_\theta \mathcal{L}(\theta^t, \pi^t) - \nabla\Phi(\theta^t)\|^2\right) \\
            &\le 2\| \nabla\Phi(\theta^t)\|^2 + 4L^2D_{\psi}(\pi^*(\theta^t), \pi^t)    
        \end{align*}
        
        Applying expectation and using assumption~\ref{as:stoch_grad} we have:
        \begin{align} \label{eq:adam_step_upper_bound_expect}
            \mathbb{E}\| d_\theta^t \|^2 
            &\le \frac{4}{c_m^2b_0^2} \Big(
                \beta_1^2 \cdot \mathbb{E}[G_t] 
                + 2 \, \mathbb{E}\|\nabla\Phi(\theta^t)\|^2 
                + 4L^2 \, \mathbb{E}\big[D_{\psi}(\pi^*(\theta^t), \pi^t)\big] 
                + \frac{\sigma^2}{B}
            \Big).
        \end{align}
 
        Setting $\tau=0$ and using $K$-Lipschitzness~\ref{lem:K-lipschitz} of $\mathcal{L}$ and boundess of variance~\ref{as:stoch_grad}, we have
        \begin{align} \label{neq:G_estimation}
            \|g_\theta^k\|^2 \le 2K^2 + \tfrac{2\sigma^2}{B}
            \quad \Rightarrow \quad 
            \mathbb{E}[G_t] \le 8K^2 + \tfrac{8\sigma^2}{B}.
        \end{align}

        After substituting inequality~\ref{neq:G_estimation}
        into \ref{eq:adam_step_upper_bound_expect} we obtain   \begin{align}\label{eq:adam_step_upper_bound_expect_final}
            \mathbb{E}\| d_\theta^t \|^2 
            &= \frac{4}{c_m^2 b_0^2} \Big(
                \beta_1^2 \cdot 8(K^2 + \frac{\sigma^2}{B})
                + 2\,\mathbb{E}\|\nabla\Phi(\theta^t)\|^2 
                + 4L^2\,\mathbb{E}[D_{\psi}(\pi^*(\theta^t), \pi^t)] 
                + \frac{\sigma^2}{B}
            \Big).
        \end{align}
    
        Let us take an expectation and derive
        \begin{align*}
            \E D_{\psi}(\pi^*(\theta^{t+1}),\pi^*(\theta^t)) \leq \frac{16\gamma_{\theta}^2\kappa^2}{c_m^2 b_0^2} \Big( 8\beta_1^2\Big(K^2 + \tfrac{\sigma^2}{B}\Big) + 2\,\E\|\nabla\Phi(\theta^t)\|^2 + 4L^2\,\E[D_{\psi}(\pi^*(\theta^t), \pi^t)] + \tfrac{\sigma^2}{B} \Big).
        \end{align*}
        
        Substituting this into \eqref{eq:7_stoch} we have 
        \begin{align*}
            \mathbb{E}[D_{\psi}(\pi^*(\theta^{t+1}),\pi^{t+1})] &\leq \frac{1040\,\gamma_{\theta}^2\kappa^6}{c_m^2 b_0^2}\Big( 8\beta_1^2\!\big(K^2+\tfrac{\sigma^2}{B}\big) + 2\,\mathbb{E}\|\nabla\Phi(\theta^t)\|^2 + 4L^2\,\mathbb{E}[D_{\psi}(\pi^*(\theta^t),\pi^t)] + \tfrac{\sigma^2}{B} \Big) \\
            &+ \Big(1-\tfrac{1}{64\kappa^2}\Big)\mathbb{E}[D_{\psi}(\pi^*(\theta^t),\pi^t)] + \tfrac{\lambda^2}{32L^4}\,\tfrac{\sigma^2}{B}.
        \end{align*}

        Using $\gamma_\theta \le \tfrac{c_m b_0}{1048\,L\,\kappa^4}$ and substituting \eqref{eq:adam_step_upper_bound_expect_final} into \eqref{eq:7_stoch}, we have
        \begin{align*}
            \mathbb{E}\!\left[D_{\psi}(\pi^*(\theta^{t+1}),\pi^{t+1})\right] 
            &\le \Big(1-\tfrac{1}{128\kappa^2}\Big)\,\mathbb{E}\!\left[D_{\psi}(\pi^*(\theta^t),\pi^t)\right] \\
            &\quad + \frac{1040\,\gamma_{\theta}^2\kappa^6}{c_m^2 b_0^2}\Big( 8\beta_1^2\!\big(K^2+\tfrac{\sigma^2}{B}\big) + 2\,\mathbb{E}\|\nabla\Phi(\theta^t)\|^2 + \tfrac{\sigma^2}{B} \Big) \\
            &\quad + \tfrac{\lambda^2}{32L^4}\,\tfrac{\sigma^2}{B}.
        \end{align*}
        
        Collecting terms, we obtain
        \begin{align*}
            \mathbb{E}[D_{\psi}(\pi^*(\theta^{t+1}),\pi^{t+1})] 
            &\leq \Big(1-\tfrac{1}{128\kappa^2}\Big)\,\mathbb{E}[D_{\psi}(\pi^*(\theta^t),\pi^t)] \\
            &\quad + \gamma_\theta^2\,C_{\Phi}\,\mathbb{E}\|\nabla\Phi(\theta^t)\|^2 
            + \gamma_\theta^2\,C_{B}\,\tfrac{\sigma^2}{B} 
            + \gamma_\theta^2\,\beta_1^2 C_{\beta},
        \end{align*}
        where the constants are
        \[
        C_{\Phi} = \frac{2080\,\kappa^6}{c_m^2 b_0^2}, 
        \qquad 
        C_{B} = \frac{1040\,\kappa^6}{c_m^2 b_0^2} + \frac{\lambda^2}{32L^4}, 
        \qquad 
        C_{\beta} = \frac{8320\,\kappa^6}{c_m^2 b_0^2}\Big(K^2+\tfrac{\sigma^2}{B}\Big).
        \]
            
        This completes the proof of the stochastic version of the main lemma.
    \end{proof}



\subsubsection{Main theorem}

Now let us proceed to the convergence proof for Algorithm \ref{algorithm:also_optimistic}.
\begin{proof} \ref{also:convergence-new}
    One can note that $\Phi$ is $3\kappa L$-smooth. Indeed,
    \begin{align*}
        \|\nabla\Phi(\theta_1)-\nabla\Phi(\theta_2)\|^2=&\|\nabla_{\theta}\mathcal{L}(\theta_1,\pi^*(\theta_1))-\nabla_{\theta}\mathcal{L}(\theta_2,\pi^*(\theta_2))\|^2\\\leq&  L^2\left[\|\theta_1-\theta_2\|^2+2D_{\psi}(\pi^*(\theta_1),\pi^*(\theta_2))\right]\leq L^2\left(1+4\kappa^2\right)\|\theta_1-\theta_2\|^2\\\leq&9\kappa^2L^2\|\theta_1-\theta_2\|^2.
    \end{align*}
    Thus, we can write
    \begin{align*}
        \Phi(\theta^{t+1})\leq&\Phi(\theta^t)+\langle \nabla\Phi(\theta^t), \theta^{t+1}-\theta^t \rangle+3\kappa L\|\theta^{t+1}-\theta^t\|^2\\=&\Phi(\theta^t)-\gamma_{\theta}\left\langle \nabla\Phi(\theta^t),d_{\theta}^t \right\rangle+3\gamma_{\theta}^2\kappa L\left\|d_{\theta}^t\right\|^2
    \end{align*}

    Summing from $t=0$ to $T$ yields
    \begin{align*}
        \Phi(\theta^{T+1})
        &\leq \Phi(\theta^0) 
        - \gamma_{\theta}\sum_{t=0}^T \langle \nabla\Phi(\theta^t), d_{\theta}^t \rangle
        + 3\gamma_{\theta}^2 \kappa L \sum_{t=0}^T \|d_{\theta}^t\|^2.
    \end{align*}
    
    Applying lemma \ref{lemma:sums-estimation} with $a_t = -\left\langle \nabla\Phi(\theta^t),d_{\theta}^t \right\rangle$ we have:
    \begin{align*}
        \Phi(\theta^{T+1})
        &\leq \Phi(\theta^0) 
        + \gamma_{\theta}\sum_{k=0}^{T} C_k \xi_k
        + 3\gamma_{\theta}^2 \kappa L \sum_{k=0}^{T} (1 + A_k) \| d_{\theta}^k \|^2,
    \end{align*}
    where $\xi_k = -\langle \nabla \Phi(\theta^k), g_{\theta}^k \rangle$ and $g_\theta^k$ is the stochastic gradient in the Adam estimator ~\ref{algorithm:also_estimator}.
    
    By decomposing the stochastic gradient into the true gradient and the noise $g_\theta^k = \nabla_{\theta}\mathcal{L}(\theta^k,\pi^k) + \eta_k $, we have
    \begin{align*}
        \Phi(\theta^{T+1})
        &\leq \Phi(\theta^0) 
        - \gamma_{\theta}\sum_{k=0}^{T} C_k 
            \left\langle \nabla \Phi(\theta^k), \nabla_{\theta}\mathcal{L}(\theta^k,\pi^k)\right\rangle \\
        &\quad - \gamma_{\theta}\sum_{k=0}^{T} C_k 
            \left\langle \nabla \Phi(\theta^k), \eta_k \right\rangle
        + 3\gamma_{\theta}^2 \kappa L \sum_{k=0}^{T} (1 + A_k) \| d_{\theta}^k \|^2.
    \end{align*}
    
    
    
    
    Rearranging the terms and dividing by $\gamma_{\theta}$ yields
    \begin{align}
        \sum_{k=0}^{T} C_k 
            \left\langle \nabla \Phi(\theta^k), \nabla_{\theta}\mathcal{L}(\theta^k,\pi^k)\right\rangle
        \;\le\; &\;\frac{\Phi(\theta^0)-\Phi(\theta^{T+1})}{\gamma_{\theta}} - \sum_{k=0}^{T} C_k \left\langle \nabla \Phi(\theta^k), \eta_k \right\rangle
        + 3\gamma_{\theta}\kappa L \sum_{k=0}^{T-1} (1+A_k)\|d_{\theta}^k\|^2. \label{eq:pre-exp}
    \end{align}
    
    Let $\mathcal F_k$ denote the history of the main process up to time $k$, and let the coefficients $C_k=(1-\beta_1)\sum_{j=k}^T \beta_1^{\,j-k}/b_j$ be generated by an auxiliary (copy) sequence $\{b_j\}_{j\ge 0}$. Since $C_k$ depends only on future $\{b_j\}_{j\ge k}$ from the copy process, while $r_k\!:=\!\langle \nabla \Phi(\theta^k), \eta_k\rangle$ is generated by the main process at time $k$, we have the conditional independence of $C_k$ and $r_k$ with respect to $(\mathcal F_k,\text{copy})$. Using the unbiasedness $\mathbb E[\eta_k\mid \mathcal F_k]=0$, the tower property gives
    \[
        \mathbb E\!\left[C_k\, r_k\right]
        = \mathbb E\!\left[\,\mathbb E\!\left[C_k\, r_k \mid \mathcal F_k,\text{copy}\right]\right]
        = \mathbb E\!\left[\,\mathbb E[C_k\mid \mathcal F_k,\text{copy}]\,\mathbb E[r_k\mid \mathcal F_k]\right]=0.
    \]
    Taking conditional expectation of \eqref{eq:pre-exp} and then applying the tower property, we obtain
    \begin{align}
        \frac{1}{2}\sum_{k=0}^{T} \mathbb E\!\left[C_k 
            \left\langle \nabla \Phi(\theta^k), \nabla_{\theta}\mathcal{L}(\theta^k,\pi^k)\right\rangle\right]
        \;\le\;&\; \frac{\Phi(\theta^0)-\mathbb E\,\Phi(\theta^{T+1})}{\gamma_{\theta}}
        + 3\gamma_{\theta}\kappa L \sum_{k=0}^{T-1} \mathbb E\!\left[(1+A_k)\|d_{\theta}^k\|^2\right].
        \label{eq:exp-bound}
    \end{align}
    
    To separate the factors on the left, use conditional independence as above:
    \begin{align*}
        \mathbb E\!\left[C_k 
            \left\langle \nabla \Phi(\theta^k), \nabla_{\theta}\mathcal{L}(\theta^k,\pi^k)\right\rangle \mid \mathcal F_k,\text{copy}\right]
        = \mathbb E\!\left[C_k \mid \text{copy}\right]\cdot \left\langle \nabla \Phi(\theta^k), \nabla_{\theta}\mathcal{L}(\theta^k,\pi^k)\right\rangle.
    \end{align*}
    Hence
    \[
        \mathbb E\!\left[C_k 
            \left\langle \nabla \Phi(\theta^k), \nabla_{\theta}\mathcal{L}(\theta^k,\pi^k)\right\rangle\right]
        = \mathbb E\!\left[\mathbb E[C_k\mid \text{copy}]\,
            \left\langle \nabla \Phi(\theta^k), \nabla_{\theta}\mathcal{L}(\theta^k,\pi^k)\right\rangle\right].
    \]

    Let us get the bound of the scaling parameter $b_t$ in the Adam estimator ~\ref{algorithm:also_estimator}:
    \begin{align}
        \mathbb{E}\!\left[\|g_\theta^t\|^2 \,\middle|\, \theta^k_{\mathrm{copy}}, \pi^k_{\mathrm{copy}}\right] 
        &\;\le\; 2\,(K^2 + \frac{\sigma^2}{B}), \\[0.75em]
        \mathbb{E}\!\left[b_i \,\middle|\, \theta^k_{\mathrm{copy}}, \pi^k_{\mathrm{copy}}\right] 
        &\;\le\; \mathbb{E}\!\left[\sqrt{\,\beta_2 b_{i-1}^2 + (1-\beta_2)\|\tilde{g}_\theta^t\|^2\,} \,\middle|\, \theta^k_{\mathrm{copy}}, \pi^k_{\mathrm{copy}} \right] \notag \\
        &\;\le\; \mathbb{E}\!\left[\max\{b_{i-1}, \|\tilde{g}_\theta^t\|\} \,\middle|\, \theta^k_{\mathrm{copy}}, \pi^k_{\mathrm{copy}} \right] \notag \\
        &\;\le\; \max_i \sqrt{2K^2 + 2\frac{\sigma^2}{B}} \;=\; \sqrt{2K^2 + 2\frac{\sigma^2}{B}}.
    \end{align}
    
    Using , then
    \[
        \mathbb E[C_k\mid \theta^k_{\mathrm{copy}}, \pi^k_{\mathrm{copy}}]
        = (1-\beta_1)\sum_{j=k}^{T} \frac{\beta_1^{\,j-k}}{\mathbb E[b_j\mid \theta^k_{\mathrm{copy}}, \pi^k_{\mathrm{copy}}]}
        \;\ge\; (1-\beta_1)\min_{j\in\{0,\dots,T\}}\frac{1}{\mathbb E[b_j\mid \theta^k_{\mathrm{copy}}, \pi^k_{\mathrm{copy}}]}
        \;\ge\; \frac{1-\beta_1}{\sqrt{2K^2 + 2\frac{\sigma^2}{B}}}.
    \]
    Therefore,
    \begin{equation}
        \sum_{k=0}^{T} \mathbb E\!\left[C_k 
            \left\langle \nabla \Phi(\theta^k), \nabla_{\theta}\mathcal{L}(\theta^k,\pi^k)\right\rangle\right]
        \;\ge\; \frac{1-\beta_1}{\sqrt{2K^2 + 2\frac{\sigma^2}{B}}}\sum_{k=0}^{T} \mathbb E\!\left[ 
            \left\langle \nabla \Phi(\theta^k), \nabla_{\theta}\mathcal{L}(\theta^k,\pi^k)\right\rangle\right].
        \label{eq:C-lb}
    \end{equation}

    Combining \eqref{eq:exp-bound} and \eqref{eq:C-lb}, we arrive at
    \begin{align*}
        \frac{1-\beta_1}{2\sqrt{2K^2 + 2\frac{\sigma^2}{B}}}\sum_{k=0}^{T} \mathbb E\!\left[ 
            \left\langle \nabla \Phi(\theta^k), \nabla_{\theta}\mathcal{L}(\theta^k,\pi^k)\right\rangle\right]
        \;\le\;&\; \frac{\Phi(\theta^0)-\mathbb E\,\Phi(\theta^{T+1})}{\gamma_{\theta}}
        + 3\gamma_{\theta}\kappa L \sum_{k=0}^{T-1} \mathbb E\!\left[(1+A_k)\|d_{\theta}^k\|^2\right].
    \end{align*}

    Using ~\ref{eq:adam_step_upper_bound_expect_final} we have:
    \begin{align}\label{eq:adam_step_upper_bound_expect_final_final}
        \mathbb{E}\| d_\theta^t \|^2 
        &= \frac{4}{c_m^2 b_0^2} \Big(
            \beta_1^2 \cdot 8(K^2 + \frac{\sigma^2}{B})
            + 2\,\mathbb{E}\|\nabla\Phi(\theta^t)\|^2 
            + 4L^2\,\mathbb{E}[D_{\psi}(\pi^*(\theta^t), \pi^t)] 
            + \frac{\sigma^2}{B}
        \Big).
    \end{align}
    
    By definition of $A_k$:
    \begin{align*}
        \mathbb E A_t &\le \tfrac{\beta_1}{c_m b_0(1-\beta_1)} \sqrt{2K^2 + 2\frac{\sigma^2}{B}}, \\
        \mathbb E\!\left[(1+A_t)\|d_\theta^t\|^2\right]
        \;&\leq\;
        \Biggl(1 + \frac{\beta_1}{c_m b_0(1-\beta_1)}\sqrt{2K^2 + 2\frac{\sigma^2}{B}}\Biggr) \\
        &\cdot\frac{4}{c_m^2 b_0^2} \Big(
            \beta_1^2 \cdot 8(K^2 + \frac{\sigma^2}{B})
            + 2\,\mathbb{E}\|\nabla\Phi(\theta^t)\|^2 
            + 4L^2\,\mathbb{E}[D_{\psi}(\pi^*(\theta^t), \pi^t)] 
            + \frac{\sigma^2}{B}
        \Big).
    \end{align*}

    \begin{align*}
    C_A &:= \frac{\beta_1}{c_m b_0(1-\beta_1)}\sqrt{2K^2 + 2\frac{\sigma^2}{B}}, 
    \qquad 
    C_D := \frac{4}{c_m^2 b_0^2}.
    \end{align*}
    
    Then the auxiliary bounds read
    \begin{align*}
    \mathbb E A_t &\le C_A, \\
    \mathbb E\!\left[(1+A_t)\|d_\theta^t\|^2\right]
    &\le (1+C_A)\, C_D \Big(
        \beta_1^2 \cdot 8\!\Big(K^2 + \tfrac{\sigma^2}{B}\Big)
        + 2\,\mathbb{E}\|\nabla\Phi(\theta^t)\|^2 
        + 4L^2\,\mathbb{E}[D_{\psi}(\pi^*(\theta^t), \pi^t)] 
        + \tfrac{\sigma^2}{B}
    \Big).
    \end{align*}

    Substituting these inequalities into the main relation yields
    \begin{align*}
    &\frac{1-\beta_1}{2\sqrt{2K^2 + 2\frac{\sigma^2}{B}}}\sum_{k=0}^{T} 
    \mathbb E\!\left[ 
        \left\langle \nabla \Phi(\theta^k), \nabla_{\theta}\mathcal{L}(\theta^k,\pi^k)\right\rangle\right]
    \le \frac{\Phi(\theta^0)-\mathbb E\,\Phi(\theta^{T+1})}{\gamma_{\theta}}  \\
    &+ 3\gamma_{\theta}\kappa L \sum_{k=0}^{T-1} 
    (1+C_A)\, C_D \Big(
        \beta_1^2 \cdot 8\!\Big(K^2 + \tfrac{\sigma^2}{B}\Big) 
        + 2\,\mathbb{E}\|\nabla\Phi(\theta^k)\|^2 
        + 4L^2\,\mathbb{E}[D_{\psi}(\pi^*(\theta^k), \pi^k)] 
        + \tfrac{\sigma^2}{B}
    \Big).
    \end{align*}

    Applying Young’s inequality to the scalar product and by smoothness of $\mathcal{L}$ and the definition of $\pi^*(\theta^k)$:
    \begin{align*}
        \mathbb E\!\left[ 
        \left\langle \nabla \Phi(\theta^k), \nabla_{\theta}\mathcal{L}(\theta^k,\pi^k)\right\rangle\right] \ge \frac{1}{2}\mathbb E\!\left[ 
        \|\nabla \Phi(\theta^k) \|^2\right] - L^2\mathbb E\!\left[ D_{\psi}(\pi^*(\theta^k), \pi^k) \right]
    \end{align*}
    
    Therefore,
    \begin{align*}
        &\frac{1-\beta_1}{2\sqrt{2K^2 + 2\frac{\sigma^2}{B}}}\sum_{k=0}^{T} 
        \left(\frac{1}{2}\mathbb E\!\left[ 
        \|\nabla \Phi(\theta^k) \|^2\right] - L^2\mathbb E\!\left[ D_{\psi}(\pi^*(\theta^k), \pi^k) \right] \right) \le \frac{\Phi(\theta^0)-\mathbb E\,\Phi(\theta^{T+1})}{\gamma_{\theta}} \\ 
        &+ 3\gamma_{\theta}\kappa L \sum_{k=0}^{T-1} 
        (1+C_A)\, C_D \Big(
            \beta_1^2 \cdot 8\!\Big(K^2 + \tfrac{\sigma^2}{B}\Big)
            + 2\,\mathbb{E}\|\nabla\Phi(\theta^k)\|^2 
            + 4L^2\,\mathbb{E}[D_{\psi}(\pi^*(\theta^k), \pi^k)] 
            + \tfrac{\sigma^2}{B}
        \Big).
    \end{align*}

    Using 
    \[
        \gamma_\theta \;\le\; 
        \frac{1-\beta_1}{72\,\kappa L (1+C_A) C_D \,\sqrt{2K^2+2\sigma^2/B}},
    \]
    we have
    \begin{align*}
        &\frac{1-\beta_1}{2\sqrt{2K^2 + 2\frac{\sigma^2}{B}}}\sum_{k=0}^{T} 
        \tfrac{1}{3}\,\mathbb E\!\left[ 
        \|\nabla \Phi(\theta^k) \|^2\right] \le \frac{1-\beta_1}{2\sqrt{2K^2 + 2\frac{\sigma^2}{B}}}8L^2\,\sum_{k=0}^{T}\mathbb{E}[D_{\psi}(\pi^*(\theta^k), \pi^k)] \\
        &+  
        \frac{\Phi(\theta^0)-\mathbb E\,\Phi(\theta^{T+1})}{\gamma_{\theta}}
        + 3\gamma_{\theta}\kappa L \sum_{k=0}^{T-1} 
        (1+C_A)\, C_D \Big(
            \beta_1^2 \cdot 8\!\Big(K^2 + \tfrac{\sigma^2}{B}\Big) + \tfrac{\sigma^2}{B}
        \Big).
    \end{align*}    
    
    Simplifying our inequality we obtain:
    \begin{align*}
        \frac{1}{T+1}\sum_{k=0}^{T} 
        \mathbb{E}\!\left[\|\nabla \Phi(\theta^k)\|^2\right]
        &\;\le\;
        M_1 \,\frac{1}{T+1}\sum_{k=0}^{T}\mathbb{E}\!\left[D_{\psi}(\pi^*(\theta^k),\pi^k)\right] 
        + M_2 \,\frac{\Phi(\theta^0)-\mathbb{E}\,\Phi(\theta^{T+1})}{(T+1)\gamma_\theta}
        + M_3 \,\gamma_\theta ,
    \end{align*}
    where
    \begin{align*}
        M_1 &= 24L^2, \\[4pt]
        M_2 &= \frac{6\sqrt{2K^2 + 2\sigma^2/B}}{(1-\beta_1)}, \\[4pt]
        M_3 &= \frac{18\,\kappa L\sqrt{2K^2 + 2\sigma^2/B}}{(1-\beta_1)}\,
        (1+C_A)\,C_D\Big( 8\beta_1^2(K^2+\tfrac{\sigma^2}{B}) + \tfrac{\sigma^2}{B}\Big).
    \end{align*}


    Let us denote $\delta=1-\nicefrac{1}{128\kappa^2}$. Lemma \ref{lemma:distance_stoch} transforms into
    \begin{align*}
        \mathbb{E}[D_{\psi}(\pi^*(\theta^{t+1}),\pi^{t+1})] 
        &\leq \Big(1-\tfrac{1}{128\kappa^2}\Big)\,\mathbb{E}[D_{\psi}(\pi^*(\theta^t),\pi^t)] \\
        &\quad + \gamma_\theta^2\,C_{\Phi}\,\mathbb{E}\|\nabla\Phi(\theta^t)\|^2 
        + \gamma_\theta^2\,C_{B}\,\tfrac{\sigma^2}{B} 
        + \gamma_\theta^2\,\beta_1^2 C_{\beta},
    \end{align*}
    where the constants are
    \[
    C_{\Phi} = \frac{2080\,\kappa^6}{c_m^2 b_0^2}, 
    \qquad 
    C_{B} = \frac{1040\,\kappa^6}{c_m^2 b_0^2} + \frac{\lambda^2}{32L^4}, 
    \qquad 
    C_{\beta} = \frac{8320\,\kappa^6}{c_m^2 b_0^2}\Big(K^2+\tfrac{\sigma^2}{B}\Big).
    \]

    Hence, by unrolling the recursion, we obtain
    \begin{align*}
        \frac{1}{T+1}\sum_{t=0}^{T}\mathbb E\,D_{\psi}(\pi^*(\theta^t),\pi^t)
        \;\leq\;&\;\frac{1}{T+1}\cdot\frac{1}{1-\delta}\,D_{\psi}(\pi^*(\theta^0),\pi^0) \\
        &\;+\;\frac{1}{1-\delta}\left( \gamma_\theta^2\,C_{\Phi}\,\frac{1}{T+1}\sum_{t=0}^{T}\mathbb{E}\|\nabla\Phi(\theta^t)\|^2 
        + \gamma_\theta^2\,C_{B}\,\tfrac{\sigma^2}{B} 
        + \gamma_\theta^2\,\beta_1^2 C_{\beta}\right).
    \end{align*}

    Substituting the bound on the divergence into the main inequality, we obtain:
    \begin{align*}
        \frac{1}{T+1}\sum_{k=0}^{T} 
        \mathbb{E}\!\left[\|\nabla \Phi(\theta^k)\|^2\right]
        \;\le\;&
        M_1 \Bigg[
        \frac{1}{T+1}\cdot\frac{1}{1-\delta}\,D_{\psi}(\pi^*(\theta^0),\pi^0)  \\
        &\quad+\;\frac{1}{1-\delta}\Big(  \gamma_\theta^2\,C_{\Phi}\,\frac{1}{T+1}\sum_{t=0}^{T}\mathbb{E}\|\nabla\Phi(\theta^t)\|^2 
        + \gamma_\theta^2\,C_{B}\,\tfrac{\sigma^2}{B} 
        + \gamma_\theta^2\,\beta_1^2 C_{\beta}\Big)\Bigg] \\
        &+ M_2 \,\frac{\Phi(\theta^0)-\mathbb{E}\,\Phi(\theta^{T+1})}{(T+1)\gamma_\theta}
        + M_3 \,\gamma_\theta .
    \end{align*}

    Using $\gamma_\theta \;\le\; \sqrt{\tfrac{(1-\delta)}{2M_1 C_{\Phi}}}$ we obtain
    \begin{align*}
        \frac{1}{T+1}\sum_{k=0}^{T} 
            \mathbb{E}\!\left[\|\nabla \Phi(\theta^k)\|^2\right]
        \;\le\;&
        2M_1 \Bigg[
            \frac{1}{T+1}\cdot\frac{1}{1-\delta}\,D_{\psi}(\pi^*(\theta^0),\pi^0)  \\
            &\quad+\;\frac{1}{1-\delta}\Big( 
                \gamma_\theta^2\,C_{B}\,\tfrac{\sigma^2}{B} 
                + \gamma_\theta^2\,\beta_1^2 C_{\beta}\Big)\Bigg] \\
            &+ 2M_2 \,\frac{\Phi(\theta^0)-\mathbb{E}\,\Phi(\theta^{T+1})}{(T+1)\gamma_\theta}
            + 2M_3 \,\gamma_\theta .
    \end{align*}

    Then, for step size 
    \[
        \gamma_\theta \;=\; \min\{\gamma_1,\gamma_2,\gamma_3\},
    \]
    the averaged iterate satisfies
    \begin{align} \label{eq:compact-criterion}
        \E\|\nabla \Phi(\hat\theta_T)\|^2 
        \;\le\;&
        \frac{A_1}{\gamma_\theta (T+1)} \,\Delta_\Phi
        \;+\; \gamma_\theta A_2 \frac{\sigma^2}{B}
        \;+\; \frac{A_3}{T+1} D_0
        \;+\; \beta_1^2 A_4,
    \end{align}
    where the constants are
    \begin{align*}
        A_1 &= \frac{12\sqrt{2K^2+2\sigma^2/B}}{1-\beta_1}, \\[4pt]
        A_2 &= \frac{48L^2}{1-\delta}\,C_B 
        + \frac{36\kappa L\sqrt{2K^2+2\sigma^2/B}}{1-\beta_1}\,(1+C_A)C_D, \\[4pt]
        A_3 &= \frac{48L^2}{1-\delta}, \\[4pt]
        A_4 &= \frac{288\kappa L\sqrt{2K^2+2\sigma^2/B}}{1-\beta_1}\,(1+C_A)C_D\,(K^2+\tfrac{\sigma^2}{B}).
    \end{align*}
    Here
    \[
    C_A = \frac{\beta_1}{c_m b_0 (1-\beta_1)}\sqrt{2K^2+2\sigma^2/B}, 
    \qquad 
    C_D = \frac{4}{c_m^2 b_0^2},
    \]
    and
    \[
    \gamma_1 = \frac{1-\beta_1}{72\kappa L (1+C_A)C_D \sqrt{2K^2+2\sigma^2/B}}, 
    \quad 
    \gamma_2 = \frac{c_m b_0}{1048 L \kappa^4}, 
    \quad 
    \gamma_3 = \sqrt{\frac{1-\delta}{2M_1 C_\Phi}}.
    \]

    We require each term in \eqref{eq:compact-criterion} to be at most $\varepsilon^2/4$. 
    This gives
    \begin{itemize}
        \item[(i)] From the $\Delta_\Phi$-term and the $D_0$-term:
        \[
            T+1 \;\ge\;
            \max\!\Bigg\{
            \frac{4\Delta_\Phi}{\varepsilon^2}
            \max\!\Big(\tfrac{A_1}{\gamma_1}, \tfrac{A_1}{\gamma_2}, \tfrac{A_1}{\gamma_3}\Big),\;
            \frac{4A_3}{\varepsilon^2} D_0
            \Bigg\}.
        \]
    
        \item[(ii)] From the variance term:
        \[
            B \;\ge\;
            \frac{4\sigma^2}{\varepsilon^2}\;
            \min\!\Big(\gamma_1 A_2,\;\gamma_2 A_2,\;\gamma_3 A_2\Big).
        \]
    
        \item[(iii)] From the momentum term:
        \[
            \beta_1 \;\le\; \sqrt{\frac{\varepsilon^2}{4A_4}}.
        \]
    \end{itemize}

    Then substituting $\delta = 1 - \frac{1}{128 \kappa^2}$, $b_0 = L$, $c_m = \frac{1}{2}$ and with step size $\gamma_\theta = \mathcal{O}(1/\kappa^4)$ the averaged iterate satisfies
    \[
        \E\|\nabla \Phi(\hat\theta_T)\|^2 
        \;\le\;
        \frac{A_1}{\gamma_\theta (T+1)} \,\Delta_\Phi
        + \gamma_\theta A_2 \frac{\sigma^2}{B}
        + \frac{A_3}{T+1} D_0
        + \beta_1^2 A_4,
    \]
    where
    \[
    A_1 = \mathcal{O}(K+\sigma),\quad
    A_2 = \mathcal{O}(\kappa^4),\quad
    A_3 = \mathcal{O}(\kappa^2(K+\sigma)),\quad
    A_4 = \mathcal{O}(\kappa^4).
    \]
    
    \medskip
    Requiring each term in the bound to be at most $\varepsilon^2/4$ yields:
    
    \begin{itemize}
        \item[(i)] Number of iterations:
        \[
            T+1 \;\ge\;
            \max\!\Bigg\{
                \frac{\Delta_\Phi}{\varepsilon^2}\cdot \mathcal{O}\!\big(\kappa^4(K+\sigma)\big),\;
                \frac{D_0}{\varepsilon^2}\cdot \mathcal{O}\!\big(\kappa^2(K+\sigma)\big)
            \Bigg\}.
        \]
    
        \item[(ii)] Batch size:
        \[
            B \;\ge\; \frac{\sigma^2}{\varepsilon^2}\cdot \mathcal{O}(1).
        \]
    
        \item[(iii)] Momentum parameter:
        \[
            \beta_1 \;\le\; \frac{\varepsilon}{\mathcal{O}(\kappa^2)}.
        \]
    \end{itemize}
    
    This finishes the proof.
\end{proof}

\end{appendixpart}
\end{document}